\documentclass{article}

\usepackage{PRIMEarxiv}
\usepackage[utf8]{inputenc} 
\usepackage[T1]{fontenc}    
\usepackage{url}            
\usepackage{amsfonts}       
\usepackage{nicefrac}       
\usepackage{microtype}      
\usepackage{lipsum}
\usepackage{fancyhdr}       
\usepackage{graphicx}       
\graphicspath{{media/}}     

\usepackage{cite}           
\usepackage{amsmath,amssymb}
\usepackage{algorithmic}    
\usepackage{textcomp}
\usepackage{xcolor}      
\colorlet{red}{black} 

\def\BibTeX{{\rm B\kern-.05em{\sc i\kern-.025em b}\kern-.08em
    T\kern-.1667em\lower.7ex\hbox{E}\kern-.125emX}}

\usepackage{mathtools}
\usepackage{bm}             
\usepackage{amsthm}        
\usepackage{thmtools}

\usepackage{booktabs}       
\usepackage{array}
\usepackage{adjustbox}
\usepackage{multirow}
\usepackage{float}
\usepackage{grffile}
\usepackage{subcaption}

\usepackage{hyperref} 
\hypersetup{
    colorlinks=true,
    linkcolor=blue,
    citecolor=blue,
    urlcolor=blue
}

\usepackage[capitalize,noabbrev]{cleveref}

\usepackage[textsize=tiny]{todonotes}
\usepackage{tcolorbox}
\usepackage{enumitem}

\newif\ifcomments
\commentsfalse 
\newcommand{\shenyang}[1]{\ifcomments \textcolor{blue}{[Shenyang:\ #1]} \fi}
\newcommand{\yaoqing}[1]{\ifcomments \textcolor{orange}{[Yaoqing:\ #1]} \fi}

\newcommand{\tianyu}[1]{\ifcomments \textcolor{red}{[Tianyu:\ #1]} \fi}

\theoremstyle{plain}
\newtheorem{theorem}{Theorem}[section]

\newtheorem{lemma}[theorem]{Lemma}
\newtheorem{corollary}[theorem]{Corollary}

\theoremstyle{definition}

\newtheorem{assumption}[theorem]{Assumption}

\theoremstyle{remark}

\theoremstyle{definition}
\newtheorem{init}[theorem]{Initialization}

\newtheorem*{theorem-restated}{Theorem}
\newtheorem*{assumption-restated}{Assumption}
\newtheorem*{corollary-restated}{Corollary}
\newtheorem*{lemma-restated}{Lemma}

\title{Depth, Not Data:\\ An Analysis of Hessian Spectral Bifurcation}

\author{
  Shenyang Deng\thanks{Equal contribution} \\ 
  Department of Computer Science \\
  Dartmouth College \\
  Hanover, NH, USA \\
  \texttt{shenyang.deng.gr@dartmouth.edu} \\
  \And
  Boyao Liao\textsuperscript{*} \\ 
  Department of Mathematics \\
  University of Birmingham \\
  Birmingham, UK \\
  \texttt{bxl307@student.bham.ac.uk} \\
  \AND
  Zhuoli Ouyang\textsuperscript{*} \\ %
  Department of Computer Science \\
  Dartmouth College \\
  Hanover, NH, USA \\
  \texttt{Zhuoli.Ouyang@dartmouth.edu} \\
  \And
  Tianyu Pang \\
  Department of Computer Science \\
  Dartmouth College \\
  Hanover, NH, USA \\
  \texttt{tianyu.pang.gr@dartmouth.edu} \\
  \And
  Yaoqing Yang \\
  Department of Computer Science \\
  Dartmouth College \\
  Hanover, NH, USA \\
  \texttt{yaoqing.yang@dartmouth.edu} \\
}

\begin{document}
\maketitle

\begin{abstract}
The eigenvalue distribution of the Hessian matrix plays a crucial role in understanding the optimization landscape of deep neural networks. Prior work has attributed the well-documented ``bulk-and-spike'' spectral structure, where a few dominant eigenvalues are separated from a bulk of smaller ones, to the imbalance in the data covariance matrix. In this work, we challenge this view by demonstrating that such spectral \emph{Bifurcation} can arise purely from the network architecture, independent of data imbalance. 

Specifically, we analyze a deep linear network setup and prove that, even when the data covariance is perfectly balanced, the Hessian still exhibits a \emph{Bifurcation} eigenvalue structure: a dominant cluster and a bulk cluster. Crucially, we establish that the ratio between dominant and bulk eigenvalues scales linearly with the network depth. This reveals that the spectral gap is strongly affected by the network architecture rather than solely by data distribution. Our results suggest that both model architecture and data characteristics should be considered when designing optimization algorithms for deep networks. {\color{red} Our code is available at this \href{https://github.com/Dominator-Index/ISIT2026-Depth}{link}.}
\end{abstract}

\section{Introduction}
As the application scope of Deep Neural Networks (DNNs) continues to expand, there is a pressing need to improve optimization algorithms designed for these models. Achieving such algorithmic improvements requires a deeper theoretical understanding of the loss landscape of DNNs because the geometry of the loss landscape often strongly affects the effectiveness of optimization \cite{jin2017escape,yao2021adahessian,liu2024sophia,zhang2025adam} and the quality of the converged model (e.g., generalization performance) \cite{keskar2017large,li2018visualizing,garipov2018loss}.

In this context, the Hessian matrix has become a critical tool to characterize the curvature of the loss landscape. Extensive empirical research has demonstrated that the loss landscapes of various neural networks exhibit an ill-conditioned eigenvalue distribution structure \cite{sagun2017empirical,gur2018gradient,ghorbani2019investigation,yao2020pyhessian,cohen2021gradient,song2025does}. Specifically:
\shenyang{Although this part is highly overlapping with related work, I think I need this part to do the Intro narrative. And it seems that I have no space for related work about this and deeplinear. Do we need a related work in main text?(Notice that ISIT does not have a Appendix space.(But it offer an arxiv option) What do you think?}
\cite{sagun2017empirical} and \cite{gur2018gradient} observed that the gradient descent dynamics typically happen in a ``tiny subspace'' spanned by a few dominant Hessian eigenvectors, while the majority of directions are flat. \cite{ghorbani2019investigation} and \cite{yao2020pyhessian} provided a detailed spectral analysis, revealing that the Hessian spectrum is composed of massive ``bulk'' concentrated near zero and a few ``outlier'' eigenvalues (spikes). Furthermore, \cite{martin2021implicit} and \cite{hodgkinson2025models} argued that these spectral distributions are often heavy-tailed, suggesting a form of implicit self-regularization.


Complementing these empirical phenomena, theoretical work has offered explanations in specific settings. For example, \cite{pennington2017geometry} and \cite{louart2018random} use Random Matrix Theory (RMT) to build a direct link between the Hessian spectrum and the covariance matrix of the input data. A prevailing conclusion in these works \cite{pennington2018spectrum, papyan2020traces} is that the eigenvalue distribution of the Hessian $H_{L}=\nabla^2L(W)$ is heavily entangled with the distribution of the data covariance matrix. Consequently, the common perspective is that the ``imbalance'' observed in the Hessian spectrum—where a few large eigenvalues dominate—is primarily inherited from the intrinsic imbalance of the data covariance matrix.

This brings us to the core of our work. Following the context established by the aforementioned bulk-plus-spike literature, we classify the eigenvalues into a ``bulk space'' and a ``dominant spike space.'' Let $\lambda_1, \dots, \lambda_p$ be the non-zero eigenvalues of the Hessian $H_{L}$. We typically observe an index $k<p$ that separates the eigenvalues into two distinct clusters with a significant gap:
\begin{equation}
    \underbrace{\lambda_{1}, \dots, \lambda_k}_{\text{Dominant Spikes}} \in [ma, m(a+\delta)], \quad \underbrace{\lambda_{k+1}, \dots, \lambda_p}_{\text{Bulk}} \in [a, a+\delta],
\end{equation}
where $\delta\in (0,a)$ represents a fluctuation amplitude of eigenvalues in the corresponding eigensubspace, and $m \gg 1$ represents the magnitude of the spectral gap. In this work, we do not consider zero eigenvalues, as their corresponding eigenspaces contain no meaningful information.
\yaoqing{This is awkward. Why does the bulk start from $a$ instead of 0? Why does the right end point of the interval $\delta\in (0,a)$ coincide with $a$? I believe this should be explained in the main results. It would be better to be somewhat handwavy here. It seems quite arbitrary to say that "zero eigenvalues contain no meaningful information."}
While prior literature typically attributes this sharp \textbf{spectral bifurcation}---where the top $k$ eigenvalues separate from the bulk by a large magnitude $m$---to the imbalance of the data distribution, our work challenges this prevailing view by posing the following question:

\begin{center}
\begin{tcolorbox}[colback=blue!5!white,colframe=blue!75!black,title=Key Question]
\textbf{Q1:} Is the spectral bifurcation of $H_{L}$ (with $k$ spikes and gap $m$) exclusively attributable to the spectral imbalance of the input data covariance?
\end{tcolorbox}
\end{center}

\textbf{Our work demonstrates that an imbalance in the data covariance (or cross-covariance between inputs and outputs) spectrum is not a necessary condition for such spectral bifurcation.} Although we acknowledge that the Hessian in DNN optimization is indeed influenced by imbalanced data, we show that even when the data is perfectly balanced (i.e., the data is whitened such that the covariance and cross covariance are identity-like or uniform), the Hessian of the neural network optimization problem still generates a bifurcation spectral structure.

We show in a theoretical setting that there exists a $k$ such that the top $k$ eigenvalues are orders of magnitude ($m$ times) larger than the remaining $p-k$ non-zero eigenvalues. \yaoqing{If you write things like this, it sounds like you are going to explain what $m$ is. You can perhaps briefly say that $m$ is a quantity that scales linearly with the model depth.}
Crucially, the severity of this ill-condition (represented by $m$) is intrinsic to the model design and correlates with the network depth/architecture, rather than being solely dependent on data distribution.


To give a clearer intuition, we first visualize this conclusion via a numerical simulation {\color{red}within a feasible numerical range (since an overly ill-conditioned Hessian's eigenvalue, i.e., with excessively large $L$, is hard to compute numerically)}, as shown in Figure \ref{fig:hessian_sim}. We train a deep linear neural network to learn a simple linear operator mapping ($\mathbf{y} = \Phi \mathbf{x}$). We utilize a sufficient quantity of samples to ensure the data is whitened such that the cross-correlation matrix satisfies $\Sigma_{yx} = \mathbb{E}[\mathbf{y}\mathbf{x}^\top] = \Phi \simeq I_r$ (similar to the setting in \cite{arora2018a}, where $I_r$ is an $r \times r$ identity matrix, $\simeq$ is the 0 padding operator; the definition can be found in Section \ref{section_pre}). As illustrated, even when the eigenvalues of the data distribution are uniform, the Hessian of the network exhibits a clear stratification into two non-zero eigenvalue clusters and a cluster near zero.

Following the theoretical settings of \cite{arora2018a} and \cite{singh2021analytic} for deep linear networks, we provide a theoretical answer to this problem. We analyze why this imbalanced structure forms and identify the factors influencing it. Under the deep linear network setup following \cite{arora2018a} and \cite{singh2021analytic}, our theoretical contributions are summarized as follows:

\begin{itemize}
    \item \textbf{C1 (Spectral Bifurcation Independence):} We prove that even when both data covariance and cross-covariance are perfectly balanced (i.e., $\Sigma_{xx} \simeq I_{d_\ast}$ and $\Sigma_{yx} = UV^\top$ where $U, V$ are column-orthogonal matrices), the Hessian still exhibits a \emph{Bifurcation} structure among its non-zero eigenvalues: a dominant cluster and a bulk cluster. The remaining eigenvalues are zero and correspond to directions outside the data support. This shows that data imbalance is \textit{not} a necessary condition for spectral bifurcation.
    
    \item \textbf{C2 (Depth-Dependent Gap):} We establish that the spectral gap between the dominant and bulk clusters scales linearly with the network depth $L$. Specifically, the dominant eigenvalues are approximately $L$ times larger than the bulk eigenvalues, i.e., $m = \Theta(L)$ as illustrated in Figure \ref{fig:hessian_sim}. This reveals that the ill-conditioning severity is intrinsically determined by the model architecture rather than solely by data distribution.
\end{itemize}


    

\vspace{-5pt}

\begin{figure}[h]
    \centering
    \includegraphics[width=0.8\textwidth]{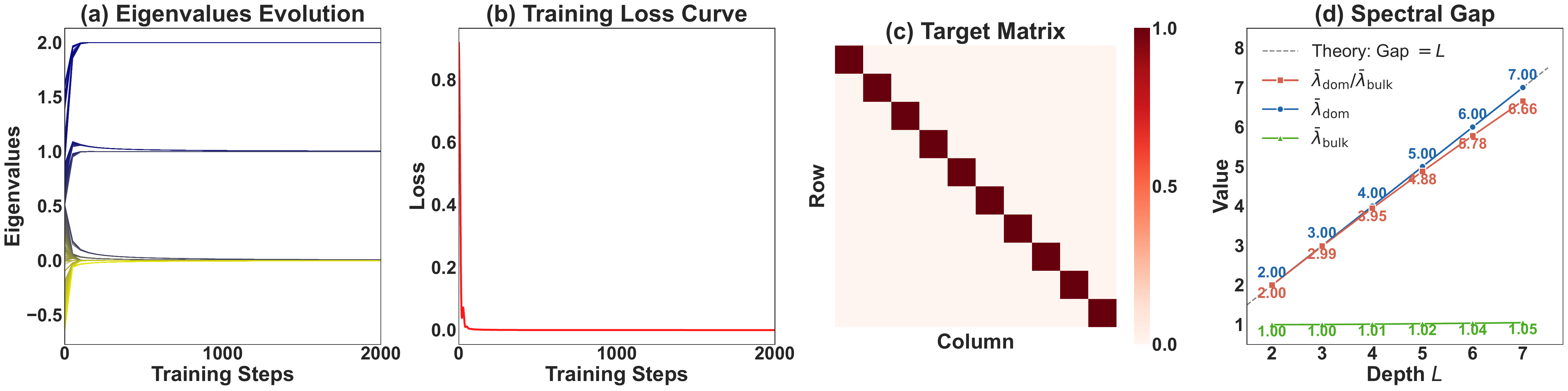}
    \caption{(a) Hessian eigenvalue evolution for $L=2$ on whitened data. (b) Training loss curve ($L=2$). (c) Target matrix (whitened data). (d) Spectral gap $\bar{\lambda}_{\rm dom}/\bar{\lambda}_{\rm bulk}$ across depths $L=2$--$7$, where $\bar{\lambda}_{\rm dom}$ and $\bar{\lambda}_{\rm bulk}$ denote the mean eigenvalues of the dominant and bulk spaces, respectively; the empirical gap closely matches the theoretical prediction $\text{Gap}=L$.}
    \label{fig:hessian_sim}
    \vspace{-10pt}
\end{figure}
\section{Preliminary}
\label{section_pre}
\noindent\textbf{Notation.}
For matrix products, we define $W^{k:l} = W^k \cdots W^l$ when $k > l$ and $W^{k:l} = (W^k)^\top \cdots (W^l)^\top$ when $k < l$. Additional operations include: $\otimes$ for Kronecker product, $\mathrm{vec}_r$ for row-wise matrix vectorization,\yaoqing{Is the vector a column?} $\mathrm{rk}(W)$ for matrix rank, and $I_k$ for the $k \times k$ identity matrix. Let matrix $A \in \mathbb{R}^{m \times n}$ and $B \in \mathbb{R}^{p \times q}$. To work with matrix-matrix derivatives in the row-wise vectorization form, we adopt the convention:
\begin{align*}
    \frac{\partial A}{\partial B} := \frac{\partial \,\mathrm{vec}_{r}(A)}{\partial \,\mathrm{vec}_{r}(B)^{\top}}.
\end{align*}
A useful identity for our analysis is:
\begin{align*}
    \frac{\partial (AWB)}{\partial W} = A \otimes B^{\top}.
\end{align*}
If $m \geq p$ and $n \geq q$, we define the padding operator $\simeq$ as:
$$
A \simeq B \quad \Leftrightarrow \quad A = \begin{bmatrix} B & \mathbf{0}_{p \times (n-q)} \\ \mathbf{0}_{(m-p) \times q} & \mathbf{0}_{(m-p) \times (n-q)} \end{bmatrix},
$$
where $\mathbf{0}_{i \times j}$ denotes an $i \times j$ zero matrix.\\
\addtolength{\topmargin}{0.1in}
\textbf{Deep Linear Neural Network.} Following the setup of \cite{arora2018a,singh2021analytic,bordelon2025deep}, we consider a depth-$L$ deep linear neural network defined by the composition $F(\mathbf{x}) = W^L \cdots W^1 \mathbf{x} = W^{L:1} \mathbf{x}$, where the weight matrix $W^l \in \mathbb{R}^{d_l \times d_{l-1}}$ parameterizes the $l$-th layer for $l = 1, \ldots, L$. The network has input dimension $d = d_0$, output dimension $K = d_L$, hidden layer widths $d_1, \ldots, d_{L-1}$, and total number of hidden neurons $D = \sum_{l=1}^{L-1} d_l$. We denote the total parameter set as $W := \{W^1, \ldots, W^L\}$ and use the explicit parameterization $F_W(\mathbf{x})=W^{L:1} \mathbf{x}$ when necessary.\\
\textbf{Loss Function.}
We consider a training dataset of i.i.d. samples $\{(\mathbf{x}_i, \mathbf{y}_i)\}_{i=1}^N$ from distribution $p$. We consider the squared loss $L_W(\mathbf{x}, \mathbf{y}) = \frac{1}{2} \|\mathbf{y} - \hat{\mathbf{y}}\|_2^2$ where $\hat{\mathbf{y}} = F_W(\mathbf{x})$.
The population loss is defined as $L(W) = \mathbb{E}_{(\mathbf{x},\mathbf{y}) \sim p}[L_W(\mathbf{x},\mathbf{y})]$. For brevity, specifically in subsequent derivations, we denote the expectation over the data distribution simply as $\mathbb{E}$. The Hessian of the population loss is denoted $H = \nabla^2 L(W)$. We define the residual at sample $(\mathbf{x}, \mathbf{y})$ as $\boldsymbol{\delta}_{\mathbf{x},\mathbf{y}} := \hat{\mathbf{y}} - \mathbf{y}$.

 Applying the above to the network output, we obtain:
\begin{align*}
    \frac{\partial F}{\partial W^{l}}
    = W^{L:l+1} \otimes \big(\mathbf{x}^{\top}W^{1:l-1}\big)
    \in \mathbb{R}^{d_L \times d_l d_{l-1}}.
\end{align*}

Using the chain rule (backpropagation), the gradient of the loss for a sample $(\mathbf{x}, \mathbf{y})$ with respect to $W^{l}$ is:
\begin{align*}
     \nabla_{W^l} L(W_t)
    &= \big[W^{l+1:L}\boldsymbol{\delta}_{\mathbf{x},\mathbf{y}}\big] \big[W^{l-1:1}\mathbf{x}\big]^{\top} \\
    &= W^{l+1:L}\big(W^{L:1}\mathbf{x}\mathbf{x}^{\top}-\mathbf{y}\mathbf{x}^{\top}\big)W^{1:l-1}.
\end{align*}
For subsequent analysis, we introduce the following shorthand notations for the expected quantities (where $\mathbb{E}$ denotes $\mathbb{E}_{(\mathbf{x}, \mathbf{y}) \sim \mu}$):
\begin{align*}
    &\Omega := \mathbb{E}[\boldsymbol{\delta}_{\mathbf{x},\mathbf{y}}\mathbf{x}^{\top}]
    = \mathbb{E}[W^{L:1}\mathbf{x}\mathbf{x}^{\top}] - \mathbb{E}[\mathbf{y}\mathbf{x}^{\top}], \\
    &\Sigma_{xx} := \mathbb{E}[\mathbf{x}\mathbf{x}^{\top}],\qquad
    \Sigma_{yx} := \mathbb{E}[\mathbf{y}\mathbf{x}^{\top}].
\end{align*}
\\
\textbf{Gauss-Newton Decomposition.}
Following the analytical technique in \cite{singh2021analytic,zhao2024theoretical}, we apply the Gauss-Newton Decomposition to analyze the Hessian of a deep linear neural network. This technique involves examining the Hessian through the lens of the chain rule, decomposing the total loss Hessian $H_L = H_o + H_f$, where:
\[
    H_o = \mathbb{E} \left[ \nabla_{W} F(\mathbf{x})^\top \left[ \frac{\partial^2 L_W}{\partial F ^2} \right] \nabla_{W} F(\mathbf{x}) \right],
\]
and
\[
    H_f = \mathbb{E} \left[ \sum_{c=1}^{d_L} \left[ \frac{\partial L_W}{\partial F_c} \right] \nabla^2_{W} F_c(\mathbf{x}) \right].
\]
In the case of Mean Squared Error (MSE) loss, this Gauss-Newton breakdown aligns precisely with the earlier analysis as shown in \cite{singh2021analytic}. Following convention, we call the initial component $H_o$ the \textit{outer-product Hessian} and the latter component $H_f$ the \textit{functional Hessian}.

\smallskip

\noindent\textbf{Balanced Initialization.}
We follow the balanced initialization procedure in \cite{arora2018a}: 

\begin{init}
\label{balance_init}
    Let $d_0, d_1, \ldots, d_L \in \mathbb{N}$ satisfy $\min\{d_1, \ldots, d_{L-1}\} \ge \min\{d_0, d_L\}$, and let $\mathcal{D}$ be a distribution over $d_L \times d_0$ matrices. A \textit{balanced initialization} of the weight matrices $W^l \in \mathbb{R}^{d_l \times d_{l-1}}$, for $l=1, \ldots, L$, is performed as follows:\\
\begin{enumerate}
    \item Sample a matrix $A \in \mathbb{R}^{d_L \times d_0}$ according to $\mathcal{D}$.
    \item Compute the singular value decomposition $A = U \Sigma_0 V^{\top}$, where $U \in \mathbb{R}^{d_L \times \min\{d_0, d_L\}}$ and $V \in \mathbb{R}^{d_0 \times \min\{d_0, d_L\}}$ have orthonormal columns, and $\Sigma_0 \in \mathbb{R}^{\min\{d_0, d_L\} \times \min\{d_0, d_L\}}$ is diagonal, containing the singular values of $A$.
    \item Set the weight matrices as $W^L \simeq U \Sigma_0^{1/L}, \, W^{L-1} \simeq \Sigma_0^{1/L}, \ldots, W^2 \simeq \Sigma_0^{1/L}, \, W^1 \simeq \Sigma_0^{1/L} V^{\top}$.
\end{enumerate}
\end{init}
\noindent \textbf{Gradient Descent Dynamics.}
We optimize the network parameters $W = \{W^1, \ldots, W^L\}$ using Gradient Descent (GD) with a constant learning rate $\eta > 0$. Following \cite{arora2018a}, we assume the data is whitened and consider the gradient update in the expectation sense. Substituting the gradient derivation from the previous section, the population gradient update takes the explicit form:
\begin{align}
    W_{t+1}^l &= W_t^l - \eta \cdot W_t^{l+1:L} \big( W_t^{L:1}\Sigma_{xx} - \Sigma_{yx} \big) W_t^{1:l-1} 
\end{align}
This update rule highlights that the evolution of each layer depends on the global error signal propagated through the adjacent layers.
\section{Hessian Bifurcation Theory}
\label{hessian_bifurcation}
\subsection{Theorem Assumptions}
 We have the following assumptions for our results.  However, note that not all assumptions are actually used in the {\color{red}every} theorem. Some are included solely to help the reader develop intuition for our results by considering a more concrete special case.

{\color{red}
\begin{assumption}[Sufficient Width \& Whitened {\color{red}{Balanced}} Input] \label{assumption 12}
    We assume the dimension of weight matrices satisfy: $d_\ast := \min\{d_L, d_0\} \leq \min\{d_{L-1}, \ldots, d_1\}$. The input data is whitened and balanced:
    \begin{equation}
        \Sigma_{xx} \simeq I_{d_\ast}.
    \end{equation}
\end{assumption}

\begin{assumption}[Alignment with Initialization] \label{assumption 4}
    The target matrix aligns with the principal directions of $A$ in \textbf{Initialization} \ref{balance_init}. With $U, V$ from the initialization and data rank $r$:
    \begin{equation}
        \Sigma_{yx} = U \mathcal{I}_r V^\top, \qquad \mathcal{I}_r \in \mathbb{R}^{d_\ast \times d_\ast} \simeq I_r.
    \end{equation}
\end{assumption}

\begin{assumption}[Uniform Spectral Initialization (USI)] \label{assumption 5}
    A special case considered later: the initial weight product has uniform singular values. With $\Sigma_0$ from \textbf{Initialization} \ref{balance_init},
    \begin{equation}
        \Sigma_0^{1/L} \simeq \mu I_r, \qquad \mu > 0,
    \end{equation}
    so the network acts as a scaled isometry at initialization.
\end{assumption}}
{\color{red} \noindent\textbf{Remark} The core purpose of the above assumptions is to construct an example in which the Hessian still exhibits bifurcation even when the data covariance matrix (or the target matrix) is balanced. This is not because we overlook the non-balanced case. Bifurcation arising from non-balanced data has already been extensively studied in prior work. Our goal is rather to show that non-balanced data is a sufficient but not necessary condition for ill-conditioning, and that model-intrinsic factors can also induce ill-conditioning.}
\subsection{Main Result}
To establish our main result on the spectral structure of the Hessian, we first need to understand how the weight matrices evolve during training. A key observation is that all weight matrices in a deep linear network share a common spectral structure throughout the optimization process. This property, which we formalize in the following lemma \ref{lemma shared spectral structure}, serves as the foundation for our subsequent analysis.
\begin{lemma}[Shared Spectral Structure]
\label{lemma shared spectral structure}
    Under Assumptions \ref{assumption 12} and \ref{assumption 4}, all weight matrices $W^k_t$ for $k = 1, \ldots, L$ share a same spectral structure $\Sigma_t^{1/L} = \text{diag}(\lambda_{1,t}, \lambda_{2,t}, \ldots, \lambda_{d_*,t})$ at any time $t \geq 0$.\yaoqing{where $\Sigma_t$ means the...} Specifically:
    \begin{enumerate}
        \item For $1 < k < L$: $W^k_t \simeq \Sigma_t^{1/L}$.
        \item For $k = 1$: $W^1_t \simeq \Sigma_t^{1/L} V^\top$.
        \item For $k = L$: $W^L_t \simeq U \Sigma_t^{1/L}$.
    \end{enumerate}
    where $U \in \mathbb{R}^{d_L \times d_*}$ and $V \in \mathbb{R}^{d_0 \times d_*}$ are the left and right singular vector matrices from the balanced initialization, which remain constant throughout training. Furthermore, the eigenvalues $\lambda_{i,t}$ for $i = 1, \ldots, r$ evolve according to:
    \begin{align}
        \lambda_{i,t+1} = \lambda_{i,t} - \eta \lambda_{i,t}^{2L-1} + \eta \lambda_{i,t}^{L-1},
    \end{align}
    while the eigenvalues $\lambda_{i,t}$ for $i = r+1, \ldots, d_*$ remain unchanged at their initial values.
\end{lemma}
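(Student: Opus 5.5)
The plan is induction on $t$. The base case $t=0$ is exactly Initialization \ref{balance_init}: $W^L_0\simeq U\Sigma_0^{1/L}$, $W^k_0\simeq\Sigma_0^{1/L}$ for $1<k<L$, and $W^1_0\simeq\Sigma_0^{1/L}V^\top$. So we set $\Sigma_0^{1/L}=\mathrm{diag}(\lambda_{1,0},\dots,\lambda_{d_*,0})$.

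For the inductive step, assume the three forms hold at time $t$ with a diagonal $\Sigma_t^{1/L}$. The first task is to compute the end-to-end product. By the width condition in Assumption \ref{assumption 12}, every padded block is a $d_*\times d_*$ diagonal matrix sitting in the top-left corner, so intermediate products of padded diagonals are again padded diagonals. Hence
\[
W_t^{L:1}=U\Sigma_t V^\top,\qquad \Sigma_t=\mathrm{diag}(\lambda_{i,t}^L).
\]
Next comes the residual. Using $\Sigma_{xx}\simeq I_{d_*}$ and Assumption \ref{assumption 4}, it is
\[
W_t^{L:1}\Sigma_{xx}-\Sigma_{yx}=U(\Sigma_t-\mathcal I_r)V^\top .
\]
Here we must check that $V^\top\Sigma_{xx}=V^\top$. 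I would interpret the whitening as $\Sigma_{xx}$ acting as the identity on the column space of $V$, which is natural since $V$ is padded into the first $d_*$ coordinates or, more generally, spans the data support. This identification is the delicate bookkeeping point, and I would make it explicit as the meaning of "$\simeq$" in the two assumptions.

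Then I substitute into the GD update. For $1<l<L$, the back-propagated factor is $W_t^{l+1:L}=(W^{l+1}_t)^\top\cdots(W^L_t)^\top$. This collapses to a padded version of $\Sigma_t^{(L-l)/L}U^\top$, using that transposes of padded diagonals are padded diagonals and that $U^\top U=I$. Similarly, $W_t^{1:l-1}$ collapses to $V\Sigma_t^{(l-1)/L}$, using $V^\top V=I$. The gradient for layer $l$ is therefore a padding of
\[
\Sigma_t^{(L-l)/L}(\Sigma_t-\mathcal I_r)\Sigma_t^{(l-1)/L}=\mathrm{diag}\bigl(\lambda_{i,t}^{L-1}(\lambda_{i,t}^L-[i\le r])\bigr).
\]
This is diagonal and independent of $l$. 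The edge layers work the same way. For $l=L$, the gradient is $U(\Sigma_t-\mathcal I_r)\Sigma_t^{(L-1)/L}$, so the update keeps the form $U\cdot\text{diag}$. For $l=1$, the gradient is $\Sigma_t^{(L-1)/L}(\Sigma_t-\mathcal I_r)V^\top$, so the update keeps the form $\text{diag}\cdot V^\top$. Here $U$ and $V$ factor out and remain unchanged.

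Since every layer receives the identical diagonal correction, all layers stay equal (balanced) at time $t+1$. The common diagonal entries update as
\[
\lambda_{i,t+1}=\lambda_{i,t}-\eta\lambda_{i,t}^{L-1}(\lambda_{i,t}^L-1)=\lambda_{i,t}-\eta\lambda_{i,t}^{2L-1}+\eta\lambda_{i,t}^{L-1}
\quad\text{for } i\le r .
\]
For $i>r$ the update is instead $\lambda_{i,t}-\eta\lambda_{i,t}^{2L-1}$.

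The main obstacle is the claim that $\lambda_{i,t}$ stays unchanged for $i>r$. The update above only leaves these entries fixed if $\lambda_{i,t}=0$. So I would verify that the intended setting has $\lambda_{i,0}=0$ for $i>r$: this holds under Assumption \ref{assumption 5}, where $\Sigma_0^{1/L}\simeq\mu I_r$, or more generally when $\mathrm{rk}(A)=r$. Zero is then a fixed point of the map, and the induction gives $\lambda_{i,t}=0=\lambda_{i,0}$ for all $t$. I would state this hypothesis explicitly in the proof, noting that the statement is otherwise false for nonzero tail entries. The remaining steps are routine index bookkeeping of the padded products.
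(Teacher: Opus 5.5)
Your argument is correct and has the same skeleton as the paper's proof. Both run induction from the balanced initialization, collapse the padded products to $U\Sigma_tV^\top$, obtain the residual $U(\Sigma_t-\mathcal{I}_r)V^\top$, and show the diagonal gradient is the same for every layer, so $U$ and $V$ factor out of the edge updates. Your middle-layer gradient also keeps the $\mathcal{I}_r$ that the appendix's restated proof drops.

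The real difference is the final clause, that the tail entries stay at their initial values. The paper does not get this from $\lambda_{i,0}=0$ for $i>r$. Instead it declares the directions $v_{r+1},\dots,v_{d_*}$ orthogonal to the support of $\Sigma_{xx}$; it writes $\Sigma_{xx}\simeq I_r$ at that point. Then $V^\top\Sigma_{xx}V=\mathrm{diag}(I_r,0)$, the residual becomes $U\bigl(\Sigma_t\,\mathrm{diag}(I_r,0)-\mathcal{I}_r\bigr)V^\top$, and the tail gradient vanishes for any tail initialization.

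That reading is incompatible with Assumption~\ref{assumption 12} as stated. It is also incompatible with the identity $V^\top\Sigma_{xx}=V^\top$, which both you and the paper's own Lemma~\ref{lemma 2} use. In fact Lemma~\ref{lemma 2} derives exactly your tail update $\lambda\mapsto\lambda-\eta\lambda^{2L-1}$, and Lemma~\ref{lemma eigenvalues convergence} then drives the tail to zero. So your diagnosis stands: under the stated assumptions, the ``unchanged'' clause needs an extra hypothesis.

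Your fix is to require a zero tail at initialization, e.g.\ via Assumption~\ref{assumption 5} or $\mathrm{rk}(A)=r$. This keeps the data assumption intact and matches how the lemma is used downstream, where $\Sigma_t^{1/L}=\mathrm{diag}(\lambda_{1,t},\dots,\lambda_{r,t},0,\dots,0)$ is assumed. The paper's implicit fix is a rank-$r$ input covariance aligned with $v_1,\dots,v_r$. That permits arbitrary tail initialization, but only by changing Assumption~\ref{assumption 12}.
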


\noindent \textbf{Explain.} Lemma \ref{lemma shared spectral structure} reveals that under \textbf{Initialization} \ref{balance_init}, the spectral structure of all weight matrices is governed by a single diagonal matrix $\Sigma_t^{1/L}$, whose diagonal entries $\lambda_{i,t}$ evolve according to a unified update rule. This shared spectral structure is crucial because it implies that the optimization dynamics can be characterized entirely by tracking the evolution of these $d_*$ eigenvalues. 

Building upon this spectral characterization, we are now ready to present our main theorem, which establishes the bifurcation structure of the Hessian eigenvalues. The theorem demonstrates that even when the data covariance is perfectly balanced (i.e., $\Sigma_{xx} \simeq I_{d_\ast}$), the Hessian of the loss function exhibits a pronounced two-cluster spectral structure, with a gap that scales linearly with the network depth $L$.
\begin{theorem}[Hessian Bifurcation]
\label{theorem main result}
    Under Assumptions \ref{assumption 12} and \ref{assumption 4}, and assuming $r \leq d_*$, consider a depth-$L$ deep linear neural network trained with gradient descent with step size $\eta < \min\big\{\frac{1}{L}, \frac{1}{M^{2L-2}}\big\}$. Let $\lambda_{i,t}$ for $i = 1, \ldots, r$ denote the effective eigenvalues of the weight matrix $\Sigma_t^{1/L}$ at time $t$. Suppose that $\lambda_{i,t} \in [m_t - \delta_t, m_t + \delta_t]$ and the condition $(m_t + \delta_t)/(m_t - \delta_t) < L^{\frac{1}{2(L-1)}}$ holds.\yaoqing{What is $m_t$ and $\delta_t$?} Then, the Hessian $H_{L,t}$ exhibits a two-cluster spectral structure:
    \begin{enumerate}
        \item \textbf{(Dominant Space)} There exist $r^2$ eigenvalues of $H_{L,t}$ lying in:
        \begin{equation}
        \begin{split}
            \bigg[ &L(m_t - \delta_t)^{2(L-1)} - O(e^{-L\alpha\eta t}), \\
                   &L(m_t + \delta_t)^{2(L-1)} + O(e^{-L\alpha\eta t}) \bigg].
        \end{split}
        \end{equation}
        
        \item \textbf{(Bulk Space)} There exist $(d_\ast + d_L - 2r)r$ eigenvalues of $H_{L,t}$ lying in:
        \begin{equation}
        \begin{split}
            \bigg[ &(m_t - \delta_t)^{2(L-1)} - O(e^{-L\alpha\eta t}), \\
                   &(m_t + \delta_t)^{2(L-1)} + O(e^{-L\alpha\eta t}) \bigg].
        \end{split}
        \end{equation}
        
        \item \textbf{(Zero Space)} The remaining eigenvalues of $H_{L,t}$ are zero.
    \end{enumerate}
    
    Moreover, let $\lambda_{\text{dom}}$ denote an arbitrary eigenvalue belonging to the Dominant Space and $\lambda_{\text{bulk}}$ denote an arbitrary eigenvalue belonging to the Bulk Space. Their ratio satisfies:
\begin{equation}
    \frac{\lambda_{\text{dom}}}{\lambda_{\text{bulk}}} = \Theta(L),
\end{equation}
where $\alpha$ is a positive constant independent of $t$, provided $t$ is sufficiently large.
\end{theorem}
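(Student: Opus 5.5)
We will use the Gauss--Newton decomposition $H_{L,t}=H_o+H_f$ and treat the functional Hessian $H_f$ as a perturbation. The outer-product Hessian $H_o$ will be diagonalized exactly using the shared spectral structure of Lemma \ref{lemma shared spectral structure}.

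\textbf{Step 1: bound $H_f$.} For squared loss, $H_f$ is linear in the residual moment $\Omega_t=W_t^{L:1}\Sigma_{xx}-\Sigma_{yx}$. By Lemma \ref{lemma shared spectral structure} and Assumptions \ref{assumption 12}--\ref{assumption 4},
\[
\Omega_t \simeq U\,\mathrm{diag}(\lambda_{i,t}^L-1)_{i\le r}\,V^\top .
\]
The scalar recursion $\lambda_{i,t+1}=\lambda_{i,t}+\eta\lambda_{i,t}^{L-1}(1-\lambda_{i,t}^L)$ has the stable fixed point $\lambda=1$. Linearizing there gives the factor $1-\eta L$, which lies in $(0,1)$ because $\eta<1/L$. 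The condition $\eta<1/M^{2L-2}$ (with $M$ bounding the iterates) prevents overshoot and keeps the iterates bounded. Together these give $|\lambda_{i,t}^L-1|=O(e^{-L\alpha\eta t})$ for some $\alpha>0$. Each entry of $H_f$ is a product of bounded weight factors times entries of $\Omega_t$, so $\|H_f\|_{op}=O(e^{-L\alpha\eta t})$. Weyl's inequality then transfers the spectrum of $H_o$ to $H_{L,t}$ up to this additive error, which accounts for the $O(e^{-L\alpha\eta t})$ terms in the statement.

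\textbf{Step 2: spectrum of $H_o$.} Since $\Sigma_{xx}\simeq I_{d_*}$, we have $H_o=J^\top J$ with
\[
J=\big[\,W^{L:l+1}\otimes (W^{1:l-1})^\top\Sigma_{xx}^{1/2}\,\big]_{l=1}^L .
\]
The nonzero eigenvalues of $H_o$ coincide with those of the $d_Ld_0\times d_Ld_0$ Gram operator
\[
G=JJ^\top=\sum_{l=1}^L (W^{L:l+1}W^{l+1:L})\otimes(W^{1:l-1}\Sigma_{xx}W^{l-1:1})^{\top}.
\]
Substituting the diagonal forms from Lemma \ref{lemma shared spectral structure}, each summand is diagonal in the basis $u_i\otimes v_j$, where $u_i,v_j$ are columns of $U,V$ completed to orthonormal bases. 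The eigenvalue attached to the pair $(i,j)$ is
\[
\sum_{l=1}^L \lambda_i^{2(L-l)}\lambda_j^{2(l-1)},
\]
with the convention $\lambda_k=0$ for directions outside the support. This gives three cases.
\begin{itemize}
\item If $i,j\le r$, all $L$ terms survive. Each term lies in $[(m_t-\delta_t)^{2(L-1)},(m_t+\delta_t)^{2(L-1)}]$, so the eigenvalue lies in $L$ times that interval. This yields the $r^2$ dominant eigenvalues.
\item If exactly one index is in $[r]$ and the other lies outside the data support, only one term survives ($l=L$ or $l=1$), namely $\lambda^{2(L-1)}$. Counting gives $r(d_L-r)$ eigenvalues from the output side and $r(d_*-r)$ from the input side, totalling $(d_*+d_L-2r)r$ bulk eigenvalues.
\item Everything else is zero.
\end{itemize}
One care point: the modes with $r<i\le d_*$ keep their frozen initial values. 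They must be handled consistently with Assumption \ref{assumption 4}, by treating them as zero or reading the bulk count as stated.

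\textbf{Step 3: the ratio.} This follows from the two intervals. We have
\[
\frac{\lambda_{dom}}{\lambda_{bulk}}\ \ge\ \frac{L(m_t-\delta_t)^{2(L-1)}}{(m_t+\delta_t)^{2(L-1)}}\ >\ \sqrt L\cdot\frac{L}{L}\cdot\ldots
\]
More precisely, the hypothesis $(m_t+\delta_t)/(m_t-\delta_t)<L^{1/(2(L-1))}$ makes the two clusters disjoint. As $t\to\infty$, all $\lambda_{i,t}\to1$, so $\delta_t\to0$ and the ratio tends to exactly $L$. Hence the ratio is $\Theta(L)$ for large $t$, once the exponential error is negligible against $(m_t-\delta_t)^{2(L-1)}\approx1$.

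\textbf{Main obstacle.} I expect the hardest part to be the uniform exponential convergence in Step 1 with an $\alpha$ independent of $t$. This needs a monotonicity or contraction argument for the one-dimensional map on a bounded invariant interval, using the step-size conditions. The Kronecker bookkeeping in Step 2, including the frozen modes, is routine but must be done carefully.
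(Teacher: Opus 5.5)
Your skeleton matches the paper's proof. It uses the Gauss--Newton split, exact diagonalization of the Gram operator in the basis $u_i\otimes v_j$ with eigenvalues $\sum_l\lambda_{i,t}^{2(L-l)}\lambda_{j,t}^{2(l-1)}$ and the same case split, exponential decay of $H_f$, and then Weyl. You differ in two sub-steps.

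\textbf{How you get the rate.} For $H_f$, the paper goes through the loss:
\begin{itemize}
\item $\|\Omega\|_2\le\sqrt{2r\epsilon}$ by Jensen and Cauchy--Schwarz;
\item a blockwise bound $\|H_f\|_2\le\|\Sigma_t^{1/L}\|_2^{L-2}\sqrt{2L(L-1)r}\,\epsilon^{1/2}$;
\item a per-step contraction of the excess loss from convexity of $x^L$, i.e.\ $1-\lambda_{i,t+1}^L\le(1-L\eta\lambda_{i,t}^{2L-2})(1-\lambda_{i,t}^L)$, combined with a uniform lower bound $\alpha\le\lambda_{i,t}^{2L-2}$.
\end{itemize}
You instead read off $\Omega_t\simeq U\,\mathrm{diag}(\lambda_{i,t}^L-1)V^\top$ and take the rate from the linearization $1-\eta L$ at the fixed point. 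This is more direct, and it works on both sides of $1$. The convexity inequality can only be squared on the branch $\lambda_{i,t}\le 1$; for $\lambda_{i,t}>1$ both sides are negative and squaring reverses it. The price is that linearization is local: you must first prove $\lambda_{i,t}\to 1$ globally (the paper's monotonicity lemma) and absorb the transient into the constant.

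\textbf{How you get the ratio.} Your limit argument (dominant eigenvalues $\to L$, bulk ones $\to 1$) is what actually gives $\Theta(L)$ for large $t$. The paper's fixed-$t$ bounds $L/\rho$ and $L\rho$, with $\rho=\big(\tfrac{m_t+\delta_t}{m_t-\delta_t}\big)^{2(L-1)}<L$, only confine the ratio to $(1,L^2)$. State your argument with the actual eigenvalues rather than ``$\delta_t\to 0$'', and drop the garbled display.

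Four points need correcting.

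\textbf{Overshoot.} The condition $\eta<1/M^{2L-2}$ does not prevent overshoot. For $L=2$, $\lambda_{i,0}=M=3$, $\eta=0.11<\min\{1/2,1/9\}$, one step gives $\lambda_{i,1}=0.36$. That condition only guarantees positivity. It is $\eta<1/L$ that makes $(0,1)$ invariant, and the paper's convergence lemma uses exactly this: iterates above $1$ decrease, possibly jumping into $(0,1)$, and then increase monotonically to $1$. An invariant-interval argument on $[1,M]$ alone would therefore fail.

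\textbf{The modes $r<i\le d_*$.} These must be zero, not just ``handled consistently''. With $\mathrm{col}(V)$ inside the support of $\Sigma_{xx}$ (as the paper uses), $V^\top\Sigma_{xx}V=I_{d_*}$. So these modes are not frozen; they follow $\lambda\mapsto\lambda-\eta\lambda^{2L-1}$ (appendix Lemma \ref{lemma 2}).
\begin{itemize}
\item If $\lambda_{i,0}\neq 0$, they decay only polynomially, $\Omega_t$ keeps entries $\lambda_{i,t}^L$, and the $O(e^{-L\alpha\eta t})$ bounds on $H_f$ and on the eigenvalue errors fail.
\item If they were instead frozen at nonzero values, the pairs with $r<i\le d_*$, $j\le r$ would carry all $L$ terms $\sum_l\lambda_{i,0}^{2(L-l)}\lambda_{j,t}^{2(l-1)}$ and generally leave the bulk interval.
\end{itemize}
The paper's eigenvalue lemma simply takes $\Sigma_t^{1/L}=\mathrm{diag}(\lambda_{1,t},\dots,\lambda_{r,t},0,\dots,0)$. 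You should state $\lambda_{i,0}=0$ for $i>r$ explicitly; both dynamics preserve it.

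\textbf{The Gram operator.} It should be $\sum_l W^{L:l+1}(W^{L:l+1})^\top\otimes\Sigma_{xx}^{1/2}W^{1:l-1}(W^{1:l-1})^\top\Sigma_{xx}^{1/2}$. Your factor $W^{1:l-1}\Sigma_{xx}W^{l-1:1}$ does not typecheck. Keeping $\Sigma_{xx}$ there is a genuine small gain. The paper sets $B_o=I$ by restricting to $d_*=d_0$ and counts $(d_0+d_L-2r)r$, while your version yields the stated $(d_*+d_L-2r)r$.

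\textbf{The zero space.} Weyl only makes the remaining eigenvalues of $H_{L,t}$ of size $O(e^{-L\alpha\eta t})$, not exactly zero. That is also all the paper's proof delivers.
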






\noindent \textbf{Explain.} Theorem \ref{theorem main result} provides a complete characterization of the Hessian spectrum in deep linear networks. The result reveals three distinct eigenspaces: (i) a dominant space containing $r^2$ eigenvalues of order $L \cdot m_t^{2(L-1)}$, (ii) a bulk space containing $(d_\ast + d_L - 2r)r$ eigenvalues of order $m_t^{2(L-1)}$, and (iii) a zero space containing the remaining eigenvalues that vanish asymptotically. Crucially, the ratio between the dominant and bulk eigenvalues is $\Theta(L)$, demonstrating that the spectral gap is an intrinsic property of the network architecture rather than a consequence of data imbalance.

\noindent \textbf{Remark.} The condition $(m_t + \delta_t)/(m_t - \delta_t) < L^{1/(2(L-1))}$ ensures that the eigenvalues remain sufficiently concentrated to maintain a clear separation between the two clusters. This condition is naturally satisfied when the initialization is near-uniform and becomes increasingly easier to satisfy as the network depth $L$ grows.

To gain deeper insight into the mechanism underlying this spectral bifurcation, we consider a special case where the weight matrices are initialized with uniform singular values. This setting, formalized in Assumption \ref{assumption 5}, allows us to obtain a sharper characterization of the Hessian spectrum.
\begin{corollary}[Hessian Bifurcation with USI]
\label{corollary two eigenvalues}
    Under Assumptions \ref{assumption 12}, \ref{assumption 4}, and \ref{assumption 5}, and assuming $r \leq d_*$,  {\color{red}$\eta < \min\big\{\frac{1}{L}, \frac{1}{M^{2L-2}}\big\}$}, the Hessian $H_{L,t}$ has exactly two distinct nonzero eigenvalues (up to an exponentially small perturbation). Specifically:
    \begin{enumerate}
        \item \textbf{(Dominant Space)} There exist $r^2$ eigenvalues equal to $L\mu_t^{2(L-1)} + O(e^{-L\alpha\eta t})$.
        
        \item \textbf{(Bulk Space)} There exist $(d_\ast + d_L - 2r)r$ eigenvalues equal to $\mu_t^{2(L-1)} + O(e^{-L\alpha\eta t})$.
        
        \item \textbf{(Zero Space)} The remaining eigenvalues are $O(e^{-L\alpha\eta t})$.
    \end{enumerate}
    Moreover, the ratio between the dominant and bulk eigenvalues is exactly $L$, i.e.,
    \begin{align}
        \frac{\lambda_{\text{dom}}}{\lambda_{\text{bulk}}} = L,
    \end{align}
    where $\mu_t$ denotes the common value of all effective eigenvalues $\lambda_{i,t} = \mu_t$ for $i = 1, \ldots, r$ at time $t$, and $\alpha$ is a positive constant.
\end{corollary}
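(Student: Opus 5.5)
The corollary is a specialization of Theorem~\ref{theorem main result}, so the plan is to reduce to it rather than redo the Hessian computation. The key steps are: show that uniform initialization stays uniform, show the concentration hypothesis holds with zero spread, and then collapse the interval bounds of the theorem to single values.

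\textbf{Step 1: uniformity is preserved.} By Lemma~\ref{lemma shared spectral structure}, each active singular value $\lambda_{i,t}$, $i\le r$, evolves under the same scalar recursion
\[
\lambda_{i,t+1}=\lambda_{i,t}-\eta\lambda_{i,t}^{2L-1}+\eta\lambda_{i,t}^{L-1}.
\]
Under Assumption~\ref{assumption 5}, all $\lambda_{i,0}=\mu$. A one-line induction on $t$ then gives $\lambda_{i,t}=\mu_t$ for every $i\le r$ and every $t$, where $\mu_t$ is generated by the same recursion from $\mu_0=\mu$.

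\textbf{Step 2: the concentration hypothesis holds trivially.} Take $m_t=\mu_t$ and $\delta_t=0$ in Theorem~\ref{theorem main result}. The ratio condition becomes $(m_t+\delta_t)/(m_t-\delta_t)=1<L^{1/(2(L-1))}$, which holds for every $L\ge 2$ because the right side exceeds $1$. All hypotheses of the theorem are therefore met with the same step-size constraint.

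\textbf{Step 3: the intervals collapse.} Substituting $\delta_t=0$, the dominant interval degenerates to $L\mu_t^{2(L-1)}\pm O(e^{-L\alpha\eta t})$ with multiplicity $r^2$. The bulk interval degenerates to $\mu_t^{2(L-1)}\pm O(e^{-L\alpha\eta t})$ with multiplicity $(d_*+d_L-2r)r$. This gives parts 1 and 2 of the corollary.

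\textbf{Step 4: the zero space and the main obstacle.} The theorem states that the remaining eigenvalues are exactly zero, which is stronger than the $O(e^{-L\alpha\eta t})$ the corollary asks for, so part 3 follows. The subtle point is reading the ratio as ``exactly $L$''. If $H_{L,t}$ is split as $H_o$ plus a residual term $H_f$, then $H_o$ alone should have eigenvalues exactly $L\mu_t^{2(L-1)}$ and $\mu_t^{2(L-1)}$. To check this, I would write $H_o$ as a block Gram matrix $J^\top J$ with $J=[\partial F/\partial W^l]_l$. Using $W^{L:l+1}\simeq U\mu_t^{L-l}$ and $W^{1:l-1}$ proportional to $\mu_t^{l-1}V$, every block $J_l$ carries the same factor $\mu_t^{L-1}$. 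Hence $JJ^\top$ is $\mu_t^{2(L-1)}$ times a sum of $L$ projections. Restricted to the $r\times r$ active block, these projections coincide, giving $L$. On the $(d_*-r)r+(d_L-r)r$ off-diagonal directions only the first or last layer contributes, giving $1$.

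The residual $H_f$ is linear in $\Omega_t=U(\mu_t^L-1)\mathcal I_r V^\top$. Since $\mu_t\to 1$ geometrically, because $\eta<1/L$ makes the scalar map contractive near its fixed point $1$ with rate $1-L\eta+O(\eta^2)$, this gives the $e^{-L\alpha\eta t}$ term. By Weyl's inequality, $H_f$ shifts each eigenvalue by at most $\|H_f\|$. So the ratio $\lambda_{\text{dom}}/\lambda_{\text{bulk}}$ equals $L$ exactly at the $H_o$ level and $L+O(e^{-L\alpha\eta t})$ for the full Hessian, which is how ``exactly $L$'' should be read in the statement.
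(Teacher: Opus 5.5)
Your proposal is correct and is essentially the paper's proof. The paper likewise gets $\lambda_{i,t}=\mu_t$ from the identical recursion and reads off the exact $H_{o,t}$ eigenvalues $L\mu_t^{2(L-1)}$, $\mu_t^{2(L-1)}$ and $0$; it takes these directly from Lemma~\ref{lemma generalize eigenvalues} rather than setting $\delta_t=0$ in the theorem, which also avoids that lemma's nominal requirement $\delta_t>0$. It then applies Weyl's inequality with $\|H_{f,t}\|_2=O(e^{-L\alpha\eta t})$, giving the ratio $L+O(e^{-L\alpha\eta t})$.

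One correction concerns part 3. You cannot take it from the main-text claim that the remaining eigenvalues of $H_{L,t}$ are exactly zero, because that holds only for $H_{o,t}$: $H_{f,t}\neq 0$ before convergence. For instance, for $L=2$ with scalar weights $w_1=w_2=\mu_t$, the Hessian has eigenvalues $3\mu_t^2-1$ and $1-\mu_t^2$. The $O(e^{-L\alpha\eta t})$ bound must therefore come from the Weyl step in your last paragraph, as in the appendix version of the theorem.
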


Corollary \ref{corollary two eigenvalues} reveals the fundamental mechanism behind the Hessian bifurcation phenomenon. Under uniform spectral initialization, the Hessian possesses exactly two distinct nonzero eigenvalues: $L\mu_t^{2(L-1)}$ in the dominant space and $\mu_t^{2(L-1)}$ in the bulk space, with a ratio of exactly $L$. This clean separation provides a transparent view of how the spectral structure emerges. {\color{red} In Section \ref{Simsec}, we discuss depth bifurcation beyond our theoretical setting via simulations.}\\
\noindent {  \textbf{Proof Sketch.} } Under Assumptions \ref{assumption 12} and \ref{assumption 4}, we first establish that all weight matrices share a common spectral structure throughout training (Lemma \ref{lemma shared spectral structure}), where the singular values of the end-to-end mapping evolve according to a unified dynamical system $\Sigma_{t+1}^{1/L} = g(\Sigma_t^{1/L})$. Analyzing this dynamics shows that the population loss converges exponentially as $L(W_t) = O(e^{-2L\alpha\eta t})$, which by the Gauss-Newton decomposition $H_{L,t} = H_{o,t} + H_{f,t}$ implies that the functional Hessian norm satisfies $\|H_{f,t}\|_2 = O(e^{-L\alpha\eta t})$ (Lemma \ref{lemma: 2norm-t}). Following \cite{singh2021analytic} Proposition 2, the outer-product Hessian admits the factorization $H_{o,t} = A_{o,t} B_o A_{o,t}^\top$, whose nonzero eigenvalues coincide with those of the Gram matrix $B_o^{1/2} A_{o,t}^\top A_{o,t} B_o^{1/2}$. By explicitly computing the weight matrix products and exploiting the Kronecker product structure, we show that these eigenvalues split into a dominant cluster of $r^2$ eigenvalues scaling as $L \cdot m_t^{2(L-1)}$ and a bulk cluster of $(d_\ast + d_L - 2r)r$ eigenvalues scaling as $m_t^{2(L-1)}$ (Lemma \ref{lemma generalize eigenvalues}). Finally, applying Weyl's inequality to $H_{L,t} = H_{o,t} + H_{f,t}$ shows that the eigenvalues of the true Hessian $H_{L,t}$ lie within $O(e^{-L\alpha\eta t})$ of those of $H_{o,t}$, yielding the stated two-cluster structure with ratio $\lambda_{\text{dom}}/\lambda_{\text{bulk}} = \Theta(L)$.

\section{Simulation}
\label{Simsec}
To {\color{red} verify} the theoretical results in
Sections~\ref{section_pre} and \ref{hessian_bifurcation},
we conduct numerical simulations under
Assumptions~\ref{assumption 12}, \ref{assumption 4}. For brevity, we provide a experiment here, and detailed experimental settings and additional results are provided in Appendix~\ref{appendix_simulation}. We consider a depth-$L$ deep linear network with input dimension $d_0 = 10$, output dimension $d_L = 16$, hidden widths $d_1=\cdots=d_{L-1} = 20$, and effective rank $r = 4$, trained on whitened data ($\Sigma_{xx} \simeq I_{d_\ast}$) with $L =3$. Figure~\ref{fig:hessian_sim2} reports the Hessian eigenvalue trajectories and training loss curves, the latter confirming convergence. The non-zero spectrum cleanly separates into a dominant cluster of dimension $r^2 = 16$ and a bulk cluster of dimension $(d_\ast + d_L - 2r)r = 72$, consistent across both depths. Moreover, the spectral gap scales with depth, yielding ratios of approximately $3$ for $L=3$, matching the linear growth of dominant eigenvalues with $L$ predicted by Theorem~\ref{theorem main result}.
\begin{figure}[h]
    \centering
    \includegraphics[width=0.5\textwidth]{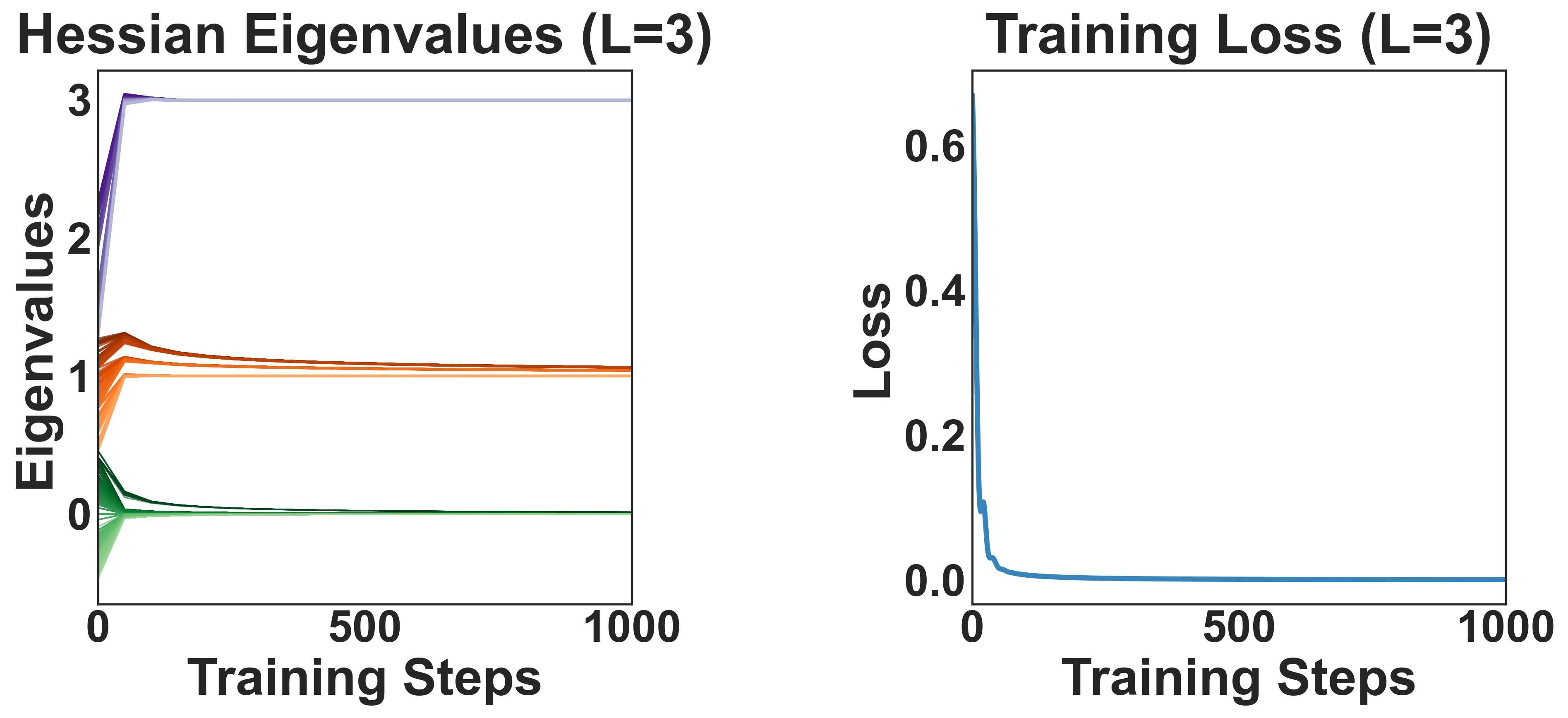}
    \caption{Hessian eigenvalue trajectories and training loss for deep linear networks with depths $L=3$.}
    \label{fig:hessian_sim2}
    \vspace{-10pt}
\end{figure}

\noindent\textbf{{\color{red}Discussion Beyond Theoretical Setup.}} To probe whether the Hessian bifurcation extends beyond the linear regime, we repeat the experiment with a $\tanh$ activation applied after each layer, keeping all other settings unchanged. As shown in Figure~\ref{fig:3panels_tanh}, the non-zero spectrum still splits cleanly into a dominant and a bulk cluster (panel a), and training converges stably (panel b). Sweeping $L=2$--$7$ (panel c), the empirical gap $\bar{\lambda}_{\rm dom}/\bar{\lambda}_{\rm bulk}$ tracks the $\text{Gap}=L$ prediction: $\bar{\lambda}_{\rm dom}$ grows linearly in $L$ while $\bar{\lambda}_{\rm bulk}$ remains nearly constant. This suggests that the depth-dependent bifurcation of Theorem~\ref{theorem main result} is not an artifact of linearity but reflects a more general structural property of deep networks.

\begin{figure}[h]
    \centering
    \includegraphics[width=0.65\textwidth]{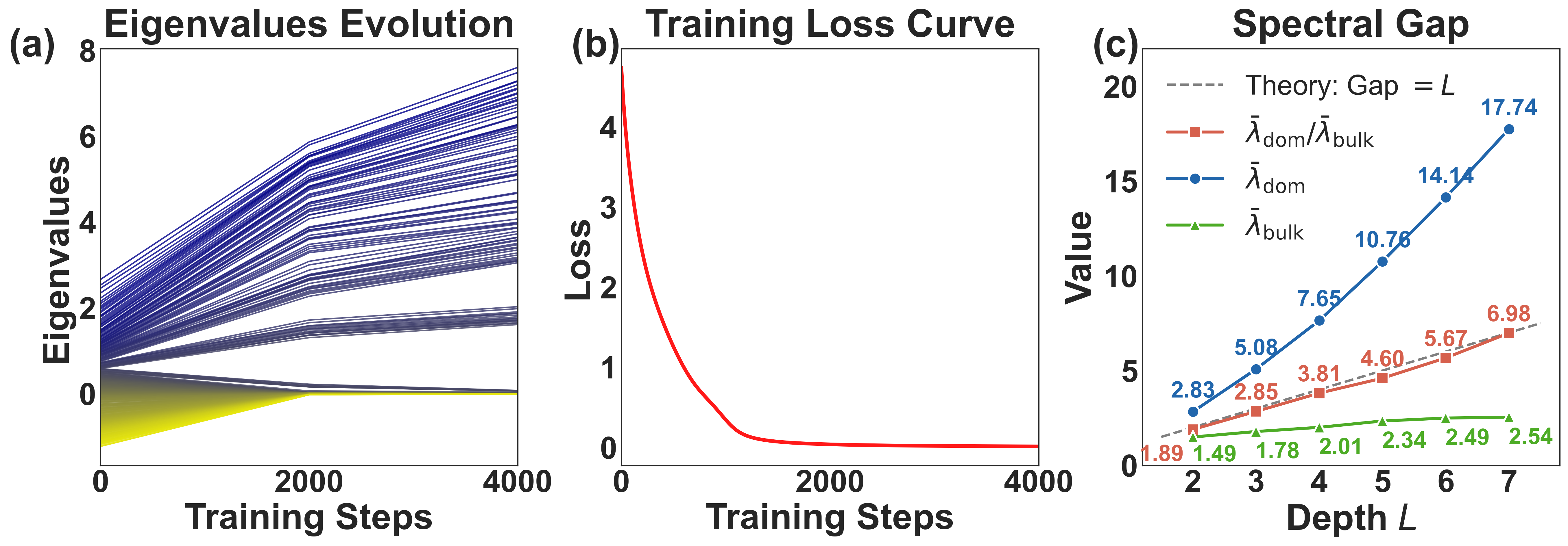}
    \caption{(a) Eigenvalue evolution ($L=3$, tanh) showing spectral bifurcation. (b) Training loss curve. (c) Spectral gap $\bar{\lambda}_{\rm dom}/\bar{\lambda}_{\rm bulk}$ vs.\ depth $L=2$--$7$, where $\bar{\lambda}_{\rm dom}$ and $\bar{\lambda}_{\rm bulk}$ are the mean eigenvalues of the dominant and bulk spaces, respectively; empirical ratio (red) matches $\text{Gap}=L$ (dashed), $\bar{\lambda}_{\rm dom}$ (blue) grows linearly, $\bar{\lambda}_{\rm bulk}$ (green) stays nearly constant.}
    \label{fig:3panels_tanh}
\end{figure}

\section{Conclusion}

In this work, we conduct a theoretical analysis of the spectral bifurcation phenomenon in the Hessian of deep linear neural networks, and demonstrate that this phenomenon is an intrinsic property of the model architecture rather than a consequence of data imbalance. Our main result (Theorem \ref{theorem main result}) shows that the Hessian exhibits a two-cluster eigenvalue structure with a gap that scales linearly with the network depth $L$, even when the data covariance is perfectly balanced. The key insight is that directions within the effective parameter space accumulate curvature contributions from all $L$ layers, while directions outside this space only contribute through a single layer, leading to an $L$-fold gap between the dominant and bulk eigenvalues. This depth-induced conditioning represents an important factor contributing to the ill-conditioned optimization landscape in deep neural networks, independent of data imbalance, suggesting that both data characteristics and model depth should be considered when designing optimization strategies.

\bibliographystyle{unsrt}

\bibliography{references}

@article{singh2021analytic,
  title={Analytic insights into structure and rank of neural network hessian maps},
  author={Singh, Sidak Pal and Bachmann, Gregor and Hofmann, Thomas},
  journal={Advances in Neural Information Processing Systems},
  volume={34},
  pages={23914--23927},
  year={2021}
}

@inproceedings{
arora2018a,
title={A Convergence Analysis of Gradient Descent for Deep Linear Neural Networks},
author={Sanjeev Arora and Nadav Cohen and Noah Golowich and Wei Hu},
booktitle={International Conference on Learning Representations},
year={2019},
url={https://openreview.net/forum?id=SkMQg3C5K7},
}

@inproceedings{bordelon2025deep,
  title={Deep Linear Network Training Dynamics from Random Initialization: Data, Width, Depth, and Hyperparameter Transfer},
  author={Bordelon, Blake and Pehlevan, Cengiz},
  booktitle={International Conference on Machine Learning},
  pages={4968--4997},
  year={2025},
  organization={PMLR}
}

@inproceedings{
song2025does,
title={Does {SGD} really happen in tiny subspaces?},
author={Minhak Song and Kwangjun Ahn and Chulhee Yun},
booktitle={The Thirteenth International Conference on Learning Representations},
year={2025},
url={https://openreview.net/forum?id=v6iLQBoIJw}
}

@inproceedings{
zhao2024theoretical,
title={Theoretical Characterisation of the Gauss Newton Conditioning in Neural Networks},
author={Jim Zhao and Sidak Pal Singh and Aurelien Lucchi},
booktitle={The Thirty-eighth Annual Conference on Neural Information Processing Systems},
year={2024},
url={https://openreview.net/forum?id=fpOnUMjLiO}
}

@article{sagun2017empirical,
  title={Empirical analysis of the hessian of over-parametrized neural networks},
  author={Sagun, Levent and Evci, Utku and Guney, V Ugur and Dauphin, Yann and Bottou, Leon},
  journal={arXiv preprint arXiv:1706.04454},
  year={2017}
}

@article{gur2018gradient,
  title={Gradient descent happens in a tiny subspace},
  author={Gur-Ari, Guy and Roberts, Daniel A and Dyer, Ethan},
  journal={arXiv preprint arXiv:1812.04754},
  year={2018}
}

@inproceedings{ghorbani2019investigation,
  title={An investigation into neural net optimization via hessian eigenvalue density},
  author={Ghorbani, Behrooz and Krishnan, Shankar and Xiao, Ying},
  booktitle={International Conference on Machine Learning},
  pages={2232--2241},
  year={2019},
  organization={PMLR}
}

@inproceedings{yao2020pyhessian,
  title={Pyhessian: Neural networks through the lens of the hessian},
  author={Yao, Zhewei and Gholami, Amir and Keutzer, Kurt and Mahoney, Michael W},
  booktitle={2020 IEEE international conference on big data (Big data)},
  pages={581--590},
  year={2020},
  organization={IEEE}
}

@inproceedings{
cohen2021gradient,
title={Gradient Descent on Neural Networks Typically Occurs at the Edge of Stability},
author={Jeremy Cohen and Simran Kaur and Yuanzhi Li and J Zico Kolter and Ameet Talwalkar},
booktitle={International Conference on Learning Representations},
year={2021},
url={https://openreview.net/forum?id=jh-rTtvkGeM}
}

@inproceedings{jin2017escape,
  title={How to escape saddle points efficiently},
  author={Jin, Chi and Ge, Rong and Netrapalli, Praneeth and Kakade, Sham M and Jordan, Michael I},
  booktitle={International conference on machine learning},
  pages={1724--1732},
  year={2017},
  organization={PMLR}
}

@inproceedings{yao2021adahessian,
  title={Adahessian: An adaptive second order optimizer for machine learning},
  author={Yao, Zhewei and Gholami, Amir and Shen, Sheng and Mustafa, Mustafa and Keutzer, Kurt and Mahoney, Michael},
  booktitle={proceedings of the AAAI conference on artificial intelligence},
  volume={35},
  number={12},
  pages={10665--10673},
  year={2021}
}

@inproceedings{
liu2024sophia,
title={Sophia: A Scalable Stochastic Second-order Optimizer for Language Model Pre-training},
author={Hong Liu and Zhiyuan Li and David Leo Wright Hall and Percy Liang and Tengyu Ma},
booktitle={The Twelfth International Conference on Learning Representations},
year={2024},
url={https://openreview.net/forum?id=3xHDeA8Noi}
}

@inproceedings{
zhang2025adam,
title={Adam-mini: Use Fewer Learning Rates To Gain More},
author={Yushun Zhang and Congliang Chen and Ziniu Li and Tian Ding and Chenwei Wu and Diederik P Kingma and Yinyu Ye and Zhi-Quan Luo and Ruoyu Sun},
booktitle={The Thirteenth International Conference on Learning Representations},
year={2025},
url={https://openreview.net/forum?id=iBExhaU3Lc}
}

@inproceedings{pennington2017geometry,
  title={Geometry of neural network loss surfaces via random matrix theory},
  author={Pennington, Jeffrey and Bahri, Yasaman},
  booktitle={International conference on machine learning},
  pages={2798--2806},
  year={2017},
  organization={PMLR}
}

@article{pennington2018spectrum,
  title={The spectrum of the fisher information matrix of a single-hidden-layer neural network},
  author={Pennington, Jeffrey and Worah, Pratik},
  journal={Advances in neural information processing systems},
  volume={31},
  year={2018}
}

@article{louart2018random,
  title={A random matrix approach to neural networks},
  author={Louart, Cosme and Liao, Zhenyu and Couillet, Romain},
  journal={The Annals of Applied Probability},
  volume={28},
  number={2},
  pages={1190--1248},
  year={2018},
  publisher={JSTOR}
}

@article{papyan2020traces,
  title={Traces of class/cross-class structure pervade deep learning spectra},
  author={Papyan, Vardan},
  journal={Journal of Machine Learning Research},
  volume={21},
  number={252},
  pages={1--64},
  year={2020}
}

@article{martin2021implicit,
  title={Implicit self-regularization in deep neural networks: Evidence from random matrix theory and implications for learning},
  author={Martin, Charles H and Mahoney, Michael W},
  journal={Journal of Machine Learning Research},
  volume={22},
  number={165},
  pages={1--73},
  year={2021}
}

@article{hodgkinson2025models,
  title={Models of heavy-tailed mechanistic universality},
  author={Hodgkinson, Liam and Wang, Zhichao and Mahoney, Michael W},
  journal={arXiv preprint arXiv:2506.03470},
  year={2025}
}

@inproceedings{keskar2017large,
  title={On Large-Batch Training for Deep Learning: Generalization Gap and Sharp Minima},
  author={Keskar, Nitish Shirish and Mudigere, Dheevatsa and Nocedal, Jorge and Smelyanskiy, Mikhail and Tang, Ping Tak Peter},
  booktitle={International Conference on Learning Representations},
  year={2017}
}

@article{li2018visualizing,
  title={Visualizing the loss landscape of neural nets},
  author={Li, Hao and Xu, Zheng and Taylor, Gavin and Studer, Christoph and Goldstein, Tom},
  journal={Advances in neural information processing systems},
  volume={31},
  year={2018}
}

@article{garipov2018loss,
  title={Loss surfaces, mode connectivity, and fast ensembling of dnns},
  author={Garipov, Timur and Izmailov, Pavel and Podoprikhin, Dmitrii and Vetrov, Dmitry P and Wilson, Andrew G},
  journal={Advances in neural information processing systems},
  volume={31},
  year={2018}
}


\appendix

\section{Proof of Main Theorem}

\subsection{Notation and Problem Setup}
\label{sec:notation_recall}

In this section, we recall the notation, problem setup, and key definitions from the main text that are essential for the subsequent theoretical analysis.

\paragraph{Notation and Matrix Operations.}
We follow standard conventions: scalars are denoted by lowercase letters (e.g., $\lambda, \eta$), vectors by bold lowercase letters (e.g., $\mathbf{x}, \mathbf{y}$), and matrices by capital letters (e.g., $W, H$). Let $I_k$ denote the $k \times k$ identity matrix.

For matrix products, we define $W^{k:l} = W^k \cdots W^l$ when $k > l$ and $W^{k:l} = (W^k)^\top \cdots (W^l)^\top$ when $k < l$. Additional operations include: $\otimes$ for Kronecker product, $\mathrm{vec}_r(\cdot)$ for row-wise matrix vectorization, and $\mathrm{rk}(W)$ for matrix rank.

For matrices $A \in \mathbb{R}^{m \times n}$ and $B \in \mathbb{R}^{p \times q}$ with $m \geq p$ and $n \geq q$, we define the zero-padding operator $\simeq$ as:
\begin{align*}
A \simeq B \quad \Leftrightarrow \quad A = \begin{bmatrix} B & \mathbf{0}_{p \times (n-q)} \\ \mathbf{0}_{(m-p) \times q} & \mathbf{0}_{(m-p) \times (n-q)} \end{bmatrix},
\end{align*}
where $\mathbf{0}_{i \times j}$ denotes an $i \times j$ zero matrix.

For matrix-matrix derivatives in row-wise vectorization form, we adopt the convention:
\begin{align*}
\frac{\partial A}{\partial B} := \frac{\partial \,\mathrm{vec}_{r}(A)}{\partial \,\mathrm{vec}_{r}(B)^{\top}}.
\end{align*}
A useful identity is:
\begin{align*}
\frac{\partial (AWB)}{\partial W} = A \otimes B^{\top}.
\end{align*}

\paragraph{Deep Linear Network Architecture.}
We consider a depth-$L$ deep linear neural network $F_W(\mathbf{x}) = W^{L:1}\mathbf{x}$, where $W^l \in \mathbb{R}^{d_l \times d_{l-1}}$ parameterizes the $l$-th layer for $l = 1, \ldots, L$. The network has input dimension $d_0$, output dimension $d_L$, and hidden layer widths $d_1, \ldots, d_{L-1}$. We denote the complete parameter set as $W := \{W^1, \ldots, W^L\}$ and define $d_\ast = \min\{d_0, d_L\}$.

The gradient of network output with respect to $W^l$ is:
\begin{align*}
\frac{\partial F}{\partial W^{l}} = W^{L:l+1} \otimes \big(\mathbf{x}^{\top}W^{1:l-1}\big) \in \mathbb{R}^{d_L \times d_l d_{l-1}}.
\end{align*}

\paragraph{Data Distribution and Loss Function.}
We consider a training dataset $\{(\mathbf{x}_i, \mathbf{y}_i)\}_{i=1}^N$ sampled i.i.d. from distribution $p$. The squared loss is $L_W(\mathbf{x}, \mathbf{y}) = \frac{1}{2} \|\mathbf{y} - F_W(\mathbf{x})\|_2^2$, and the population loss is $L(W) = \mathbb{E}_{(\mathbf{x},\mathbf{y}) \sim p}[L_W(\mathbf{x},\mathbf{y})]$. For brevity, we denote this expectation simply as $\mathbb{E}$.

We define the residual as $\boldsymbol{\delta}_{\mathbf{x},\mathbf{y}} := F_W(\mathbf{x}) - \mathbf{y}$ and introduce the following expected quantities:
\begin{align*}
\Omega := \mathbb{E}[\boldsymbol{\delta}_{\mathbf{x},\mathbf{y}}\mathbf{x}^{\top}], \quad
\Sigma_{xx} := \mathbb{E}[\mathbf{x}\mathbf{x}^{\top}], \quad
\Sigma_{yx} := \mathbb{E}[\mathbf{y}\mathbf{x}^{\top}].
\end{align*}

The gradient of the loss with respect to $W^l$ is:
\begin{align*}
\nabla_{W^l} L(W) = W^{l+1:L}\big(W^{L:1}\Sigma_{xx} - \Sigma_{yx}\big)W^{1:l-1}.
\end{align*}

\paragraph{Gauss-Newton Decomposition.}
Following \cite{singh2021analytic,zhao2024theoretical}, we decompose the Hessian $H_L = \nabla^2 L(W)$ as $H_L = H_o + H_f$, where:
\begin{align*}
H_o &= \mathbb{E} \left[ \nabla_{W} F(\mathbf{x})^\top \left[ \frac{\partial^2 L_W}{\partial F ^2} \right] \nabla_{W} F(\mathbf{x}) \right], \\
H_f &= \mathbb{E} \left[ \sum_{c=1}^{d_L} \left[ \frac{\partial L_W}{\partial F_c} \right] \nabla^2_{W} F_c(\mathbf{x}) \right].
\end{align*}
We refer to $H_o$ as the \textit{outer-product Hessian} and $H_f$ as the \textit{functional Hessian}.

\paragraph{Balanced Initialization.}
Following \cite{arora2019convergence}, we initialize the weight matrices using a balanced initialization (see Definition \ref{balance_init} in the main text). Specifically, we sample $A \sim \mathcal{D}$ over $d_L \times d_0$ matrices, compute its SVD $A = U \Sigma_0 V^{\top}$, and set:
\begin{align*}
W^L \simeq U \Sigma_0^{1/L}, \quad W^{l} \simeq \Sigma_0^{1/L} \text{ for } l=2,\ldots,L-1, \quad W^1 \simeq \Sigma_0^{1/L} V^{\top}.
\end{align*}

\paragraph{Gradient Descent Dynamics.}
We optimize using gradient descent with learning rate $\eta > 0$. Starting from $W_0$, the update rule at step $t$ is:
\begin{align*}
W_{t+1}^l = W_t^l - \eta \nabla_{W^l} L(W_t) = W_t^l - \eta \cdot W_t^{l+1:L} \big( W_t^{L:1}\Sigma_{xx} - \Sigma_{yx} \big) W_t^{1:l-1}.
\end{align*}

\paragraph{Additional Notation for Analysis.}
Let $r$ denote the effective rank of the target mapping, and define $\mathcal{I}_r \in \mathbb{R}^{d_\ast \times d_\ast}$ such that $\mathcal{I}_r \simeq I_r$. Let $U$ and $V$ denote the left and right singular vectors fixed at initialization, and let $\Sigma_t^{1/L}$ denote the dynamic diagonal matrix with entries $\lambda_{i,t}$ at step $t$.

We are here to give the proof of our theorem. First we need a few lemmas.
{\color{red}
Our goal is to characterize the structure of the full Hessian $H_{\mathcal{L}}$. Since the Hessian decomposes as $H_{\mathcal{L}} = H_o + H_f$, and the functional component $H_f$ vanishes as the loss approaches zero, the structure of the full Hessian is primarily determined by the outer-product Hessian $H_o$. Therefore, we begin our analysis from the structure of the outer-product Hessian $H_o$.  The following lemma shows that there exist two distinct cluster patters in the eigenvalues of $H_o$.
}
\begin{lemma}
\label{lemma 1}
    Under Assumptions \ref{assumption 12}, \ref{assumption 4}, and \ref{assumption 5}, and assuming $r \leq d_* = \min\{d_0, d_L\}$, the outer-product Hessian has two distinct nonzero eigenvalues $L\mu^{2(L-1)}$ and $\mu^{2(L-1)}$, where the former is exactly $L$ times the latter. Moreover, they occur with multiplicities $r^2$ and $(d_0 + d_L - 2r)r$, respectively.
\end{lemma}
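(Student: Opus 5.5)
The plan is to use the Gram-matrix trick for the outer-product Hessian: its nonzero eigenvalues coincide with those of a small operator on the $d_L d_0$-dimensional output$\otimes$input space. Under Assumptions \ref{assumption 12}--\ref{assumption 5}, that operator is a sum of Kronecker products of commuting orthogonal projections, so it can be diagonalized by hand.

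\textbf{Step 1 (factorization).} For the squared loss, $\partial^2 L_W/\partial F^2 = I_{d_L}$, so $H_o = \mathbb{E}[J(\mathbf{x})^\top J(\mathbf{x})]$. Here $J = [J_1,\dots,J_L]$ and $J_l = W^{L:l+1}\otimes(\mathbf{x}^\top W^{1:l-1})$. Since $J_l$ is linear in $\mathbf{x}^\top$, the expectation only involves $\Sigma_{xx}$. Hence $H_o = \Phi^\top\Phi$ with the block row
\[
\Phi_l = W^{L:l+1}\otimes\big(\Sigma_{xx}^{1/2}W^{1:l-1}\big)^{\top}.
\]
This is the $A_oB_oA_o^\top$ factorization of \cite{singh2021analytic}, Proposition 2. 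The nonzero eigenvalues of $H_o$, with multiplicities, equal those of
\[
M=\Phi\Phi^\top=\sum_{l=1}^{L}\big(W^{L:l+1}W^{l+1:L}\big)\otimes\big(\Sigma_{xx}^{1/2}W^{1:l-1}W^{l-1:1}\Sigma_{xx}^{1/2}\big),
\]
which I will verify using $(A\otimes B)(C\otimes D)=AC\otimes BD$ and the row-wise vectorization convention.

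\textbf{Step 2 (evaluate the factors).} By Lemma \ref{lemma shared spectral structure} together with USI, every layer is $\simeq \mu I_r$ up to the fixed $U,V$, with zero singular values beyond index $r$. Define $P_U := U\mathcal{I}_rU^\top$, $P_V := V\mathcal{I}_rV^\top$ and $Q := \Sigma_{xx}$. Then:
\begin{itemize}
\item for $l<L$, $W^{L:l+1}W^{l+1:L}=\mu^{2(L-l)}P_U$, and for $l=L$ this factor is $I_{d_L}$;
\item for $l>1$, the input factor is $\mu^{2(l-1)}P_V$, using $\Sigma_{xx}^{1/2}P_V\Sigma_{xx}^{1/2}=P_V$ because whitening makes $Q$ act as the identity on $\mathrm{span}(V)$;
\item for $l=1$, the input factor is $Q$.
\end{itemize}
Collecting terms gives
\[
M=\mu^{2(L-1)}\Big[(L-2)\,P_U\otimes P_V+I_{d_L}\otimes P_V+P_U\otimes Q\Big].
\]

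\textbf{Step 3 (simultaneous diagonalization).} The operators $P_U$ and $I$ commute, and $P_V\le Q$ are commuting projections. So $M$ is diagonal on the orthogonal decomposition built from $\mathrm{ran}P_U$, $\mathrm{ran}(I-P_U)$, $\mathrm{ran}P_V$, $\mathrm{ran}(Q-P_V)$ and $\ker Q$. The eigenvalues on each piece are:
\begin{itemize}
\item on $\mathrm{ran}P_U\otimes\mathrm{ran}P_V$ (dimension $r^2$): $(L-2)+1+1=L$, times $\mu^{2(L-1)}$;
\item on $\mathrm{ran}(I-P_U)\otimes\mathrm{ran}P_V$ (dimension $(d_L-r)r$): $1$;
\item on $\mathrm{ran}P_U\otimes\mathrm{ran}(Q-P_V)$ (dimension $r(\mathrm{rk}Q-r)$): $1$;
\item everything else: $0$.
\end{itemize}
Summing, the bulk multiplicity is $(d_L+\mathrm{rk}Q-2r)r$. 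This matches the stated $(d_0+d_L-2r)r$ when $\mathrm{rk}\,\Sigma_{xx}=d_0$, and the $d_\ast$ version of Theorem \ref{theorem main result} otherwise. The ratio $L$ is then immediate.

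\textbf{Main obstacle.} I expect the hardest part to be the bookkeeping in Steps 1--2 rather than the spectral computation. Three points need care:
\begin{itemize}
\item getting the transposes and the Kronecker ordering right under the row-wise $\mathrm{vec}_r$ convention;
\item handling the padded hidden dimensions, which must contribute only zero blocks;
\item justifying that $\Sigma_{xx}$ commutes with $P_V$, i.e.\ that the whitened support is aligned with $V$.
\end{itemize}
For the last point I would first try to read Assumption \ref{assumption 12} in the basis in which the initialization was performed. Failing that, I would assume $\Sigma_{xx}$ is the identity on $\mathrm{span}(V)$, which is what is needed for the $l>1$ factors in Step 2.
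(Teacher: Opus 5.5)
Your proposal is correct and is, in substance, the paper's own argument. Both start from the factorization $H_o=A_oB_oA_o^\top$ of Singh et al. and pass to the $d_Ld_0\times d_Ld_0$ Gram matrix. Both obtain $\mu^{2(L-1)}\big[P_U\otimes\Sigma_{xx}+I_{d_L}\otimes P_V+(L-2)P_U\otimes P_V\big]$, where the paper has $I_{d_0}$ in place of $\Sigma_{xx}$. Both then diagonalize it on the tensor basis $u_i\otimes v_j$; your projection bookkeeping is a basis-free version of the paper's four-case computation.

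The one genuine difference is scope. The paper only treats $d_\ast=d_0\le d_L$, where $\Sigma_{xx}=I_{d_0}$ and $B_o=I$, and never handles $d_0>d_L$. By keeping $B_o^{1/2}$ in the Gram matrix you cover both cases. You correctly find the bulk multiplicity $(d_\ast+d_L-2r)r$, which agrees with the main-text theorem. This shows the lemma's $(d_0+d_L-2r)r$ is right only when $d_0\le d_L$. For instance, with $r=d_L<d_0$ the stated count $d_0d_L$ of nonzero eigenvalues would exceed $\mathrm{rk}\,B_o=d_L^2$.

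Your alignment worry disappears in the case the paper proves. In the other case, the paper simply asserts in the proof of Lemma~\ref{lemma 2} that $\mathrm{span}(V)$ lies in the support of $\Sigma_{xx}$, which is exactly your fallback assumption.

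One small fix: the block should be $\Phi_l=W^{L:l+1}\otimes\Sigma_{xx}^{1/2}W^{1:l-1}$, of size $d_Ld_0\times d_ld_{l-1}$, with no transpose on the second factor. Your displayed $M$ already corresponds to this corrected $\Phi_l$.
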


\begin{proof}
    According to \cite{singh2021analytic}, for a deep linear network,
    \begin{align}
        H_o = A_o B_o A_o^\top
    \end{align}
    where $B_o = I_{d_L} \otimes \Sigma_{xx} \in \mathbb{R}^{d_L d_0 \times d_L d_0}$, and
    \begin{align}
        A_o^\top = \left( W^{L:2} \otimes I_{d_0}, \cdots, W^{L:l+1} \otimes W^{1:l-1}, \cdots, I_{d_L} \otimes W^{1:L-1} \right)
    \end{align}
    
    Under Assumption \ref{assumption 12}, $\Sigma_{xx} \simeq I_{d_*}$. When $d_* = d_0 \leq d_L$, we have $\Sigma_{xx} = I_{d_0}$, so $B_o = I_{d_L} \otimes I_{d_0} = I_{d_L d_0}$. In this case, the eigenvalues of $H_o = A_o A_o^\top$ are identical to those of $A_o^\top A_o$.
    
    Under balanced initialization (Initialization \ref{balance_init}) with Assumption \ref{assumption 5}, we have $\Sigma_0^{1/L} \simeq \mu I_r$. The weight matrices take the form:
    \begin{align}
        W^L \simeq U \Sigma_0^{1/L}, \quad W^{L-1} \simeq \Sigma_0^{1/L}, \quad \ldots, \quad W^2 \simeq \Sigma_0^{1/L}, \quad W^1 \simeq \Sigma_0^{1/L} V^\top
    \end{align}
    where $U \in \mathbb{R}^{d_L \times d_*}$ and $V \in \mathbb{R}^{d_0 \times d_*}$ have orthonormal columns. Since $\Sigma_0^{1/L} \simeq \mu I_r$ has only $r$ nonzero entries, let $U_1 \in \mathbb{R}^{d_L \times r}$ and $V_1 \in \mathbb{R}^{d_0 \times r}$ denote the first $r$ columns of $U$ and $V$, corresponding to the nonzero singular values.
    
    We first compute the weight matrix products. For $W^{L:2}$:
    \begin{align}
        W^{L:2} &= W^L W^{L-1} \cdots W^2 \simeq (U \Sigma_0^{1/L})(\Sigma_0^{1/L}) \cdots (\Sigma_0^{1/L}) = U (\Sigma_0^{1/L})^{L-1} \simeq \mu^{L-1} U_1
    \end{align}
    Therefore:
    \begin{align}
        W^{L:2}(W^{L:2})^\top = \mu^{2(L-1)} U_1 U_1^\top
    \end{align} 
    
    Similarly, for $W^{1:L-1}$:
    \begin{align}
        W^{1:L-1} &= (W^1)^\top (W^2)^\top \cdots (W^{L-1})^\top \simeq V ((\Sigma_0^{1/L})^\top)^{L-1} \simeq \mu^{L-1} V_1
    \end{align}
    Therefore:
    \begin{align}
        W^{1:L-1}(W^{1:L-1})^\top = \mu^{2(L-1)} V_1 V_1^\top
    \end{align}
    For $2 \leq l \leq L-1$:
    \begin{align}
        W^{L:l+1} &\simeq \mu^{L-l} U_1, \quad W^{1:l-1} \simeq \mu^{l-1} V_1
    \end{align}
    Therefore:
    \begin{align}
        W^{L:l+1}(W^{L:l+1})^\top &= \mu^{2(L-l)} U_1 U_1^\top, \quad W^{1:l-1}(W^{1:l-1})^\top = \mu^{2(l-1)} V_1 V_1^\top
    \end{align}
  
    Now we compute the Gram matrix $A_o^\top A_o$:
    \begin{align}
        A_o^\top A_o &= (W^{L:2} \otimes I_{d_0})(W^{L:2} \otimes I_{d_0})^\top + (I_{d_L} \otimes W^{1:L-1})(I_{d_L} \otimes W^{1:L-1})^\top \notag \\
        &\quad + \sum_{l=2}^{L-1} (W^{L:l+1} \otimes W^{1:l-1})(W^{L:l+1} \otimes W^{1:l-1})^\top
    \end{align}
    
    Using the Kronecker product property $(A \otimes B)(C \otimes D) = (AC) \otimes (BD)$:
\textcolor{red}{
\begin{align}
    (W^{L:2} \otimes I_{d_0})(W^{L:2} \otimes I_{d_0})^\top &= W^{L:2}(W^{L:2})^\top \otimes I_{d_0} =\mu^{2(L-1)} U_1 U_1^\top \otimes I_{d_0} \\
    (I_{d_L} \otimes W^{1:L-1})(I_{d_L} \otimes W^{1:L-1})^\top &= I_{d_L} \otimes W^{1:L-1}(W^{1:L-1})^\top = \mu^{2(L-1)} I_{d_L} \otimes V_1 V_1^\top \\
    (W^{L:l+1} \otimes W^{1:l-1})(W^{L:l+1} \otimes W^{1:l-1})^\top \notag 
    &= W^{L:l+1}(W^{L:l+1})^\top \otimes W^{1:l-1}(W^{1:l-1})^\top \notag \\
    &= \mu^{2(L-l)} U_1 U_1^\top \otimes \mu^{2(l-1)} V_1 V_1^\top \notag \\
    &= \mu^{2(L-1)} U_1 U_1^\top \otimes V_1 V_1^\top
\end{align}
}
    Substituting back:
    \begin{align}
        A_o^\top A_o &= \mu^{2(L-1)} U_1 U_1^\top \otimes I_{d_0} + \mu^{2(L-1)} I_{d_L} \otimes V_1 V_1^\top + (L-2) \mu^{2(L-1)} U_1 U_1^\top \otimes V_1 V_1^\top \notag \\
        &= \mu^{2(L-1)} \left[ U_1 U_1^\top \otimes I_{d_0} + I_{d_L} \otimes V_1 V_1^\top + (L-2) U_1 U_1^\top \otimes V_1 V_1^\top \right]
    \end{align}
    
    To analyze the eigenvalues, let $\{u_1, \ldots, u_r\}$ be the columns of $U_1$, extended to an orthonormal basis $\{u_1, \ldots, u_{d_L}\}$ for $\mathbb{R}^{d_L}$. Similarly, let $\{v_1, \ldots, v_r\}$ be the columns of $V_1$, extended to an orthonormal basis $\{v_1, \ldots, v_{d_0}\}$ for $\mathbb{R}^{d_0}$. The vectors $u_i \otimes v_j$ for $i = 1, \ldots, d_L$ and $j = 1, \ldots, d_0$ form an orthonormal basis for $\mathbb{R}^{d_L d_0}$.
    
    Note that $U_1 U_1^\top u_i = u_i$ if $i \leq r$ and $U_1 U_1^\top u_i = 0$ if $i > r$. Similarly, $V_1 V_1^\top v_j = v_j$ if $j \leq r$ and $V_1 V_1^\top v_j = 0$ if $j > r$. We compute the action of $A_o^\top A_o$ on these basis vectors for each case:
    
    \textbf{(Dominant space)} If $i \leq r$ and $j \leq r$:
    \begin{align}
        A_o^\top A_o (u_i \otimes v_j) &= \mu^{2(L-1)} \left[ (U_1 U_1^\top u_i) \otimes v_j + u_i \otimes (V_1 V_1^\top v_j) + (L-2) (U_1 U_1^\top u_i) \otimes (V_1 V_1^\top v_j) \right] \notag \\
        &= \mu^{2(L-1)} \left[ u_i \otimes v_j + u_i \otimes v_j + (L-2) u_i \otimes v_j \right] \notag \\
        &= L \mu^{2(L-1)} (u_i \otimes v_j)
    \end{align}

    This gives eigenvalue $L\mu^{2(L-1)}$ with multiplicity $r \times r = r^2$.
    
    \textbf{(Bulk space, part 1)} If $i \leq r$ and $j > r$:
    \begin{align}
        A_o^\top A_o (u_i \otimes v_j) &= \mu^{2(L-1)} \left[ (U_1 U_1^\top u_i) \otimes v_j + u_i \otimes (V_1 V_1^\top v_j) + (L-2) (U_1 U_1^\top u_i) \otimes (V_1 V_1^\top v_j) \right] \notag \\
        &= \mu^{2(L-1)} \left[ u_i \otimes v_j + u_i \otimes 0 + (L-2) u_i \otimes 0 \right] \notag \\
        &= \mu^{2(L-1)} (u_i \otimes v_j)
    \end{align}
    This gives eigenvalue $\mu^{2(L-1)}$ with multiplicity $r \times (d_0 - r) = r(d_0 - r)$.
    
    \textbf{(Bulk space, part 2)} If $i > r$ and $j \leq r$:
    \begin{align}
        A_o^\top A_o (u_i \otimes v_j) &= \mu^{2(L-1)} \left[ (U_1 U_1^\top u_i) \otimes v_j + u_i \otimes (V_1 V_1^\top v_j) + (L-2) (U_1 U_1^\top u_i) \otimes (V_1 V_1^\top v_j) \right] \notag \\
        &= \mu^{2(L-1)} \left[ 0 \otimes v_j + u_i \otimes v_j + (L-2) \cdot 0 \otimes v_j \right] \notag \\
        &= \mu^{2(L-1)} (u_i \otimes v_j)
    \end{align}
    This gives eigenvalue $\mu^{2(L-1)}$ with multiplicity $(d_L - r) \times r = (d_L - r)r$.
    
    \textbf{(Zero space)} If $i > r$ and $j > r$:
    \begin{align}
        A_o^\top A_o (u_i \otimes v_j) &= \mu^{2(L-1)} \left[ 0 \otimes v_j + u_i \otimes 0 + (L-2) \cdot 0 \otimes 0 \right] = 0
    \end{align}
    This gives eigenvalue $0$ with multiplicity $(d_L - r) \times (d_0 - r)$.
    
    Combining the two parts of the bulk space, the total multiplicity is:
    \begin{align}
        r(d_0 - r) + (d_L - r)r = (d_0 + d_L - 2r)r
    \end{align}
    
    We conclude that the outer-product Hessian has two distinct nonzero eigenvalues $L\mu^{2(L-1)}$ and $\mu^{2(L-1)}$, occurring with multiplicities $r^2$ and $(d_0 + d_L - 2r)r$, respectively. The total number of nonzero eigenvalues is $r^2 + (d_0 + d_L - 2r)r = (d_0 + d_L - r)r$.
\end{proof}

{\color{red}
After establishing the spectral structure of $H_o$ for a fixed weight matrices, we now turn to study the dynamic behavior of weight matrices. Lemma \ref{lemma 2} derives the update rules for the singular values and vectors, showing that the singular vectors remain invariant while the singular values manifest two distinct dynamic patterns.

\begin{lemma}
\label{lemma 2}
Under Assumptions \ref{assumption 12} and \ref{assumption 4}, the left and right singular vectors of the weight matrices $W_k$ for all layers $k \in \{1, \ldots, L\}$ remain invariant during training, regardless of the dimensions $d_0, d_L$, and $d_*$.
The dynamics of the singular values are as follows:
\begin{enumerate}
    \item For $i = 1, \ldots, r$, the singular values $\lambda_{i,t}$ follow the update rule:
    \begin{align}
        \lambda_{i,t+1} = \lambda_{i,t} - \eta \lambda_{i,t}^{2L-1} + \eta \lambda_{i,t}^{L-1} \label{eq:lambda_update}
    \end{align}
    \item The remaining $d_* - r$ singular values $\lambda_{i,t}$ for $i = r + 1, \ldots , d_*$ follow the update rule
    \begin{align}
        \lambda_{i,t+1} = \lambda_{i,t}-\eta\lambda_{i,t}^{2L-1}.
    \end{align}
\end{enumerate}
\end{lemma}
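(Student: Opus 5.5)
We will argue by induction on $t$, with the inductive hypothesis that the weights have the structured form of Lemma \ref{lemma shared spectral structure}: $W^L_t \simeq U\Sigma_t^{1/L}$, $W^k_t \simeq \Sigma_t^{1/L}$ for $1<k<L$, and $W^1_t \simeq \Sigma_t^{1/L}V^\top$, with $\Sigma_t^{1/L}=\mathrm{diag}(\lambda_{1,t},\dots,\lambda_{d_*,t})$. The base case $t=0$ is exactly Initialization \ref{balance_init}. For the inductive step we substitute this form into the update $W^l_{t+1}=W^l_t-\eta\,W_t^{l+1:L}(W_t^{L:1}\Sigma_{xx}-\Sigma_{yx})W_t^{1:l-1}$. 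We then show that the gradient term has the same padded form, with $U,V$ unchanged and a diagonal core. This gives invariance of the singular vectors and the scalar recursion at once.

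\textbf{Step 1: the end-to-end map and the residual.} Under the hypothesis, $W^{L:1}_t = U(\Sigma_t^{1/L})^L V^\top$. The interior padding blocks multiply to zero because each middle layer is $\Sigma_t^{1/L}$ padded into a $d_k\times d_{k-1}$ block with $d_k\ge d_*$. We write $V^\top$ for the $d_*\times d_0$ factor and take Assumption \ref{assumption 12} to mean $\Sigma_{xx}V=V$, i.e.\ the whitened support coincides with the span of $V$. Using Assumption \ref{assumption 4}, we get
\[
\Omega_t = W^{L:1}_t\Sigma_{xx}-\Sigma_{yx} = U\big(\Sigma_t-\mathcal I_r\big)V^\top ,
\]
where $\Sigma_t := (\Sigma_t^{1/L})^L = \mathrm{diag}(\lambda_{i,t}^L)$.

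\textbf{Step 2: the layerwise gradient.} For the prefix and suffix products we have $W^{l+1:L}_t \simeq (\Sigma_t^{1/L})^{L-l}U^\top$, viewed as a $d_l\times d_L$ padded matrix, and $W^{1:l-1}_t \simeq V(\Sigma_t^{1/L})^{l-1}$. Both use $U^\top U=I$, $V^\top V=I$ and the fact that $\Sigma_t^{1/L}$ is diagonal, hence symmetric. The two endpoint cases are the empty products: $l=L$ gives $I_{d_L}$, and $l=1$ gives $I_{d_0}$. The gradient for a middle layer is therefore
\[
\nabla_{W^l}L \simeq (\Sigma_t^{1/L})^{L-l}\big(\Sigma_t-\mathcal I_r\big)(\Sigma_t^{1/L})^{l-1},
\]
which is a diagonal matrix padded into the same block position as $W^l_t$. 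For $l=L$ the gradient is $U$ times this diagonal, and for $l=1$ it is this diagonal times $V^\top$. In every case the update preserves the form with the same $U,V$, and all layers receive the identical diagonal increment. Reading off the $i$-th diagonal entry gives
\[
\lambda_{i,t+1}=\lambda_{i,t}-\eta\,\lambda_{i,t}^{L-1}\big(\lambda_{i,t}^{L}-[\mathcal I_r]_{ii}\big).
\]
This is $\lambda_{i,t}-\eta\lambda_{i,t}^{2L-1}+\eta\lambda_{i,t}^{L-1}$ for $i\le r$, and $\lambda_{i,t}-\eta\lambda_{i,t}^{2L-1}$ for $i>r$. This completes the induction.

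\textbf{Main obstacle.} The diagonal bookkeeping is easy. The care goes into the padding and dimension cases, which the statement claims hold "regardless of $d_0,d_L,d_*$". When $d_*=d_0<d_L$, $U$ is tall; when $d_*=d_L<d_0$, $V$ is tall. In both cases the products $U^\top U$ and $V^\top V$ collapse to $I_{d_*}$. The structure survives only because the outer projections $UU^\top$ and $VV^\top$ are never needed. We also need $\Sigma_{xx}$ to act as the identity on $\mathrm{range}(V)$, and we will state this reading of Assumption \ref{assumption 12} explicitly. One more point: the singular values can be "negative" in the sense of sign, since the recursion may in principle change the sign of $\lambda_{i,t}$. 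We will only claim invariance of $U,V$ up to the sign of the diagonal entries, which is harmless here.
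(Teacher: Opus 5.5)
Your proposal is correct and follows essentially the same route as the paper. You substitute the padded form $W^L_t\simeq U\Sigma_t^{1/L}$, $W^k_t\simeq\Sigma_t^{1/L}$, $W^1_t\simeq\Sigma_t^{1/L}V^\top$ into the gradient and use $V^\top\Sigma_{xx}=V^\top$ together with $U^\top U=V^\top V=I_{d_*}$ to show that every layer's gradient is the same diagonal core $\Sigma_t^{(2L-1)/L}-\mathcal{I}_r\Sigma_t^{(L-1)/L}$, with only the fixed factor $U$ or $V^\top$ attached, and from this you read off the scalar recursion. Your version improves on the paper in two small ways: you make the induction over $t$ explicit, whereas the paper writes out only the step $t=0\to1$, and you state openly the requirement that $\mathrm{range}(V)$ lie in the support of $\Sigma_{xx}$, which the paper also uses but attributes implicitly to the initialization.
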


\begin{proof}
    Under Assumption \ref{assumption 12}, since $d_* \leq \min\{d_{L-1}, d_{L-2}, \ldots, d_1\}$, according to the balanced initialization procedure:
    \begin{align*}
        \forall \, 1 < k \leq L-1, \, k \in \mathbb{N}, \quad W^k_0 &\simeq \Sigma^{1/L} \\
        W^1_0 &\simeq \Sigma^{1/L} V^\top \\
        W^L_0 &\simeq U \Sigma^{1/L}
    \end{align*}
    
    Then we consider three cases.
    
    \begin{enumerate}
        \item For $1 < k < L$, considering the dynamic behavior between step $t = 0$ and step $t = 1$:
        \begin{align}
            \nabla_{W^k} L(W^k_0) &= W^{k+1:L}_0 (W^{L:1}_0 \Sigma_{xx} - \Sigma_{yx}) W^{1:k-1}_0 \notag \\
            &\simeq \Sigma^{(L-k)/L} U^\top (U \Sigma V^\top \Sigma_{xx} - \Sigma_{yx}) V \Sigma^{(k-1)/L} \label{eq 1}
        \end{align}
        
        Under Assumption \ref{assumption 12}, $\Sigma_{xx} \simeq I_r$. Since the balanced initialization chooses $V$ such that its column space lies within the support of $\Sigma_{xx}$, therefore, 
        \begin{align}
            V^\top \Sigma_{xx} = V^\top
        \end{align}
        which implies $U \Sigma V^\top \Sigma_{xx} = U \Sigma V^\top$.

        According to Assumption \ref{assumption 4}, $\Sigma_{yx} = U \mathcal{I}_rV^\top$, where $\mathcal{I}_r\in \mathbb{R}^{d_*\times d_*}$ (i.e, $\mathcal{I}_r = \text{diag}(\underbrace{1, \dots, 1}_{r}, \underbrace{0, \dots, 0}_{d_* - r})$). Note that $U^\top U = I_{d_*}$ and $V^\top V = I_{d_*}$. Substituting into \cref{eq 1}:
        \begin{align*}
            \nabla_{W^k} L(W^k_0) &\simeq \Sigma^{(L-k)/L} U^\top (U \Sigma V^\top - U V^\top) V \mathcal{I}_r\Sigma^{(k-1)/L} \\
            &= \Sigma^{(L-k)/L} U^\top U \left(\Sigma -  \mathcal{I}_r\right) V^\top V \Sigma^{(k-1)/L} \\
            &= \Sigma^{(L-k)/L} \left(\Sigma -  \mathcal{I}_r\right) \Sigma^{(k-1)/L} \\
            &= \Sigma^{(2L-1)/L} - \mathcal{I}_r\Sigma^{(L-1)/L}
        \end{align*}
        
        Since $\Sigma$ is a diagonal matrix, $\Sigma^{(2L-1)/L} - \mathcal{I}_r\Sigma^{(L-1)/L}$ is also diagonal. The update rule $W^k_{t+1} = W^k_t - \eta \nabla_{W^k} L(W^k_t)$ only changes the diagonal elements.

        
        For example, suppose  
\begin{equation}
    W^k_0 \simeq \Sigma^{1/L} = 
    \begin{bmatrix} 
        \lambda_{1,0} & & 0 \\
         & \ddots & \\
        0 & & \lambda_{d_*,0}
    \end{bmatrix}
\end{equation}
Then for the update rule $W^k_{t+1} = W^k_t - \eta \nabla_{W^k} L(W^k_t)$:
\begin{equation}
    W^k_1 \simeq
    \left[
    \begin{array}{c|c}
        \begin{matrix} 
            \lambda_{1,0} - \eta \lambda_{1,0}^{2L-1} + \mathbf{\eta \lambda_{1,0}^{L-1}} & & 0 \\
             & \ddots & \\
            0 & & \lambda_{r,0} - \eta \lambda_{r,0}^{2L-1} + \mathbf{\eta \lambda_{r,0}^{L-1}}
        \end{matrix} & \mathbf{0} \\
        \hline
        \mathbf{0} & \begin{matrix} 
            \lambda_{r+1,0} - \eta \lambda_{r+1,0}^{2L-1} & & 0 \\
             & \ddots & \\
            0 & & \lambda_{d_*,0} - \eta \lambda_{d_*,0}^{2L-1}
        \end{matrix}
    \end{array}
    \right].
\end{equation}

        Thus the $i$-th singular value follows the dynamic
        \begin{equation} \label{eq2}
    \lambda_{i,t+1} = 
    \begin{cases}
        \lambda_{i,t} - \eta \lambda_{i,t}^{2L-1} + \eta \lambda_{i,t}^{L-1} & \text{if } 1 \le i \le r, \\[10pt]
       \lambda_{i,t} - \eta \lambda_{i,t}^{2L-1} & \text{if } r < i \le d_*.
    \end{cases}
\end{equation}
        and both the left and right singular vectors remain invariant.
        
        \item For $k = 1$, considering the dynamic behavior between step $t = 0$ and step $t = 1$:
        \begin{align*}
            \nabla_{W^1} L(W^1_0) &= W^{2:L}_0 (W^{L:1}_0 \Sigma_{xx} - \Sigma_{yx}) \\
            &\simeq \Sigma^{(L-1)/L} U^\top (U \Sigma V^\top - U \mathcal{I}_rV^\top) \\
            &= \Sigma^{(L-1)/L} U^\top U (\Sigma - \mathcal{I}_r) V^\top \\
            &= \Sigma^{(L-1)/L} (\Sigma - \mathcal{I}_r) V^\top \\
            &= (\Sigma^{(2L-1)/L} - \mathcal{I}_r\Sigma^{(L-1)/L}) V^\top
        \end{align*}
        
        For the update rule $W^1_{t+1} = W^1_t - \eta \nabla_{W^1} L(W^1_t)$, since $W^1_0 \simeq \Sigma^{1/L} V^\top$:
        \begin{align*}
            W^1_1 &\simeq \Sigma^{1/L} V^\top - \eta (\Sigma^{(2L-1)/L} - \mathcal{I}_r\Sigma^{(L-1)/L}) V^\top \\
            &= \left[ \Sigma^{1/L} - \eta (\Sigma^{(2L-1)/L} - \mathcal{I}_r\Sigma^{(L-1)/L}) \right] V^\top
        \end{align*}
        
        We can separate $V^\top$, thus the singular values also follow \cref{eq2} and both the left and right singular vectors (determined by $V^\top$) remain invariant.
        
        \item For $k = L$, considering the dynamic behavior between step $t = 0$ and step $t = 1$:
        \begin{align*}
            \nabla_{W^L} L(W^L_0) &= (W^{L:1}_0 \Sigma_{xx} - \Sigma_{yx}) W^{1:L-1}_0 \\
            &\simeq (U \Sigma V^\top - U \mathcal{I}_rV^\top) V \Sigma^{(L-1)/L} \\
            &= U (\Sigma - \mathcal{I}_r) V^\top V \Sigma^{(L-1)/L} \\
            &= U (\Sigma - \mathcal{I}_r) \Sigma^{(L-1)/L} \\
            &= U (\Sigma^{(2L-1)/L} -\mathcal{I}_r \Sigma^{(L-1)/L})
        \end{align*}
        For the update rule $W^L_{t+1} = W^L_t - \eta \nabla_{W^L} L(W^L_t)$, since $W^1_0 \simeq U\Sigma^{1/L} $:
        \begin{align*}
            W^L_1 &\simeq U\Sigma^{1/L}  - \eta U(\Sigma^{(2L-1)/L} - \mathcal{I}_r\Sigma^{(L-1)/L}) \\
            &= U\left[ \Sigma^{1/L} - \eta (\Sigma^{(2L-1)/L} - \mathcal{I}_r\Sigma^{(L-1)/L}) \right] 
        \end{align*}
        We can separate $U$, thus the singular values also follow \cref{eq2} and both the left and right singular vectors (determined by $U$) remain invariant.
    \end{enumerate}
    Combining these cases completes the proof.
\end{proof}
}

{\color{red}
The update rules derived in \cref{lemma 2} indicate that the two singular value components evolve differently. The following lemma reveals that they exhibit distinct asymptotic behavior as well.
\begin{lemma}
\label{lemma eigenvalues convergence}
    Under Assumption \ref{assumption 12} and \ref{assumption 4}, for initialization $\lambda_{i,0} \in (0, M]$, $i = 1, 2, \ldots, r$, where $M = \max_{i = 1, 2, \ldots, r}\{1, \lambda_{i,0}\}$ and proper $\eta < \min\left\{\frac{1}{L}, \frac{1}{M^{2L-2}}\right\}$, the asymptotic behavior of $\lambda_{i,t}$ will converge to $1$ for all $i = 1, 2, \ldots, r$, i.e., $\lim_{t \to \infty} \lambda_{i,t} = 1$. The remaining $d_* - r$ singular values $\lambda_{i,t}$ for $i = r+1, \ldots, d_*$ converge to 0, i.e., $\lim_{t \to \infty} \lambda_{i,t} = 0$. 
\end{lemma}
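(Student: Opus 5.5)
The plan is to reduce the claim to two scalar recursions. By Lemma~\ref{lemma 2}, the first $r$ singular values evolve under
\[
f(x) = x + \eta x^{L-1}(1-x^L),
\]
and the remaining $d_*-r$ evolve under
\[
g(x) = x\bigl(1-\eta x^{2L-2}\bigr).
\]
For each map I will show that the orbit stays positive and is eventually monotone and bounded. It therefore converges, and the limit must be a positive fixed point (for $f$) or a nonnegative fixed point (for $g$). These fixed points are identified directly. For $f$, a fixed point in $(0,\infty)$ satisfies $x^{L-1}(1-x^L)=0$, which forces $x=1$. For $g$, a fixed point in $[0,\infty)$ satisfies $x^{2L-1}=0$, which forces $x=0$.

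For $f$, the key algebraic identity is
\[
f(x)-1=(1-x)\Bigl[\eta x^{L-1}\sum_{k=0}^{L-1}x^k-1\Bigr].
\]
If $0<x<1$, the sum is at most $L$, so the bracket is at most $\eta L-1<0$ because $\eta<1/L$. Hence $f(x)<1$. Also $f(x)>x$, since $x^{L-1}(1-x^L)>0$. So once an iterate lies in $(0,1)$, the sequence is increasing and bounded above by $1$, and it converges to the unique positive fixed point $1$.

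If $1\le x\le M$, then $f(x)\le x$. Moreover, writing $f(x)=x\bigl(1-\eta x^{L-2}(x^L-1)\bigr)$ and using $x^{L-2}(x^L-1)<x^{2L-2}\le M^{2L-2}$ together with $\eta<1/M^{2L-2}$, we get $f(x)>0$. Thus iterates starting in $[1,M]$ stay in $(0,M]$ and are nonincreasing as long as they remain $\ge 1$. Two cases remain. Either the sequence stays $\ge1$ forever, so it is nonincreasing and bounded below and converges to a fixed point $\ge1$, which must be $1$. Or at some step it falls into $(0,1)$, after which the previous paragraph applies. The case $x=1$ is trivially fixed. An induction on $t$ then gives $\lambda_{i,t}\in(0,M]$ for all $t$ and $\lambda_{i,t}\to1$ for $i\le r$.

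For $g$, if $0<x$ and $\eta x^{2L-2}<1$, then $0<g(x)<x$. The sequence is therefore decreasing and positive, so it converges, and the only nonnegative fixed point is $0$. The hypothesis $\eta x^{2L-2}<1$ persists along the orbit because the iterates decrease. The one point needing care is that the initial values $\lambda_{i,0}$ for $i>r$ must also satisfy $\eta\lambda_{i,0}^{2L-2}<1$. I will take $M$ to bound these as well, i.e.\ read the step-size condition as covering all $d_*$ initial singular values, or else add this as a mild hypothesis.

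The main obstacle I expect is the overshoot region $x>1$. There a naive contraction argument $|f(x)-1|\le c|x-1|$ with $c<1$ would need $\eta L M^{2L-2}<2$, which the stated step size does not guarantee. The case split above sidesteps this. It uses only positivity of $f$ on $(0,M]$, which the condition $\eta<1/M^{2L-2}$ provides, and the fact that once the orbit crosses below $1$ it can never cross back, which $\eta<1/L$ provides.
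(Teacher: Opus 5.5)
Your proof is correct and follows essentially the same route as the paper. Both arguments run an induction keeping $\lambda_{i,t}\in(0,M]$ with a case split at $1$: $\eta<1/L$ rules out overshooting $1$ from below, and $\eta<1/M^{2L-2}$ keeps iterates positive from above. Both then finish with monotone convergence and identification of the fixed point; your factorization of $f(x)-1$ is a cleaner version of the paper's contrapositive inequality.

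Your caveat for $i>r$ is well taken, since the paper just asserts the lower bound $0$ there without justification. The condition does hold automatically if the SVD singular values are ordered decreasingly, because then $\lambda_{i,0}\le\lambda_{1,0}\le M$ for $i>r$.
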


\begin{proof}
    According to Lemma \ref{lemma 2}, the gradient of the loss with respect to $W^k$ involves $\Sigma_{xx}$. Under Assumption \ref{assumption 12}, $\Sigma_{xx} \simeq I_r$, which means only the first $r$ directions of the input space contribute to the gradient.
    
    More precisely, for $1 < k < L$, the gradient is:
    \begin{align*}
        \nabla_{W^k} L(W^k_t) &= W^{k+1:L}_t (W^{L:1}_t \Sigma_{xx} - \Sigma_{yx}) W^{1:k-1}_t
    \end{align*}
    
    Since $\Sigma_{xx} \simeq I_r$ and the balanced initialization chooses $V$ such that its first $r$ rows $V_1 \in \mathbb{R}^{r \times d_*}$ satisfy $V_1^\top V_1 = I_r$ (with $V_2 = \mathbf{0}$), the effective gradient only updates the first $r$ eigenvalues of $\Sigma^{1/L}$.
    
    For the first $r$ eigenvalues, let $i = 1, 2, \ldots, r$ be fixed and arbitrary. The update rule is:
    \begin{align}
        \lambda_{i,t+1} = \lambda_{i,t} - \eta \lambda_{i,t}^{2L-1} + \eta \lambda_{i,t}^{L-1} \label{eq: eigenvalue update}
    \end{align}
    
    Now we prove convergence for $i = 1, 2, \ldots, r$. First, we show that for all $t \in \mathbb{N}$ and $i = 1, 2, \ldots, r$, $\lambda_{i,t}$ is bounded by $(0, M]$. The initialization satisfies this condition. For induction, assume $\lambda_{i,t} \in (0, M]$. We consider the following cases:
    
    \textbf{Case 1: $\lambda_{i,t} \in (0, 1)$.} In this regime, the update rule \cref{eq: eigenvalue update} implies $\lambda_{i,t+1} \geq \lambda_{i,t} > 0$ since $0 < \lambda_{i,t} \leq 1$.
    
    If $\lambda_{i,t+1} > 1$, it implies: \tianyu{ We better mention the definition of  $M $}
    \begin{align*}
        \eta > \frac{1 - \lambda_{i,t}}{\lambda_{i,t}^{L-1}(1 - \lambda_{i,t}^L)}
    \end{align*}
    
    Note that $1 - \lambda_{i,t}^L = (1 - \lambda_{i,t})(1 + \lambda_{i,t} + \lambda_{i,t}^2 + \cdots + \lambda_{i,t}^{L-1})$. Using the fact $\lambda_{i,t} < 1$, therefore $\eta > \frac{1 - \lambda_{i,t}}{\lambda_{i,t}^{L-1}(1 - \lambda_{i,t}^L)}$ implies:
    \begin{align*}
        \eta > \frac{1}{\lambda_{i,t}^{L-1}(1 + \lambda_{i,t} + \lambda_{i,t}^2 + \cdots + \lambda_{i,t}^{L-1})} \geq \frac{1}{L}
    \end{align*}
    
    Since $\eta < \min\left\{\frac{1}{L}, \frac{1}{M^{2L-2}}\right\}$ and $M = \max_{i = 1, 2, \ldots, r}\{1, \lambda_{i,0}\}$, thus $0 < \lambda_{i,t+1} \leq 1\leq M$ as well. 
    
    \textbf{Case 2: $\lambda_{i,t}=1 $.} In this case, the update rule \cref{eq: eigenvalue update} implies that $\lambda_{i,t+k}=1 $ for all $k\in \mathbb{N}$. Therefore, $0<\lambda_{i,t+1}\leq M$ also holds.

    \textbf{Case 3: $\lambda_{i,t} \in (1, M]$.} In this case, the update rule \cref{eq: eigenvalue update} implies $\lambda_{i,t+1} < \lambda_{i,t}$ since $\lambda_{i,t} > 1$ and $L \geq 1$. Using the induction assumption $\lambda_{i,t} \leq M$ yields $\lambda_{i,t+1} \leq M$ as well. Moreover, since $\eta < \min\left\{\frac{1}{L}, \frac{1}{M^{2L-2}}\right\} \leq \frac{1}{M^{2L-2}}$:
    \begin{align*}
        \lambda_{i,t+1} &= \lambda_{i,t}(1 - \eta \lambda_{i,t}^{2L-2} + \eta \lambda_{i,t}^{L-2}) 
        > \lambda_{i,t}(1 - \eta \lambda_{i,t}^{2L-2}) \geq \lambda_{i,t}(1 - \eta M^{2L-2}) > 0
    \end{align*}
    
    Therefore, in both cases, we have shown $\lambda_{i,t+1} \in (0, M]$. By the principle of induction, it can be concluded that for all $t \in \mathbb{N}$, $\lambda_{i,t}$ is bounded by $(0, M]$.
    
    Next we will prove $\lim_{t \to \infty} \lambda_{i,t} = 1$ for all $i=1,2,\cdots,r$. 
    
    If there exists some $t\in \mathbb{N}$ such that $\lambda_{i,t}=1 $. Then in this case, the update rule \cref{eq: eigenvalue update} implies that $\lambda_{i,t+k}=1 $ for all $k\in \mathbb{N}$. Therefore, $\lim_{t \to \infty} \lambda_{i,t} = 1$. 

    Then suppose that for all $t\in \mathbb{N}, \,\lambda_{i,t}\neq 1$, Based on the initial value $\lambda_{i,0}$, we consider the following two cases:

    \textbf{Case 1: The sequence enters or starts in $(0, 1)$.}
    Suppose there exists some time $T \ge 0$ such that $\lambda_{i,T} \in (0, 1)$. Then, as shown above, $\lambda_{i,t} \in (0, 1)$ for all $t \ge T$.
    Therefore, the sequence $\{\lambda_{i,t}\}_{t=T}^\infty$ is non-decreasing and has a upper bound $1$. By the Monotone Convergence Theorem, it converges to a limit $\lambda^* \in [0, 1]$.
    Taking the limit on both sides of \cref{eq: eigenvalue update} yields:
    \begin{align}
        \lambda^* = \lambda^* + \eta (\lambda^*)^{L-1}(1 - (\lambda^*)^L).
    \end{align}
    which implies that $\lambda^*=0$ or $\lambda^*=1$. Since $\lambda_{i,T}>0$ and the sequence $\{\lambda_{i,t}\}_{t=T}^\infty$ is non-decreasing, $\lambda^*=0$ is impossible. Therefore $\lambda^* = 1$, that is: $\lim_{t\to \infty} \lambda_{i,t}=1$.

    \textbf{Case 2: The sequence never enters $(0, 1)$.}
    Suppose that for all $t \in \mathbb{N}$, $\lambda_{i,t} \in (1, M]$. 
    In this regime, the sequence $\{\lambda_{i,t}\}_{t=1}^\infty$ is strictly decreasing and has a lower bound below by $1$. Similarly, by the Monotone Convergence Theorem, it can be shown that $\{\lambda_{i,t}\}_{t=1}^\infty$ converges to a limit $1$.
    
    Therefore, for all cases, for any initialization $\lambda_{i,0} \in (0, M]$, the sequence converges to $1$. Since $i$ is arbitrary, it holds for all $i = 1, \ldots, r$.

    Finally, let $i = r+1, \ldots, d_*$ be fixed and arbitrary, we will show $\lambda_{i,t}$ converges to 0, i.e., $\lim_{t \to \infty} \lambda_{i,t} = 0$.

    According to Lemma \ref{lemma 2}, the update rule \cref{eq2} is:
    \begin{align}
     \lambda_{i,t+1}=\lambda_{i,t} - \eta \lambda_{i,t}^{2L-1}
    \end{align}
    Therefore, the sequence $\{\lambda_{i,t}\}$ is strictly decreasing and has a lower bound below by $0$. Applying Monotone Convergence Theorem again shows it converges to a limit, denoted by $\tilde{\lambda}$. Then, taking limit for both sides of update rule \cref{eq2} simultaneously yields
    \begin{align*}
        \tilde{\lambda} = \tilde{\lambda} - \eta (\tilde{\lambda})^{2L-1}
    \end{align*} 
    Since $L\geq 1$, we can conclude that $\tilde{\lambda}'=0$, which implies $\lim_{t\to \infty}\lambda_{i,t}=0$.

    Since $i$ is arbitrary, it holds for all $i = r+1, \ldots, d_*$, which completes the proof.
\end{proof}
}

{\color{red}
To establish a linear convergence rate for the loss, we need a uniform lower bound for the first $r$-th singular values. The following lemma establishes that the the first $r$-th singular values are strictly bounded away from zero during training
\begin{lemma}
\label{lemma eigenvalues lower bound}
     Under Assumption \ref{assumption 12} and \ref{assumption 4}, for initialization $\lambda_{i,0} \in (0, M]$, $i = 1, 2, \ldots, r$, where $M = \max_{i = 1, 2, \ldots, r}\{1, \lambda_{i,0}\}$. Suppose the step size satisfies $\eta < \min \left\{ \frac{1}{L}, \frac{1}{M^{2L-2}} \right\}$, then the sequence $\{\lambda_{i,t}\}_{t=0}^\infty$ is strictly bounded away from zero. That is, there exists a constant $C_{\min} > 0$ such that $\lambda_{i,t} \geq C_{\min}$ for all $t \in \mathbb{N}$ and $i = 1, 2, \ldots, r$. Furthermore, for all $t \in \mathbb{N}$, $\min_{i = 1, \ldots, r} (\lambda_{i,t}^{2L-2})$ has a positive lower bound $\alpha > 0$ that is independent of $t$.
\end{lemma}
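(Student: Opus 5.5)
The approach is to combine the invariance established in the proof of Lemma \ref{lemma eigenvalues convergence}, namely $\lambda_{i,t}\in(0,M]$ for all $t$, with a one-step lower bound near zero and the convergence $\lambda_{i,t}\to 1$. We fix $i\in\{1,\dots,r\}$ and write the update \eqref{eq: eigenvalue update} as
\[
\lambda_{i,t+1}=\lambda_{i,t}+\eta\lambda_{i,t}^{L-1}\bigl(1-\lambda_{i,t}^{L}\bigr).
\]
We then split into the two regimes already used in that proof.

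\textbf{Regime $\lambda_{i,t}\le 1$.} Here the increment $\eta\lambda_{i,t}^{L-1}(1-\lambda_{i,t}^L)$ is nonnegative, so $\lambda_{i,t+1}\ge\lambda_{i,t}$. The proof of Lemma \ref{lemma eigenvalues convergence} also shows that the iterate stays in $(0,1]$ once it enters that interval. Hence along any such stretch the sequence is non-decreasing and bounded below by its value at entry.

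\textbf{Regime $\lambda_{i,t}\in(1,M]$.} The iterate decreases, but it satisfies
\[
\lambda_{i,t+1}>\lambda_{i,t}\bigl(1-\eta\lambda_{i,t}^{2L-2}\bigr)\ge \lambda_{i,t}\bigl(1-\eta M^{2L-2}\bigr)>1-\eta M^{2L-2}=:c_0 .
\]
Here $c_0>0$ by the step-size condition $\eta<1/M^{2L-2}$. So whenever the iterate leaves $(1,M]$, it lands in $[c_0,1]$, and from then on it stays at least $c_0$ by the first regime.

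Combining the two regimes, every iterate satisfies $\lambda_{i,t}\ge \min\{\lambda_{i,0},\,c_0,\,1\}$. Indeed, either the sequence stays above $1$ forever, or it starts in $(0,1]$ and is non-decreasing, or it drops from $(1,M]$ into $[c_0,1]$ and then is non-decreasing. We set
\[
C_{\min}:=\min\Bigl\{\min_{i\le r}\lambda_{i,0},\;1-\eta M^{2L-2}\Bigr\}>0,
\]
which depends only on the initialization, $\eta$, $M$ and $L$, and not on $t$. The second claim then follows by monotonicity of $x\mapsto x^{2L-2}$ on $(0,\infty)$: we take $\alpha:=C_{\min}^{2L-2}$, which gives $\min_{i\le r}\lambda_{i,t}^{2L-2}\ge\alpha$ for all $t$.

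The main subtlety is that the case analysis must not break when an iterate crosses from above $1$ to below $1$. I would handle this by the explicit one-step bound $\lambda_{i,t+1}>\lambda_{i,t}(1-\eta M^{2L-2})$, applied at the crossing step. I would also note that the case $L=1$, where $\lambda^{L-1}=1$, is harmless, since the same inequalities hold.
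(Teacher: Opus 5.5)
Your argument is correct, and it takes a different route from the paper. The paper does no case analysis here. It invokes the limit $\lambda_{i,t}\to 1$ from Lemma \ref{lemma eigenvalues convergence} and picks $T$ such that $\lambda_{i,t}>1/2$ for all $t\ge T$ and all $i\le r$. It then sets $C_{\min}=\min\{1/2,\ \min_{i\le r,\,t<T}\lambda_{i,t}\}$, which is positive as a minimum of finitely many positive numbers, using only the positivity part of the invariance $\lambda_{i,t}\in(0,M]$.

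You never use the limit itself, so your first sentence's appeal to convergence is unnecessary. You rely only on two invariance facts from that lemma's proof: iterates stay in $(0,M]$, and once in $(0,1]$ they stay there and do not decrease. To these you add the one-step bound $\lambda_{i,t+1}>\lambda_{i,t}(1-\eta M^{2L-2})$ at the single possible crossing from above $1$ to below $1$.

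Your route gives an explicit constant, $\alpha=\bigl(\min\{\min_{i\le r}\lambda_{i,0},\,1-\eta M^{2L-2}\}\bigr)^{2L-2}$, depending only on the initialization, $\eta$, $M$ and $L$. The paper's $C_{\min}$ depends on an unquantified time $T$, so the rate $e^{-2L\alpha\eta t}$ in Lemma \ref{lemma:loss_convergence} stays non-explicit there. The paper's argument is shorter once the convergence lemma is available, but it is purely qualitative.
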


\begin{proof}
    According to \cref{lemma eigenvalues convergence}, for all $i =1, \ldots, r$, the sequence $\{\lambda_{i,t}\}$ converges to $1$, i.e., $\lim_{t \to \infty} \lambda_{i,t} = 1$.
    
    By the definition of the limit, taking $\epsilon = \frac{1}{2}$, there exists $T_i > 0$ such that for all $t \geq T_i$, we have
    \begin{align}
        |\lambda_{i,t} - 1| < \frac{1}{2} 
    \end{align}
    which implies $\lambda_{i,t} > \frac{1}{2}$.

    Recall that as shown in \cref{lemma eigenvalues convergence}, $\lambda_{i,t} > 0$ holds for all $t\in \mathbb{N}$ and $i=1,2,\cdots, r$. Let $T = \max_{i=1,\ldots,r} T_i$. Note that T is finite, thus, the set 
    \begin{align}
        S_{\text{finite}} = \{ \lambda_{i,t} \mid 1 \le i \le r, \, 0 \le t < T \}.
    \end{align}
    is also finite, which implies $c_{\text{finite}} = \min(S_{\text{finite}})> 0$. 
    
    Let $C_{\min} = \min \left\{ c_{\text{finite}}, \frac{1}{2} \right\}$. It follows that $C_{\min} > 0$ and $\lambda_{i,t} \geq C_{\min}$ for all $t\in \mathbb{N}$ and $i=1,2,\cdots r$.
    
    Consequently, $\min_{i=1,\ldots,r} (\lambda_{i,t}^{2L-2}) \ge C_{\min}^{2L-2}:=\alpha > 0$ and $\alpha$ is independent of $t$, which completes the proof.
\end{proof}
}

{\color{red}
With the dynamics of the weights and $H_o$ analyzed, we proceed to consider the functional Hessian $H_f$. Before the analysis, we introduce a useful inequality.
}
\begin{lemma}
\label{lemma:block_matrix_norm}
Let $H_f$ be a block matrix partitioned into blocks ${\color{red}H_f^{kl}}$, where $k, l \in \{1, \dots, L\}$. Then the spectral norm of $H_f$ satisfies
\begin{equation*}
    \|H_f\|_2 \le \sqrt{\sum_{k,l} \|{\color{red}H_f^{kl}}\|_2^2}.
\end{equation*}
\end{lemma}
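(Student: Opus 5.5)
Write $n = \sum_l d_l d_{l-1}$ and let $x = (x^1, \ldots, x^L)$ be any unit vector in $\mathbb{R}^n$, partitioned conformally with the blocks $H_f^{kl}$, so that $\sum_k \|x^k\|_2^2 = 1$. The plan is to bound $\|H_f x\|_2$ blockwise and then use Cauchy--Schwarz twice; the spectral norm is then the supremum over such $x$. No symmetry of $H_f$ is needed.

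\emph{Step 1 (blockwise bound).} The $k$-th block of $H_f x$ is $\sum_l H_f^{kl} x^l$. By the triangle inequality and the definition of the operator norm,
\begin{equation*}
\Big\|\sum_l H_f^{kl} x^l\Big\|_2 \le \sum_l \|H_f^{kl}\|_2\,\|x^l\|_2 .
\end{equation*}

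\emph{Step 2 (Cauchy--Schwarz over $l$).} Applying Cauchy--Schwarz to the sum in $l$ gives
\begin{equation*}
\sum_l \|H_f^{kl}\|_2\,\|x^l\|_2 \le \Big(\sum_l \|H_f^{kl}\|_2^2\Big)^{1/2}\Big(\sum_l \|x^l\|_2^2\Big)^{1/2} = \Big(\sum_l \|H_f^{kl}\|_2^2\Big)^{1/2}.
\end{equation*}

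\emph{Step 3 (sum over $k$).} Squaring and summing over $k$,
\begin{equation*}
\|H_f x\|_2^2 = \sum_k \Big\|\sum_l H_f^{kl}x^l\Big\|_2^2 \le \sum_{k,l}\|H_f^{kl}\|_2^2 .
\end{equation*}
Taking the supremum over unit vectors $x$ yields $\|H_f\|_2 \le \big(\sum_{k,l}\|H_f^{kl}\|_2^2\big)^{1/2}$.

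An equivalent route is to note that $\|H_f\|_2 = \|M\|_2$-type bounds hold for the $L\times L$ scalar matrix $M = (\|H_f^{kl}\|_2)_{k,l}$: one has $\|H_f\|_2 \le \|M\|_2$, and $\|M\|_2 \le \|M\|_F$, which gives the same result.

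Every step is routine. The only point requiring care is keeping the block partition of $x$ consistent with the column partition of $H_f$, so that the indices in Step 1 match those in Step 2.
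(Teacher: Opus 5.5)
Your proof is correct and matches the paper's argument step for step: partition a unit vector conformally, bound each output block by the triangle inequality and operator-norm compatibility, apply Cauchy--Schwarz over $l$ using $\sum_l \|x^l\|_2^2 = 1$, sum the squares over $k$, and take the supremum. Your side remark via the scalar matrix $M = (\|H_f^{kl}\|_2)_{k,l}$ is also valid, and is best stated simply as $\|H_f\|_2 \le \|M\|_2 \le \|M\|_F$, but the main argument does not need it.
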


\begin{proof}
For any unit vector $\mathbf{x}$ with $\|\mathbf{x}\|_2=1$, we partition $\mathbf{x}$ conformably with the blocks of $H_f$ as $\mathbf{x} = [\mathbf{x}_1^\top, \dots, \mathbf{x}_L^\top]^\top$. The condition $\|\mathbf{x}\|_2=1$ implies that $\sum_{l=1}^L \|\mathbf{x}_l\|_2^2 = 1$.

Let $\mathbf{y} = H_f \mathbf{x}$. The $k$-th block of $\mathbf{y}$, denoted by $\mathbf{y}_k$, is given by:
\begin{equation*}
    \mathbf{y}_k = \sum_{l=1}^L {\color{red}H_f^{kl}} \, \mathbf{x}_l.
\end{equation*}

By the triangle inequality and the compatibility of the spectral norm, the norm of $\mathbf{y}_k$ is bounded by:
\begin{equation*}
    \|\mathbf{y}_k\|_2 = \left\| \sum_{l=1}^L {\color{red}H_f^{kl}} \, \mathbf{x}_l \right\|_2 \le \sum_{l=1}^L \|{\color{red}H_f^{kl}} \, \mathbf{x}_l\|_2 \le \sum_{l=1}^L \|{\color{red}H_f^{kl}}\|_2 \|\mathbf{x}_l\|_2.
\end{equation*}

We compute the squared norm of the entire vector $\mathbf{y}$:
\begin{equation*}
    \|H_f \mathbf{x}\|_2^2 = \|\mathbf{y}\|_2^2 = \sum_{k=1}^L \|\mathbf{y}_k\|_2^2 \le \sum_{k=1}^L \left( \sum_{l=1}^L \|{\color{red}H_f^{kl}}\|_2 \|\mathbf{x}_l\|_2 \right)^2.
\end{equation*}

Applying the Cauchy-Schwarz inequality to the inner sum:
\begin{align*}
    \left( \sum_{l=1}^L \|{\color{red}H_f^{kl}}\|_2 \|\mathbf{x}_l\|_2 \right)^2 &\le \left( \sum_{l=1}^L \|{\color{red}H_f^{kl}}\|_2^2 \right) \left( \sum_{l=1}^L \|\mathbf{x}_l\|_2^2 \right) 
    = \left( \sum_{l=1}^L \|{\color{red}H_f^{kl}}\|_2^2 \right) \cdot 1.
\end{align*}

Substituting this back:
\begin{equation*}
    \|H_f \mathbf{x}\|_2^2 \le \sum_{k=1}^L \sum_{l=1}^L \|{\color{red}H_f^{kl}}\|_2^2.
\end{equation*}

Since the spectral norm is defined as $\|H_f\|_2 = \sup_{\|\mathbf{x}\|_2=1} \|H_f \mathbf{x}\|_2$, taking the square root of both sides yields the desired upper bound.
\end{proof}

\tianyu{I think it will be better for us   to  mention the specific form of $H_f$ here to help people recall.}
{\color{red}
After introducing the inequality, we begin to analyze the functional Hessian $H_f$. The following \cref{lemma H_f decay with loss} bounds the 2-norm of the functional Hessian $\|H_f\|_2$ in terms of the loss.
}
\begin{lemma}
\label{lemma H_f decay with loss}
    Under Assumption \ref{assumption 12} and \ref{assumption 4}, when $L_W(\mathbf{x}, \mathbf{y}) = \frac{1}{2} \|\mathbf{y} - \hat{\mathbf{y}}\|_2^2 < \epsilon$, the 2-norm of functional Hessian $\|H_f\|_2$ is bounded by
    \begin{align*}
        \|H_f\|_2 \leq \|\Sigma_t^{1/L}\|_2^{L-2} \sqrt{2L(L-1)r} \, \epsilon^{1/2} = O(\epsilon^{\frac{1}{2}})
    \end{align*}
\end{lemma}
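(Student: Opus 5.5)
We bound $\|H_f\|_2$ in three steps: write down the blocks of $H_f$ explicitly, bound each block by the residual matrix times the intermediate weights, and then combine the blocks with Lemma \ref{lemma:block_matrix_norm}.

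\textbf{Block form of $H_f$.} Following \cite{singh2021analytic}, for the squared loss the functional Hessian of a deep linear network has zero diagonal blocks, $H_f^{kk}=0$. For $k<l$ the off-diagonal block is
\begin{align*}
H_f^{kl} = W^{k+1:l-1}\otimes \big(W^{1:k-1}\,\Omega^\top\, W^{L:l+1}\big)
\end{align*}
(up to the row-vectorization permutation convention), and $H_f^{lk}=(H_f^{kl})^\top$. The factor $\Omega=\mathbb{E}[\boldsymbol{\delta}_{\mathbf{x},\mathbf{y}}\mathbf{x}^\top]=W^{L:1}\Sigma_{xx}-\Sigma_{yx}$ is the only place the residual enters. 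This gives exactly $L(L-1)$ nonzero blocks, which accounts for the factor $L(L-1)$ in the statement.

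\textbf{Bounding each block.} For a Kronecker product we have $\|A\otimes B\|_2=\|A\|_2\|B\|_2$. By Lemma \ref{lemma shared spectral structure}, each factor $W^j_t$ has spectral norm at most $\|\Sigma_t^{1/L}\|_2$, since $U$ and $V$ have orthonormal columns. The block $H_f^{kl}$ contains $(l-k-1)+(k-1)+(L-l)=L-2$ weight factors besides $\Omega$. Hence
\begin{align*}
\|H_f^{kl}\|_2\le \|\Sigma_t^{1/L}\|_2^{L-2}\,\|\Omega\|_2 .
\end{align*}

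\textbf{Bounding $\|\Omega\|_2$ by the loss.} By Assumptions \ref{assumption 12} and \ref{assumption 4} together with Lemma \ref{lemma 2},
\begin{align*}
\Omega = U(\Sigma_t-\mathcal{I}_r)V^\top ,
\end{align*}
which is diagonal in the fixed bases. Whitening gives
\begin{align*}
\mathbb{E}\|\boldsymbol{\delta}\|^2 = \|W^{L:1}-\Sigma_{yx}\|_F^2 = \|\Omega\|_F^2 ,
\end{align*}
up to the noise-free target assumption. We read the hypothesis $L_W<\epsilon$ as a bound on the loss restricted to the $r$ active directions. Each active diagonal entry of $\Sigma_t-\mathcal{I}_r$ is then at most $\sqrt{2\epsilon}$, so
\begin{align*}
\|\Omega\|_2\le\|\Omega\|_F\le \sqrt{2r\epsilon},
\end{align*}
or at worst with $r$ entries each bounded by $\sqrt{2\epsilon}$. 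This is where the factor $\sqrt{r}$ comes from.

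\textbf{Combining.} Lemma \ref{lemma:block_matrix_norm} then yields
\begin{align*}
\|H_f\|_2\le\sqrt{L(L-1)}\,\|\Sigma_t^{1/L}\|_2^{L-2}\sqrt{2r\epsilon}=\|\Sigma_t^{1/L}\|_2^{L-2}\sqrt{2L(L-1)r}\,\epsilon^{1/2}.
\end{align*}
The $O(\epsilon^{1/2})$ claim follows because $\|\Sigma_t^{1/L}\|_2\le \max(M,\lambda_{i,0})$ for all $t$. This bound holds by Lemma \ref{lemma eigenvalues convergence} for the first $r$ entries, and the remaining entries are monotonically decreasing.

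\textbf{Main obstacle.} The delicate step is the first one: getting the exact block formula for $H_f$ under the row-wise $\mathrm{vec}_r$ convention, and confirming that the diagonal blocks vanish. I would derive it directly by differentiating the gradient formula
\begin{align*}
\nabla_{W^k}L=W^{k+1:L}\,\Omega\, W^{1:k-1}
\end{align*}
with respect to $W^l$, keeping only the terms where $\Omega$ is not differentiated. Each such term is a matrix of the form $A\,dW^l\,B$, whose derivative is $A\otimes B^\top$ by the stated identity. A secondary subtlety is that the inactive directions $i>r$ contribute to $\Omega$ only through $\lambda_{i,t}^{L}$, which decays to zero. I would either fold these into the loss bound, since they are part of the population loss, or note that they are also $O(\epsilon^{1/2})$.
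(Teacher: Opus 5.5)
Your skeleton matches the paper's proof. Both use the block form of $H_f$ from \cite{singh2021analytic} with vanishing diagonal blocks, and both count $L-2$ weight factors to get $\|H_f^{kl}\|_2\le\|\Sigma_t^{1/L}\|_2^{L-2}\|\Omega\|_2$. Both then apply Lemma~\ref{lemma:block_matrix_norm} over the $L(L-1)$ off-diagonal blocks. (In the paper's notation the left factor of your $k<l$ block should be $W^{k-1:1}$, not $W^{1:k-1}$, which has the wrong shape; the norm is unaffected.)

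You differ only in how you bound $\|\Omega\|_2$. The paper uses no structure of $\Omega$. It writes $\|\mathbb{E}[\boldsymbol{\delta}_{\mathbf{x},\mathbf{y}}\mathbf{x}^\top]\|_2\le\mathbb{E}[\|\boldsymbol{\delta}_{\mathbf{x},\mathbf{y}}\|_2\|\mathbf{x}\|_2]$ and then applies Cauchy--Schwarz with $\mathbb{E}\|\mathbf{x}\|_2^2=\mathrm{Tr}(\Sigma_{xx})=r$. So in the paper the $\sqrt r$ is the trace of the input covariance, not a count of active entries.

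Your Frobenius comparison is valid and in fact sharper, but two points need fixing.
\begin{itemize}
\item Do not reinterpret the hypothesis as a bound on the loss in the $r$ active directions. Under that reading the entries $\lambda_{i,t}^L$, $i>r$, of $\Omega=U(\Sigma_t-\mathcal{I}_r)V^\top$ are uncontrolled, and $\|\Omega\|_F\le\sqrt{2r\epsilon}$ does not follow. Your fallback, that these entries are $O(\epsilon^{1/2})$, is unjustified, since they evolve independently of the active-direction loss. Your other option is the correct one: $\epsilon$ bounds the full population loss, which already contains those entries.
\item The noise-free caveat is unnecessary. Whenever $\Sigma_{xx}$ is an orthogonal projection, $\Sigma_{yx}\Sigma_{xx}=\Sigma_{yx}$ holds automatically. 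Expanding both sides then gives $\mathbb{E}\|\boldsymbol{\delta}_{\mathbf{x},\mathbf{y}}\|_2^2=\|\Omega\|_F^2+\mathbb{E}\|\mathbf{y}-\Sigma_{yx}\mathbf{x}\|_2^2\ge\|\Omega\|_F^2$. Hence $\|\Omega\|_2\le\|\Omega\|_F<\sqrt{2\epsilon}\le\sqrt{2r\epsilon}$, with no per-entry argument.
\end{itemize}

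As for what each route buys: the paper's Jensen--Cauchy--Schwarz step needs only $\mathbb{E}\|\mathbf{x}\|_2^2$ and works for any input distribution, but it pays a factor $\sqrt{\mathrm{Tr}(\Sigma_{xx})}$. Under Assumption~\ref{assumption 12} as literally stated ($\Sigma_{xx}\simeq I_{d_*}$) that trace is $d_*$. Getting $\sqrt r$ therefore relies on the rank-$r$ reading of $\Sigma_{xx}$ that the paper adopts inside its proof. Your route uses whitening to remove the dimensional factor entirely, so once cleaned up it proves the stated bound under either reading.
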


\begin{proof}
    According to \cite{singh2021analytic}, the functional Hessian $H_f$ can be written as a block matrix, where
    \begin{align*}
        &\forall \, k < l, \quad H_f^{kl} = W_t^{k+1:l-1} \otimes W_t^{k-1:1} \Omega^\top W_t^{L:l+1} \\
        &\forall \, k > l, \quad H_f^{kl} = W_t^{k+1:L} \Omega W_t^{1:l-1} \otimes W_t^{k-1:l+1}
    \end{align*}
    and the diagonal entries are $0$.
    
    By Lemma \ref{lemma 2}, the eigenvectors of the weight matrices are invariant during training. Under balanced initialization with $\Sigma_t^{1/L} = \text{diag}(\lambda_{1,t}, \ldots, \lambda_{d_*,t})$, we have:
    \begin{align*}
        W_t^k &\simeq \Sigma_t^{1/L}, \quad \text{for } 1 < k < L \\
        W_t^1 &\simeq \Sigma_t^{1/L} V^\top \\
        W_t^L &\simeq U \Sigma_t^{1/L}
    \end{align*}
    
    Note that for $k < l$:
    {\color{red}
    \begin{align*}
        &W_t^{k+1:l-1} \otimes W_t^{k-1:1} \Omega^\top W_t^{L:l+1} \\
        &\simeq (\Sigma_t^{1/L})^\top \cdots (\Sigma_t^{1/L})^\top  \otimes \Sigma_t^{1/L} \cdots \left( \Sigma_t^{1/L} V^\top \right) \Omega^\top \left( U \Sigma_t^{1/L} \right) \cdots \Sigma_t^{1/L} \\
        &= \Sigma_t^{(l-k-1)/L} \otimes \Sigma_t^{(k-1)/L} V^\top \Omega^\top U \Sigma_t^{(L-l)/L}
    \end{align*}
    }
    And for $k > l$:
    {\color{red}
    \begin{align*}
        &W_t^{k+1:L} \Omega W_t^{1:l-1} \otimes W_t^{k-1:l+1} \\
        &\simeq \left( U \Sigma_t^{1/L} \right)^\top \cdots (\Sigma_t^{1/L})^\top \Omega \,(\Sigma_t^{1/L} V^\top)^\top \Sigma_t^{1/L} \cdots (\Sigma_t^{1/L})^\top \otimes \Sigma_t^{1/L} \cdots \Sigma_t^{1/L} \\
        &=  \Sigma_t^{(L-k)/L} U^\top \Omega V \Sigma_t^{(l-1)/L} \otimes \Sigma_t^{(k-l-1)/L}
    \end{align*}
    }
    For $k < l$, we have $(l-k-1) + (k-1) + (L-l) = L-2$. For $k > l$, we have $(L-k) + (l-1) + (k-l-1) = L-2$. Using the submultiplicativity of the spectral norm $\|AB\|_2 \leq \|A\|_2 \|B\|_2$ and the property $\|A \otimes B\|_2 = \|A\|_2 \|B\|_2$, along with the fact that $\|\Sigma_t^{s/L}\|_2 = \|\Sigma_t^{1/L}\|_2^s$ and $\|U\|_2 = \|V\|_2 = 1$ (since $U$ and $V$ have orthonormal columns), we obtain:
    \begin{align*}
        \|H_f^{kl}\|_2 \leq \|\Sigma_t^{1/L}\|_2^{L-2} \cdot \|\Omega\|_2
    \end{align*}
    
    Recall that under Assumption \ref{assumption 12}, $\Sigma_{xx} \simeq I_r$. Since the balanced initialization chooses $V$ such that its column space lies within the support of $\Sigma_{xx}$, we have:
    \begin{align*}
        \mathbb{E}[\|\mathbf{x}\|_2^2] = \mathbb{E}[\text{Tr}(\mathbf{x} \mathbf{x}^\top)] = \text{Tr}(\mathbb{E}[\mathbf{x} \mathbf{x}^\top]) = \text{Tr}(\Sigma_{xx}) = r
    \end{align*}
    
    Recall that $\Omega = \mathbb{E}[\boldsymbol{\delta}_{\mathbf{x},\mathbf{y}} \mathbf{x}^\top]$, where $\boldsymbol{\delta}_{\mathbf{x},\mathbf{y}} = \hat{\mathbf{y}} - \mathbf{y}$. Using Jensen's inequality, we have:
    \begin{align*}
        \|\Omega\|_2 = \|\mathbb{E}[\boldsymbol{\delta}_{\mathbf{x},\mathbf{y}} \mathbf{x}^\top]\|_2 \leq \mathbb{E}[\|\boldsymbol{\delta}_{\mathbf{x},\mathbf{y}} \mathbf{x}^\top\|_2] = \mathbb{E}[\|\boldsymbol{\delta}_{\mathbf{x},\mathbf{y}}\|_2 \|\mathbf{x}\|_2]
    \end{align*}
    
    Applying the Cauchy-Schwarz inequality for expectations yields:
    \begin{align*}
        \mathbb{E}[\|\boldsymbol{\delta}_{\mathbf{x},\mathbf{y}}\|_2 \|\mathbf{x}\|_2] \leq \sqrt{\mathbb{E}[\|\boldsymbol{\delta}_{\mathbf{x},\mathbf{y}}\|_2^2] \cdot \mathbb{E}[\|\mathbf{x}\|_2^2]} < \sqrt{2r\epsilon}
    \end{align*}
    which implies $\|\Omega\|_2 \leq \sqrt{2r\epsilon}$.
    
    Therefore, using $\|H_f\|_2 \leq \sqrt{\sum_{k,l} \|H_f^{kl}\|_2^2}$:
    \begin{align*}
        \|H_f\|_2 \leq \|\Sigma_t^{1/L}\|_2^{L-2} \sqrt{2L(L-1)r\epsilon} = \|\Sigma_t^{1/L}\|_2^{L-2} \sqrt{2L(L-1)r} \, \epsilon^{1/2}
    \end{align*}
    which completes the proof.
\end{proof}

{\color{red}
\cref{lemma H_f decay with loss} shows that $\|H_f\|_2$ scales with the square root of the loss. Therefore, to establish the decay rate of $H_f$, it suffices to determine the convergence rate of the population loss. The following lemma establishes the linear convergence rate by using the bound derived in \cref{lemma eigenvalues lower bound}.
}
\begin{lemma}\label{lemma:loss_convergence}
    Under Assumption \ref{assumption 12} and \ref{assumption 4}, and assuming the first $r$ rows of $V$, denoted $V_1 \in \mathbb{R}^{r \times d_*}$, satisfy $V_1 V_1^\top = I_r$, the population loss $L(W_t)$ converges linearly to its minimum possible value. Specifically, let {\color{red}$L_{\min} = \frac{1}{2} \mathbb{E}[\|\mathbf{y}\|_2^2] - \frac{r}{2}$} be the irreducible error, the excess loss $\tilde{L}_t = L(W_t) - L_{\min}$ satisfies:
    \begin{align}
        \tilde{L}_t \leq \tilde{L}_0 \cdot e^{-2 L\alpha \eta t} = O\left(e^{-2 L\alpha \eta t}\right)
    \end{align}
    {\color{red}
    where $\alpha > 0$ is the uniform lower bound established in \cref{lemma eigenvalues lower bound}, satisfying $\min_{1,\cdots,r}(\lambda_{i,t}^{2L-2}) \geq \alpha$ for all $t \in \mathbb{N}$. \tianyu{This notation may be strange, why there is a $t$  in min here?}
    }
\end{lemma}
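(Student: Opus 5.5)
The plan is to reduce the excess loss to an explicit function of the effective singular values $\lambda_{i,t}$, using Lemma \ref{lemma 2}, and then prove a per-coordinate geometric contraction from the scalar recursion.

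\textbf{Step 1: write the loss in closed form.} Expand
\begin{align*}
L(W)=\tfrac12\mathbb{E}\|\mathbf{y}\|_2^2-\mathrm{Tr}\big(\Sigma_{yx}^\top W^{L:1}\big)+\tfrac12\mathrm{Tr}\big(W^{L:1}\Sigma_{xx}(W^{L:1})^\top\big).
\end{align*}
By Lemma \ref{lemma shared spectral structure} and Lemma \ref{lemma 2} we have $W_t^{L:1}\simeq U\Sigma_t V^\top$ with $\Sigma_t=\mathrm{diag}(\lambda_{i,t}^L)$. Together with $\Sigma_{yx}=U\mathcal{I}_rV^\top$, this gives $\mathrm{Tr}(\Sigma_{yx}^\top W_t^{L:1})=\sum_{i\le r}\lambda_{i,t}^L$.

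For the quadratic term I will use the extra hypothesis on $V_1$. It means the support of $\Sigma_{xx}$ (which the proofs of Lemmas \ref{lemma 2}--\ref{lemma H_f decay with loss} treat as $\Sigma_{xx}\simeq I_r$) is exactly the span of the first $r$ right singular directions. Hence $V^\top\Sigma_{xx}V=\mathcal{I}_r$, and the quadratic term equals $\tfrac12\sum_{i\le r}\lambda_{i,t}^{2L}$.

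Completing the square then yields
\begin{align*}
L(W_t)=\tfrac12\mathbb{E}\|\mathbf{y}\|_2^2-\tfrac r2+\tfrac12\sum_{i=1}^r\big(\lambda_{i,t}^L-1\big)^2 ,
\end{align*}
so that $\tilde L_t=\tfrac12\sum_{i\le r}e_{i,t}^2$ with $e_{i,t}:=\lambda_{i,t}^L-1$. This also identifies $L_{\min}$ as stated. The indices $i>r$ drop out because they are annihilated by $\Sigma_{xx}$. This is the point where the $V_1$ hypothesis is essential: those indices evolve only by $\lambda\mapsto\lambda-\eta\lambda^{2L-1}$, which decays polynomially, not exponentially.

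\textbf{Step 2: contraction of each $e_{i,t}$.} For $i\le r$ the recursion of Lemma \ref{lemma 2} reads $\lambda_{i,t+1}-\lambda_{i,t}=-\eta\lambda_{i,t}^{L-1}e_{i,t}$. By the mean value theorem there is $\xi$ between $\lambda_{i,t}$ and $\lambda_{i,t+1}$ with
\begin{align*}
e_{i,t+1}=e_{i,t}\big(1-L\eta\,\xi^{L-1}\lambda_{i,t}^{L-1}\big).
\end{align*}
The proof of Lemma \ref{lemma eigenvalues convergence} shows that the iterates never cross $1$: they stay in $(0,1]$ once there, and decrease monotonically when above $1$. Hence $e_{i,t+1}$ has the same sign as $e_{i,t}$, so the bracket lies in $[0,1]$.

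Both $\xi$ and $\lambda_{i,t}$ are at least $C_{\min}$ by Lemma \ref{lemma eigenvalues lower bound}, so $\xi^{L-1}\lambda_{i,t}^{L-1}\ge C_{\min}^{2L-2}=\alpha$. Therefore
\begin{align*}
|e_{i,t+1}|\le(1-L\eta\alpha)\,|e_{i,t}|\le e^{-L\alpha\eta}\,|e_{i,t}|.
\end{align*}

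\textbf{Step 3: iterate and sum.} Squaring gives $e_{i,t}^2\le e^{-2L\alpha\eta t}e_{i,0}^2$. Summing over $i\le r$ gives $\tilde L_t\le\tilde L_0e^{-2L\alpha\eta t}$.

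\textbf{Main obstacle.} The delicate point is the sign preservation in Step 2, that is, showing the factor $1-L\eta\xi^{L-1}\lambda^{L-1}$ is nonnegative. I would import it directly from the no-overshoot argument in Lemma \ref{lemma eigenvalues convergence}, rather than from the crude step-size bound $L\eta M^{2L-2}$, which is not obviously below $1$. A secondary point is Step 1: the bookkeeping of the padding and the support of $\Sigma_{xx}$, so that only the first $r$ modes enter the loss.
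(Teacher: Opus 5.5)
Your reduction to $\tilde L_t=\tfrac12\sum_{i\le r}(\lambda_{i,t}^L-1)^2$ followed by a per-mode contraction follows the paper's route. The paper uses the tangent-line inequality for the convex map $x\mapsto x^L$ where you use the mean value theorem. However, Step~2 rests on a claim that is not available.

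Lemma~\ref{lemma eigenvalues convergence} does \emph{not} show that the iterates never cross $1$. It shows only that an iterate in $(0,1)$ stays in $(0,1]$, which is where $\eta<1/L$ is used. For $\lambda_{i,t}\in(1,M]$ it gives only $0<\lambda_{i,t+1}<\lambda_{i,t}$, and its convergence argument explicitly lets the sequence ``enter'' $(0,1)$ from above.

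Crossing from above really occurs under $\eta<\min\{1/L,1/M^{2L-2}\}$. Take $L=2$, $\lambda_{i,0}=M=2$, $\eta=0.2<1/4$. Then $\lambda_{i,1}=2-0.2\cdot 8+0.2\cdot 2=0.8$, so $e_{i,0}=3$, $e_{i,1}=-0.36$, and your bracket $1-L\eta\,\xi^{L-1}\lambda_{i,0}^{L-1}$ equals $-0.12$. This is exactly the effect behind your own remark that $L\eta M^{2L-2}$ need not be below $1$.

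Once the bracket can be negative, the lower bound $L\eta\,\xi^{L-1}\lambda_{i,t}^{L-1}\ge L\eta\alpha$ no longer yields $|e_{i,t+1}|\le(1-L\eta\alpha)|e_{i,t}|$. You would also need $L\eta\,\xi^{L-1}\lambda_{i,t}^{L-1}\le 2-L\eta\alpha$, i.e.\ a quantitative bound on the overshoot, and nothing in your argument supplies it. In this example the desired inequality still holds, so what fails is the justification, not necessarily the claim.

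The simplest repair is to assume $\eta<1/(LM^{2L-2})$, which implies the stated condition. Since all iterates lie in $(0,M]$, it forces $L\eta\,\xi^{L-1}\lambda_{i,t}^{L-1}<1$, so the bracket lies in $(0,1-L\eta\alpha]$ and your Steps~2--3 go through verbatim. Under the original step size you would instead need a separate estimate for the crossing step, namely $1-\lambda_{i,t+1}^L\le(1-L\eta\alpha)(\lambda_{i,t}^L-1)$ whenever $\lambda_{i,t}>1>\lambda_{i,t+1}$. There is at most one such step per mode, since $(0,1]$ is forward-invariant.

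The paper's own argument is fully justified only when $\lambda_{i,t}\le 1$. There $\lambda_{i,t+1}\le 1$ and $L\eta\lambda_{i,t}^{2L-2}<1$, so squaring and $(1-x)^2\le e^{-2x}$ are legitimate. For $\lambda_{i,t}>1$ without crossing, one needs an \emph{upper} bound on $\lambda_{i,t+1}^L$, which the tangent-line inequality cannot give. With crossing, $L\eta\lambda_{i,t}^{2L-2}$ necessarily exceeds $1$, outside the range where $(1-x)^2\le e^{-2x}$ is invoked. So your mean-value formulation is actually the better tool, since it handles the non-crossing case $\lambda_{i,t+1}\ge 1$ correctly, but neither version covers the crossing step.

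A smaller point, shared with the paper: $V_1V_1^\top=I_r$ by itself only makes $V^\top\Sigma_{xx}V=V_1^\top V_1$ \emph{some} rank-$r$ projection. Identifying it with $\mathcal{I}_r$ (your ``it means'', the paper's ``choosing coordinates'') is an additional alignment assumption.
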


\begin{proof}
    Recall that the population loss is $L(W_t) = \frac{1}{2} \mathbb{E}_{\mathbf{x}, \mathbf{y} \sim \mu} \left[ \|\mathbf{y} - \hat{\mathbf{y}}_t\|_2^2 \right]$. Thus,
    \begin{align*}
        L(W_t) &= \frac{1}{2} \mathbb{E}_{\mathbf{x}, \mathbf{y} \sim \mu} \left[ \mathbf{y}^\top \mathbf{y} - 2\mathbf{y}^\top \hat{\mathbf{y}}_t + \hat{\mathbf{y}}_t^\top \hat{\mathbf{y}}_t \right] \\
        &= \frac{1}{2} \mathbb{E}_{\mathbf{x}, \mathbf{y} \sim \mu}[\|\mathbf{y}\|^2] - \mathbb{E}_{\mathbf{x}, \mathbf{y} \sim \mu}[\text{Tr}(\mathbf{y}^\top \hat{\mathbf{y}}_t)] + \frac{1}{2} \mathbb{E}_{\mathbf{x}, \mathbf{y} \sim \mu}[\text{Tr}(\hat{\mathbf{y}}_t \hat{\mathbf{y}}_t^\top)]
    \end{align*}
    
    Note that
    \begin{align*}
        \mathbb{E}_{\mathbf{x}, \mathbf{y} \sim \mu}[\text{Tr}(\mathbf{y}^\top \hat{\mathbf{y}}_t)] &= \mathbb{E}_{\mathbf{x}, \mathbf{y} \sim \mu}[\text{Tr}(\mathbf{y}^\top W^{L:1}_t \mathbf{x})] \\
        &= \text{Tr}(W^{L:1}_t \mathbb{E}_{\mathbf{x}, \mathbf{y} \sim \mu}[\mathbf{x} \mathbf{y}^\top]) \\
        &= \text{Tr}(W^{L:1}_t \Sigma_{yx}^\top)
    \end{align*}
    and
    \begin{align*}
        \mathbb{E}_{\mathbf{x}, \mathbf{y} \sim \mu}[\text{Tr}(\hat{\mathbf{y}}_t \hat{\mathbf{y}}_t^\top)] &= \mathbb{E}_{\mathbf{x}, \mathbf{y} \sim \mu}[\text{Tr}(W^{L:1}_t \mathbf{x} \mathbf{x}^\top (W^{L:1}_t)^\top)] \\
        &= \text{Tr}(W^{L:1}_t \mathbb{E}_{\mathbf{x}, \mathbf{y} \sim \mu}[\mathbf{x} \mathbf{x}^\top] (W^{L:1}_t)^\top) \\
        &= \text{Tr}(W^{L:1}_t \Sigma_{xx} (W^{L:1}_t)^\top)
    \end{align*}
    
    By Lemma \ref{lemma 2}, the eigenvectors of the weight matrices are invariant during training. Thus, $W^{L:1}_t = U \Sigma_t V^\top$, where $\Sigma_t = \text{diag}(\lambda_{1,t}^L, \ldots, \lambda_{d_*,t}^L)$.
    
    Under Assumption \ref{assumption 4}, $\Sigma_{yx} = U{\color{red}\mathcal{I}_r} V^\top$, so $\Sigma_{yx}^\top = V {\color{red}\mathcal{I}_r}U^\top$. Since $V^\top V = I_{d_*}$ and $U^\top U = I_{d_*}$:
    \begin{align*}
        \text{Tr}(W^{L:1}_t \Sigma_{yx}^\top) &= \text{Tr}(U \Sigma_t V^\top V{\color{red}\mathcal{I}_r} U^\top) = \text{Tr}(U \Sigma_t {\color{red}\mathcal{I}_r}U^\top) = \text{Tr}(\Sigma_t{\color{red}\mathcal{I}_r}) = \sum_{i=1}^{{\color{red}r}} \lambda_{i,t}^L
    \end{align*}
    
    Under Assumption \ref{assumption 12}, $\Sigma_{xx} \simeq I_r$. Partition $V \in \mathbb{R}^{d_0 \times d_*}$ by rows as $V = \begin{pmatrix} V_1 \\ V_2 \end{pmatrix}$ where $V_1 \in \mathbb{R}^{r \times d_*}$. Then:
    \begin{align*}
        V^\top \Sigma_{xx} V = V_1^\top V_1
    \end{align*}
    
    Under the alignment condition $V_1 V_1^\top = I_r$, the matrix $P := V_1^\top V_1 \in \mathbb{R}^{d_* \times d_*}$ is a projection matrix with rank $r$. Choosing coordinates such that $P = \text{diag}(I_r, \mathbf{0}_{(d_* - r) \times (d_* - r)})$, we have:
    \begin{align*}
        \text{Tr}(W^{L:1}_t \Sigma_{xx} (W^{L:1}_t)^\top) &= \text{Tr}(U \Sigma_t V^\top \Sigma_{xx} V \Sigma_t U^\top) 
        = \text{Tr}(\Sigma_t P \Sigma_t) = \sum_{i=1}^{r} \lambda_{i,t}^{2L}
    \end{align*}
    
    Substituting back into the loss equation:
    \begin{align*}
        L(W_t) &= \frac{1}{2} \mathbb{E}_{\mathbf{x}, \mathbf{y} \sim \mu}[\|\mathbf{y}\|^2] - \sum_{i=1}^{{\color{red}r}} \lambda_{i,t}^L + \frac{1}{2} \sum_{i=1}^{r} \lambda_{i,t}^{2L}
    \end{align*}
    
    By Lemma \ref{lemma eigenvalues convergence}, the first $r$ eigenvalues $\lambda_{1,t}, \ldots, \lambda_{r,t}$ are updated according to the gradient dynamics and converge to $1$, while the remaining $d_* - r$ eigenvalues $\lambda_{r+1,t}, \ldots, \lambda_{d_*,t}$ remain at their initial values $\lambda_{r+1,0}, \ldots, \lambda_{d_*,0}$.
    
    {\color{red} Note that:}
    {\color{red}
    \begin{align*}
       L(W_t) &= \frac{1}{2} \mathbb{E}[\|\mathbf{y}\|^2] - \sum_{i=1}^{r} \lambda_{i,t}^L + \frac{1}{2} \sum_{i=1}^{r} \lambda_{i,t}^{2L}
       \\&= \frac{1}{2} \mathbb{E}[\|\mathbf{y}\|^2]+ \frac{1}{2} \sum_{i=1}^{r} (\lambda_{i,t}^{2L} - 2\lambda_{i,t}^L + 1) - \frac{r}{2} \\
        &= \frac{1}{2} \mathbb{E}[\|\mathbf{y}\|^2]  - \frac{r}{2} + \frac{1}{2} \sum_{i=1}^{r} (1 - \lambda_{i,t}^L)^2
    \end{align*}
    }
    
    The minimum loss is achieved when $\lambda_{i,t}^L = 1$ for $i = 1, \ldots, r$:
    {\color{red}
    \begin{align*}
        L_{\min} = \frac{1}{2} \mathbb{E}[\|\mathbf{y}\|^2] - \frac{r}{2}
    \end{align*}
    }
    
    Thus, the excess loss is:
    \begin{align*}
        \tilde{L}_t = L(W_t) - L_{\min} = \frac{1}{2} \sum_{i=1}^{r} (1 - \lambda_{i,t}^L)^2
    \end{align*}
    
    According to Lemma \ref{lemma 2}, for each $i = 1, \ldots, r$, the eigenvalue update rule is:
    \begin{align*}
        \lambda_{i,t+1} = \lambda_{i,t} - \eta \lambda_{i,t}^{2L-1} + \eta \lambda_{i,t}^{L-1}
    \end{align*}
    
    Note that the function $f(x) = x^L$ is convex when $x > 0$. Thus, by the definition of convex function, for all $1 \leq i \leq r$:
    \begin{align*}
        \lambda_{i,t+1}^L \geq \lambda_{i,t}^L + L \lambda_{i,t}^{L-1} (\lambda_{i,t+1} - \lambda_{i,t}) = \lambda_{i,t}^L + L \eta \lambda_{i,t}^{L-1} (\lambda_{i,t}^{L-1} - \lambda_{i,t}^{2L-1})
    \end{align*}
    which implies:
    \begin{align*}
        1 - \lambda_{i,t+1}^L &\leq 1 - \left[ \lambda_{i,t}^L + L \eta \lambda_{i,t}^{L-1} (\lambda_{i,t}^{L-1} - \lambda_{i,t}^{2L-1}) \right] \\
        &= (1 - \lambda_{i,t}^L) - L \eta \lambda_{i,t}^{2L-2} (1 - \lambda_{i,t}^L) \\
        &= (1 - L \eta \lambda_{i,t}^{2L-2}) (1 - \lambda_{i,t}^L)
    \end{align*}
    
    Therefore,
    \begin{align*}
        (1 - \lambda_{i,t+1}^L)^2 \leq (1 - L \eta \lambda_{i,t}^{2L-2})^2 (1 - \lambda_{i,t}^L)^2
    \end{align*}
    
    Using the fact $(1 - x)^2 \leq e^{-2x}$ for $x \in [0, 1]$ yields:
    \begin{align*}
        (1 - \lambda_{i,t+1}^L)^2 \leq e^{-2 L \eta \lambda_{i,t}^{2L-2}} (1 - \lambda_{i,t}^L)^2
    \end{align*}
    
    {\color{red}
    Recall that from \cref{lemma eigenvalues lower bound}, $\alpha$ is a positive lower bound of $\min_{i = 1, \ldots, r} (\lambda_{i,t}^{2L-2})$ for all $t\in \mathbb{N}$ \tianyu{need to mention Lemma~\ref{lemma eigenvalues lower bound}}. Summing $i$ from $1$ to $r$ yields:}
    \begin{align*}
        \tilde{L}_{t+1} = \frac{1}{2} \sum_{i=1}^{r} (1 - \lambda_{i,t+1}^L)^2 \leq e^{-2 L \alpha \eta} \cdot \frac{1}{2} \sum_{i=1}^{r} (1 - \lambda_{i,t}^L)^2 = e^{-2 L \alpha \eta} \tilde{L}_t
    \end{align*}
    
    Repeating this process yields:
    \begin{align*}
        \tilde{L}_t \leq e^{-2 L \alpha \eta t} \tilde{L}_0 = O\left( e^{-2 L \alpha \eta t} \right)
    \end{align*}
    which completes the proof.
\end{proof}





{\color{red}
Combining the bound on $\|H_f\|_2$ and the linear convergence, we can state the decay rate of the functional Hessian spectral norm with respect to training time $t$, which is shown in \cref{lemma: 2norm-t}.
}
\begin{lemma}\label{lemma: 2norm-t}
    Under Assumptions \ref{assumption 12} and \ref{assumption 4}, and assuming $r \leq d_*$, the spectral norm of the functional Hessian $\|H_{f,t}\|_2$ decays exponentially with respect to $t$. Specifically:
    \begin{align}
        \|H_{f,t}\|_2 = O\left( \epsilon^{1/2} \right) = O\left( e^{-L \alpha \eta t} \right),
    \end{align}
    {\color{red}
    where $\alpha > 0$ is the uniform lower bound established in \cref{lemma eigenvalues lower bound}, satisfying $\min_{1,\cdots,r}(\lambda_{i,t}^{2L-2}) \geq \alpha$ for all $t \in \mathbb{N}$.}
\end{lemma}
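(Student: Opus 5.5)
The proof chains \cref{lemma H_f decay with loss} with \cref{lemma:loss_convergence}. The quantity $\epsilon$ in \cref{lemma H_f decay with loss} enters only through the bound $\mathbb{E}\|\boldsymbol{\delta}_{\mathbf{x},\mathbf{y}}\|_2^2 < 2\epsilon$. That bound is then used to control $\|\Omega\|_2$ by Cauchy--Schwarz.

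I would bypass the raw loss, which contains the irreducible part $L_{\min}$, and bound $\Omega$ directly. Under Assumptions \ref{assumption 12} and \ref{assumption 4}, \cref{lemma 2} gives the explicit form
\[
\Omega_t = W_t^{L:1}\Sigma_{xx} - \Sigma_{yx} \simeq U(\Sigma_t - \mathcal{I}_r)V^\top
\]
on the data support. This uses $V^\top\Sigma_{xx}=V^\top$, as in the proof of \cref{lemma 2}. Hence
\[
\|\Omega_t\|_2 \le \|\Omega_t\|_F = \Big(\sum_{i=1}^r(1-\lambda_{i,t}^L)^2\Big)^{1/2} = \sqrt{2\tilde L_t},
\]
where $\tilde L_t$ is the excess loss of \cref{lemma:loss_convergence}. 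Thus the role of $\epsilon$ is played by $\tilde L_t$, and $\|\Omega_t\|_2 \le \sqrt{2\tilde L_0}\,e^{-L\alpha\eta t}$.

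Next I would plug this into the block estimate of \cref{lemma H_f decay with loss}. That estimate gives $\|H_f^{kl}\|_2\le \|\Sigma_t^{1/L}\|_2^{L-2}\|\Omega_t\|_2$, and \cref{lemma:block_matrix_norm} then yields
\[
\|H_{f,t}\|_2 \le \sqrt{L(L-1)}\,\|\Sigma_t^{1/L}\|_2^{L-2}\,\|\Omega_t\|_2 .
\]
It remains to show that $\|\Sigma_t^{1/L}\|_2$ is bounded uniformly in $t$. By \cref{lemma eigenvalues convergence}, $\lambda_{i,t}\in(0,M]$ for $i\le r$. For $i>r$, the sequence $\lambda_{i,t}$ is nonincreasing and nonnegative, so it stays at most $\lambda_{i,0}$, assuming $\eta\lambda_{i,0}^{2L-2}<1$ so that positivity is preserved. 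Setting $M'=\max\{M,\max_{i>r}\lambda_{i,0}\}$, we get $\|H_{f,t}\|_2 \le \sqrt{2L(L-1)\tilde L_0}\,M'^{\,L-2}e^{-L\alpha\eta t}$. Since $\tilde L_t\le\tilde L_0e^{-2L\alpha\eta t}$, this is exactly $O(\tilde L_t^{1/2})=O(e^{-L\alpha\eta t})$, with constants independent of $t$.

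The main obstacle is the mismatch between ``loss $<\epsilon$'' in \cref{lemma H_f decay with loss} and the convergence of only the \emph{excess} loss in \cref{lemma:loss_convergence}. If $L_{\min}>0$, the raw loss does not decay, so I must use the direct formula for $\Omega_t$ above rather than the Jensen/Cauchy--Schwarz route. A second point needing care is that $\Omega_t$ vanishes on the directions $i>r$. This holds because $\Sigma_{xx}$ kills the components of $V$ outside the support, as used in \cref{lemma:loss_convergence}. It also means the non-decaying singular values $i>r$ do not spoil the decay of $\Omega_t$; their only effect is through the bounded factor $\|\Sigma_t^{1/L}\|_2^{L-2}$.
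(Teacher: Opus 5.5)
Your argument has the same skeleton as the paper's: the block estimate of \cref{lemma H_f decay with loss} via \cref{lemma:block_matrix_norm}, followed by the rate of \cref{lemma:loss_convergence}. You replace the step that controls $\|\Omega_t\|_2$, and that replacement is a genuine improvement. The paper keeps the Jensen/Cauchy--Schwarz bound $\|\Omega_t\|_2\le\sqrt{\mathbb{E}\|\boldsymbol{\delta}_{\mathbf{x},\mathbf{y}}\|_2^2\,\mathbb{E}\|\mathbf{x}\|_2^2}$ and simply declares $\epsilon=L(W_t)$ to be the excess loss. It also asserts without argument that $\|\Sigma_t^{1/L}\|_2^{L-2}$ stays bounded. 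Since $\tfrac{1}{2}\mathbb{E}\|\boldsymbol{\delta}_{\mathbf{x},\mathbf{y}}\|_2^2=L_{\min}+\tilde L_t$, that chain only gives decay when $L_{\min}=0$ (e.g.\ noiseless targets). Your exact computation of $\Omega_t$ depends only on $\Sigma_{xx}$ and $\Sigma_{yx}$, so it gives $\|\Omega_t\|_F=\sqrt{2\tilde L_t}$ regardless of label noise and drops the superfluous factor $\sqrt r$. Your uniform bound on $\|\Sigma_t^{1/L}\|_2$ supplies what the paper omits. The paper's route is more generic, since it never uses the structure of $\Omega_t$, but for that reason it cannot separate the excess loss from $L_{\min}$.

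What you must fix is how you justify the form of $\Omega_t$. For $r<d_*$ your two justifications are incompatible, and the paper silently mixes the same two conventions in \cref{lemma 2} and \cref{lemma:loss_convergence}.

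If $V^\top\Sigma_{xx}=V^\top$, then $\Omega_t=U(\Sigma_t-\mathcal{I}_r)V^\top$ keeps the diagonal entries $\lambda_{i,t}^L$ for $i>r$. Then $\|\Omega_t\|_F^2=\sum_{i\le r}(1-\lambda_{i,t}^L)^2+\sum_{i>r}\lambda_{i,t}^{2L}$. Under the recursion $\lambda\mapsto\lambda-\eta\lambda^{2L-1}$ those extra entries decay only polynomially. So in that reading your identity fails and $\|\Omega_t\|_2$ is not exponentially small unless $\lambda_{i,0}=0$ for $i>r$.

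What your argument actually needs is $V^\top\Sigma_{xx}=PV^\top$ with $P=\mathrm{diag}(I_r,\mathbf{0})$, a slightly stronger form of the condition $V^\top\Sigma_{xx}V=P$ used in \cref{lemma:loss_convergence}. Under it, $\Omega_t=U(\Sigma_tP-\mathcal{I}_r)V^\top$ vanishes on the directions $i>r$. Those singular values then get zero gradient and stay frozen, so your monotonicity argument and its extra step-size caveat become unnecessary. And $\|\Omega_t\|_F=\sqrt{2\tilde L_t}$ holds exactly. State that convention explicitly and drop the appeal to $V^\top\Sigma_{xx}=V^\top$.
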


\begin{proof}
    Under Assumption \ref{assumption 12}, $\Sigma_{xx} \simeq I_r$, which implies that only the first $r$ eigenvalue directions are effectively updated during gradient descent (as shown in Lemma \ref{lemma eigenvalues convergence}). The remaining $d_* - r$ eigenvalues remain at their initial values and do not contribute to the loss decrease.
    
    By Lemma \ref{lemma H_f decay with loss}, the functional Hessian norm satisfies
    \begin{align*}
        \|H_{f,t}\|_2 \leq \|\Sigma_t^{1/L}\|_2^{L-2} \sqrt{2L(L-1)r} \, \epsilon^{1/2}
    \end{align*}
    {\color{red}
    where $\epsilon = L(W_t)$ is the population excess loss at time $t$ and $\|\Sigma_t^{1/L}\|_2^{L-2}$ is bounded.
    }
    
    By Lemma \ref{lemma:loss_convergence}, the excess loss satisfies
    \begin{align*}
        \tilde{L}_t \leq \tilde{L}_0 \cdot e^{-2L\alpha\eta t}
    \end{align*}
    {\color{red}
    where $\alpha$ is a positive lower bound of $\min_{i = 1, \ldots, r} (\lambda_{i,t}^{2L-2})$ for all $t\in \mathbb{N}$.
    }
    
    Combining these two results yields
    \begin{align*}
        \|H_{f,t}\|_2 = O\left( \epsilon^{1/2} \right) = O\left( e^{-L \alpha \eta t} \right)
    \end{align*}
    which completes the proof.
\end{proof}

{\color{red}
We now return back to the analysis of the outer-product Hessian $H_{o,t}$. Unlike the simplified setting where layer weights are identical scalars, we consider a generalized case where the singular values $\lambda_{i,t}$ may vary within a small interval. The following lemma establishes that the eigenvalues of $H_{o,t}$ still exhibit a distinct two-cluster structure.}

\begin{lemma}\label{lemma generalize eigenvalues}
    Under Assumptions \ref{assumption 12} and \ref{assumption 4}, and assuming $r \leq d_* = \min\{d_0, d_L\}$, suppose that at time $t$, $\Sigma_t^{1/L} = \text{diag}(\lambda_{1,t}, \lambda_{2,t}, \ldots, \lambda_{r,t}, 0, \ldots, 0)$. Suppose $\lambda_{i,t} \in [m_t - \delta_t, m_t + \delta_t]$ for $i = 1, 2, \ldots, r$ where $\delta_t > 0$. As long as $\frac{m_t + \delta_t}{m_t - \delta_t} < L^{\frac{1}{2(L-1)}}$, the outer product Hessian $H_{o,t}$ exhibits a three-part spectral structure:
    \begin{enumerate}
        \item \textbf{(Dominant space)} $r^2$ eigenvalues lying in $[L(m_t - \delta_t)^{2(L-1)}, L(m_t + \delta_t)^{2(L-1)}]$.
        \item \textbf{(Bulk space)} $(d_0 + d_L - 2r)r$ eigenvalues lying in $[(m_t - \delta_t)^{2(L-1)}, (m_t + \delta_t)^{2(L-1)}]$.
        \item \textbf{(Zero space)} $(d_L - r)(d_0 - r)$ eigenvalues equal to zero.
    \end{enumerate}
    The total number of nonzero eigenvalues is $(d_0 + d_L - r)r$.
\end{lemma}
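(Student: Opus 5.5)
The plan is to repeat the computation of Lemma \ref{lemma 1}, replacing the scalar $\mu$ by the diagonal matrix $D := \text{diag}(\lambda_{1,t},\ldots,\lambda_{r,t})$. As in Lemma \ref{lemma 1}, $H_{o,t} = A_{o,t} B_o A_{o,t}^\top$ with $B_o = I_{d_L}\otimes \Sigma_{xx}$. Restricted to the data support, $B_o$ acts as the identity, so the nonzero eigenvalues of $H_{o,t}$ coincide with those of the Gram matrix $A_{o,t}^\top A_{o,t}$.

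By the shared spectral structure (Lemma \ref{lemma 2}), the layer products take the form
\[
W_t^{L:l+1} \simeq U_1 D^{L-l}, \qquad W_t^{1:l-1} \simeq V_1 D^{l-1}.
\]
Hence each summand of the Gram matrix is
\[
W_t^{L:l+1}(W_t^{L:l+1})^\top \otimes W_t^{1:l-1}(W_t^{1:l-1})^\top = U_1 D^{2(L-l)} U_1^\top \otimes V_1 D^{2(l-1)} V_1^\top,
\]
with the conventions $W_t^{1:0}=I_{d_0}$ and $W_t^{L:L+1}=I_{d_L}$ at the two endpoints $l=1$ and $l=L$.

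All these summands are simultaneously diagonal in the orthonormal basis $\{u_i\otimes v_j\}$ from the proof of Lemma \ref{lemma 1}. So I will compute the eigenvalue attached to each basis vector explicitly.
\begin{itemize}
\item For $i,j\le r$ the eigenvalue is $\sum_{l=1}^{L}\lambda_i^{2(L-l)}\lambda_j^{2(l-1)}$.
\item For $i\le r<j$ only the $l=L$ term (the $I_{d_0}$ factor) survives, giving $\lambda_i^{2(L-1)}$.
\item For $j\le r<i$ only the $l=1$ term survives, giving $\lambda_j^{2(L-1)}$.
\item For $i,j>r$ the eigenvalue is $0$.
\end{itemize}

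This gives exact eigenvalues and exact multiplicities $r^2$, $r(d_0-r)+r(d_L-r)$ and $(d_L-r)(d_0-r)$, matching the count in Lemma \ref{lemma 1}. The bounds then follow from monotonicity. Each term in the dominant sum has total degree $2(L-1)$ in variables lying in $[m_t-\delta_t,m_t+\delta_t]$, so each lies in $[(m_t-\delta_t)^{2(L-1)},(m_t+\delta_t)^{2(L-1)}]$. Summing the $L$ terms gives the dominant interval $[L(m_t-\delta_t)^{2(L-1)},\,L(m_t+\delta_t)^{2(L-1)}]$. The bulk eigenvalues $\lambda^{2(L-1)}$ lie directly in the bulk interval.

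The last step is to use the hypothesis $\frac{m_t+\delta_t}{m_t-\delta_t}<L^{1/(2(L-1))}$. It is equivalent to $(m_t+\delta_t)^{2(L-1)}<L(m_t-\delta_t)^{2(L-1)}$, so the two intervals are disjoint and the dominant and bulk clusters are genuinely separated. Together with $m_t-\delta_t>0$ (which I assume, since $\lambda_{i,t}>0$ by Lemma \ref{lemma eigenvalues convergence}), this also ensures all $(d_0+d_L-r)r$ listed eigenvalues are strictly positive, so the zero space has exactly the stated dimension.

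The main obstacle is the bookkeeping in the non-square case. When $d_*<\max(d_0,d_L)$, $B_o$ is only a projection, and one must check that $U_1,V_1$ lie in the relevant supports so that $H_{o,t}$ and $A_{o,t}^\top A_{o,t}$ share their nonzero spectra. I would handle this as in Lemma \ref{lemma 1}: reduce to the $d_*=d_0$ case, noting that the column space of $V$ lies in the support of $\Sigma_{xx}$, so $B_o$ acts as the identity on the range of $A_{o,t}^\top$. The rest is direct computation.
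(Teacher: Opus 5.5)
Your argument is the paper's proof. You pass to the Gram matrix $A_{o,t}^\top A_{o,t}$, diagonalize all $L$ Kronecker summands simultaneously in the basis $\{u_i\otimes v_j\}$, bound the dominant sums termwise, and read the gap off the hypothesis. Your explicit requirement $m_t-\delta_t>0$ is a small but real improvement, since the termwise monotonicity bound needs it and the paper leaves it implicit. There is one minor slip: with your conventions the $I_{d_0}$ factor sits in the $l=1$ term. So it is $l=1$ that survives for $i\le r<j$, and $l=L$ for $j\le r<i$. Your eigenvalues are nevertheless correct.

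The claim that is actually wrong is your treatment of the non-square case. When $d_0>d_L=d_*$, Assumption \ref{assumption 12} makes $\Sigma_{xx}$ a rank-$d_L$ projection on $\mathbb{R}^{d_0}$. The range of $A_{o,t}^\top$ contains the range of $W_t^{L:2}\otimes I_{d_0}$, which is all of $\mathrm{span}(u_1,\dots,u_r)\otimes\mathbb{R}^{d_0}$. That includes vectors $u_i\otimes v_j$ with $v_j$ orthogonal to the support of $\Sigma_{xx}$. Since $B_o(W_t^{L:2}\otimes I_{d_0})=W_t^{L:2}\otimes\Sigma_{xx}$, $B_o$ annihilates these vectors rather than acting as the identity on the range of $A_{o,t}^\top$.

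The nonzero spectrum of $H_{o,t}$ is therefore that of $B_o^{1/2}A_{o,t}^\top A_{o,t}B_o^{1/2}$, which drops every $u_i\otimes v_j$ with $j>d_*$. The bulk multiplicity becomes $r(d_*-r)+(d_L-r)r=(d_*+d_L-2r)r$. That matches the main-text Theorem \ref{theorem main result}, not the $(d_0+d_L-2r)r$ stated in this lemma.

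So the lemma as written, with $d_0$, holds only for $d_0\le d_L$. That is the only case the paper's proof actually treats: it sets $d_*=d_0$ so that $B_o=I$ exactly. You should restrict to that case too, rather than assert a reduction that does not hold.
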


\begin{proof}
    According to \cite{singh2021analytic}, for a deep linear network at time $t$,
    \begin{align*}
        H_{o,t} = A_{o,t} B_o A_{o,t}^\top
    \end{align*}
    where $B_o = I_{d_L} \otimes \Sigma_{xx} \in \mathbb{R}^{d_L d_0 \times d_L d_0}$, and
    \begin{align*}
        A_{o,t}^\top = \left( W_t^{L:2} \otimes I_{d_0}, \ldots, W_t^{L:l+1} \otimes W_t^{1:l-1}, \ldots, I_{d_L} \otimes W_t^{1:L-1} \right)
    \end{align*}
    
    Under Assumption \ref{assumption 12}, $\Sigma_{xx} \simeq I_{d_*}$. When $d_* = d_0 \leq d_L$, we have $\Sigma_{xx} = I_{d_0}$, so $B_o = I_{d_L} \otimes I_{d_0} = I_{d_L d_0}$. In this case, the eigenvalues of $H_{o,t} = A_{o,t} A_{o,t}^\top$ are identical to those of $A_{o,t}^\top A_{o,t}$.
    
    By Lemma \ref{lemma 2}, the eigenvectors of the weight matrices are invariant during training. Thus, at time $t$, we have $W_t^{L:1} = U \Sigma_t V^\top$, where $\Sigma_t = \text{diag}(\lambda_{1,t}^L, \ldots, \lambda_{r,t}^L, 0, \ldots, 0)$ and $\Sigma_t^{1/L} = \text{diag}(\lambda_{1,t}, \ldots, \lambda_{r,t}, 0, \ldots, 0)$.
    
    Let $U_1 \in \mathbb{R}^{d_L \times r}$ and $V_1 \in \mathbb{R}^{d_0 \times r}$ denote the first $r$ columns of $U$ and $V$, corresponding to the nonzero singular values. We compute the weight matrix products at time $t$:
    \begin{align*}
        W_t^{L:2}(W_t^{L:2})^\top &{\color{red}=} U_1 \, \text{diag}(\lambda_{1,t}^{2(L-1)}, \ldots, \lambda_{r,t}^{2(L-1)}) \, U_1^\top \\
        W_t^{1:L-1}(W_t^{1:L-1})^\top &{\color{red}=}  V_1 \, \text{diag}(\lambda_{1,t}^{2(L-1)}, \ldots, \lambda_{r,t}^{2(L-1)}) \, V_1^\top
    \end{align*}
    And for all $2 \leq l \leq L-1$,
    \begin{align*}
        W_t^{L:l+1}(W_t^{L:l+1})^\top &{\color{red}=}  U_1 \, \text{diag}(\lambda_{1,t}^{2(L-l)}, \ldots, \lambda_{r,t}^{2(L-l)}) \, U_1^\top \\
        W_t^{1:l-1}(W_t^{1:l-1})^\top &{\color{red}=}  V_1 \, \text{diag}(\lambda_{1,t}^{2(l-1)}, \ldots, \lambda_{r,t}^{2(l-1)}) \, V_1^\top
    \end{align*}
    
    Computing the Gram matrix $A_{o,t}^\top A_{o,t}$:
    \begin{align*}
        A_{o,t}^\top A_{o,t} &= W_t^{L:2}(W_t^{L:2})^\top \otimes I_{d_0} + I_{d_L} \otimes W_t^{1:L-1}(W_t^{1:L-1})^\top \\
        &\quad + \sum_{l=2}^{L-1} W_t^{L:l+1}(W_t^{L:l+1})^\top \otimes W_t^{1:l-1}(W_t^{1:l-1})^\top
    \end{align*}
    
    Let $\Lambda_U^{(l)} = \text{diag}(\lambda_{1,t}^{2(L-l)}, \ldots, \lambda_{r,t}^{2(L-l)})$ and $\Lambda_V^{(l)} = \text{diag}(\lambda_{1,t}^{2(l-1)}, \ldots, \lambda_{r,t}^{2(l-1)})$. Substituting:
    \begin{align*}
        A_{o,t}^\top A_{o,t} &= U_1 \Lambda_U^{(1)} U_1^\top \otimes I_{d_0} + I_{d_L} \otimes V_1 \Lambda_V^{(L)} V_1^\top + \sum_{l=2}^{L-1} U_1 \Lambda_U^{(l)} U_1^\top \otimes V_1 \Lambda_V^{(l)} V_1^\top
    \end{align*}
    
    To analyze the eigenvalues, let $\{u_1, \ldots, u_r\}$ be the columns of $U_1$, extended to an orthonormal basis $\{u_1, \ldots, u_{d_L}\}$ for $\mathbb{R}^{d_L}$. Similarly, let $\{v_1, \ldots, v_r\}$ be the columns of $V_1$, extended to an orthonormal basis $\{v_1, \ldots, v_{d_0}\}$ for $\mathbb{R}^{d_0}$. The vectors $u_i \otimes v_j$ form an orthonormal basis for $\mathbb{R}^{d_L d_0}$.
    
    Note that $U_1 \Lambda_U^{(l)} U_1^\top u_i = \lambda_{i,t}^{2(L-l)} u_i$ if $i \leq r$ and $U_1 \Lambda_U^{(l)} U_1^\top u_i = 0$ if $i > r$. Similarly for $V_1 \Lambda_V^{(l)} V_1^\top$. We compute the action of $A_{o,t}^\top A_{o,t}$ on these basis vectors for each case:
    
    \textbf{(Dominant space)} If $i \leq r$ and $j \leq r$:
    \begin{align*}
        A_{o,t}^\top A_{o,t} (u_i \otimes v_j) &= \lambda_{i,t}^{2(L-1)} (u_i \otimes v_j) + \lambda_{j,t}^{2(L-1)} (u_i \otimes v_j) + \sum_{l=2}^{L-1} \lambda_{i,t}^{2(L-l)} \lambda_{j,t}^{2(l-1)} (u_i \otimes v_j) \\
        &= \left( \sum_{l=1}^{L} \lambda_{i,t}^{2(L-l)} \lambda_{j,t}^{2(l-1)} \right) (u_i \otimes v_j)
    \end{align*}
    Denote the eigenvalue as $\nu_{i,j} = \sum_{l=1}^{L} \lambda_{i,t}^{2(L-l)} \lambda_{j,t}^{2(l-1)}$. When $i = j$, this equals $L \lambda_{i,t}^{2(L-1)}$. When $i \neq j$ and $\lambda_{i,t} \neq \lambda_{j,t}$, this equals $\frac{\lambda_{i,t}^{2L} - \lambda_{j,t}^{2L}}{\lambda_{i,t}^2 - \lambda_{j,t}^2}$. In all cases, since $\lambda_{i,t}, \lambda_{j,t} \in [m_t - \delta_t, m_t + \delta_t]$:
    \begin{align*}
        L(m_t - \delta_t)^{2(L-1)} \leq \nu_{i,j} \leq L(m_t + \delta_t)^{2(L-1)}
    \end{align*}
    This gives eigenvalue multiplicity $r \times r = r^2$.
    
    \textbf{(Bulk space, part 1)} If $i \leq r$ and $j > r$:
    \begin{align*}
        A_{o,t}^\top A_{o,t} (u_i \otimes v_j) &= \lambda_{i,t}^{2(L-1)} (u_i \otimes v_j) + 0 + 0 = \lambda_{i,t}^{2(L-1)} (u_i \otimes v_j)
    \end{align*}
    The eigenvalue is $\nu_{i,j} = \lambda_{i,t}^{2(L-1)} \in [(m_t - \delta_t)^{2(L-1)}, (m_t + \delta_t)^{2(L-1)}]$. This gives eigenvalue multiplicity $r \times (d_0 - r) = r(d_0 - r)$.
    
    \textbf{(Bulk space, part 2)} If $i > r$ and $j \leq r$:
    \begin{align*}
        A_{o,t}^\top A_{o,t} (u_i \otimes v_j) &= 0 + \lambda_{j,t}^{2(L-1)} (u_i \otimes v_j) + 0 = \lambda_{j,t}^{2(L-1)} (u_i \otimes v_j)
    \end{align*}
    The eigenvalue is $\nu_{i,j} = \lambda_{j,t}^{2(L-1)} \in [(m_t - \delta_t)^{2(L-1)}, (m_t + \delta_t)^{2(L-1)}]$. This gives eigenvalue multiplicity $(d_L - r) \times r = (d_L - r)r$.
    
    \textbf{(Zero space)} If $i > r$ and $j > r$:
    \begin{align*}
        A_{o,t}^\top A_{o,t} (u_i \otimes v_j) = 0
    \end{align*}
    This gives eigenvalue $0$ with multiplicity $(d_L - r) \times (d_0 - r) = (d_L - r)(d_0 - r)$.
    
    Combining the two parts of the bulk space, the total multiplicity is:
    \begin{align*}
        r(d_0 - r) + (d_L - r)r = (d_0 + d_L - 2r)r
    \end{align*}
    
    The condition $\frac{m_t + \delta_t}{m_t - \delta_t} < L^{\frac{1}{2(L-1)}}$ is equivalent to
    \begin{align*}
        (m_t + \delta_t)^{2(L-1)} < L (m_t - \delta_t)^{2(L-1)}
    \end{align*}
    which guarantees a spectral gap between the dominant space and bulk space:
    \begin{align*}
        \min_{i,j \leq r} \nu_{i,j} \geq L(m_t - \delta_t)^{2(L-1)} > (m_t + \delta_t)^{2(L-1)} \geq \max_{\substack{i \leq r, j > r \\ \text{or } i > r, j \leq r}} \nu_{i,j}
    \end{align*}
    
    In summary, the outer product Hessian $H_{o,t}$ has:
    \begin{itemize}
        \item $r^2$ eigenvalues in the dominant space, lying in $[L(m_t - \delta_t)^{2(L-1)}, L(m_t + \delta_t)^{2(L-1)}]$;
        \item $(d_0 + d_L - 2r)r$ eigenvalues in the bulk space, lying in $[(m_t - \delta_t)^{2(L-1)}, (m_t + \delta_t)^{2(L-1)}]$;
        \item $(d_L - r)(d_0 - r)$ zero eigenvalues.
    \end{itemize}
    The total number of nonzero eigenvalues is $r^2 + (d_0 + d_L - 2r)r = (d_0 + d_L - r)r$.
\end{proof}

{\color{red}
In addition to the eigenvalues, we also characterize the eigenvectors for the generalized case.
}

\begin{lemma}
\label{lemma eigenvectors outer product hessian}
    Under Assumptions \ref{assumption 12} and \ref{assumption 4}, and assuming $r \leq d_* = \min\{d_0, d_L\}$, suppose that $\{u_1, u_2, \ldots, u_{d_L}\}$ and $\{v_1, v_2, \ldots, v_{d_0}\}$ are the columns of initialized matrices $U$ (extended to an orthonormal basis of $\mathbb{R}^{d_L}$) and $V$ (extended to an orthonormal basis of $\mathbb{R}^{d_0}$), respectively. Then the eigenvectors of the outer product Hessian at time $t$ corresponding to non-zero eigenvalues can be written as
    \begin{align*}
        A_{o,t} (u_i \otimes v_j), \quad \text{for } (i \leq r \text{ and } j \leq d_0) \text{ or } (i > r \text{ and } j \leq r)
    \end{align*}
    where $A_{o,t}^\top = \left( W_t^{L:2} \otimes I_{d_0}, \ldots, W_t^{L:l+1} \otimes W_t^{1:l-1}, \ldots, I_{d_L} \otimes W_t^{1:L-1} \right)$ as defined in \cite{singh2021analytic}. Moreover:
    \begin{itemize}
        \item The $r^2$ eigenvectors $A_{o,t}(u_i \otimes v_j)$ for $i \leq r$ and $j \leq r$ span the dominant eigenspace.
        \item The $(d_0 + d_L - 2r)r$ eigenvectors $A_{o,t}(u_i \otimes v_j)$ for ($i \leq r$ and $j > r$) or ($i > r$ and $j \leq r$) span the bulk eigenspace.
    \end{itemize}
\end{lemma}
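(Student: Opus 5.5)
The plan is the standard correspondence between the eigenvectors of a Gram matrix $A^\top A$ and those of $AA^\top$. As in the proof of Lemma \ref{lemma generalize eigenvalues}, we work in the case $\Sigma_{xx} = I_{d_0}$, so $B_o = I$ and $H_{o,t} = A_{o,t}A_{o,t}^\top$. That proof already shows that each $u_i \otimes v_j$ is an eigenvector of $G_t := A_{o,t}^\top A_{o,t}$, with eigenvalue $\nu_{i,j}$ given as follows:
\begin{itemize}
\item $\nu_{i,j} = \sum_{l=1}^L \lambda_{i,t}^{2(L-l)}\lambda_{j,t}^{2(l-1)}$ when $i,j\le r$;
\item $\nu_{i,j} = \lambda_{i,t}^{2(L-1)}$ when $i\le r<j$;
\item $\nu_{i,j} = \lambda_{j,t}^{2(L-1)}$ when $j\le r<i$;
\item $\nu_{i,j} = 0$ otherwise.
\end{itemize}
We take these facts as given.

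\textbf{Step 1 (transfer of eigenvectors).} For any eigenpair $G_t w = \nu w$, compute
\[
H_{o,t}(A_{o,t}w) = A_{o,t}(A_{o,t}^\top A_{o,t} w) = \nu\, A_{o,t}w .
\]
Hence $A_{o,t}w$ is an eigenvector of $H_{o,t}$ with the same eigenvalue, provided it is nonzero. Nonvanishing follows from $\|A_{o,t}w\|_2^2 = w^\top G_t w = \nu\|w\|_2^2$. So $A_{o,t}(u_i\otimes v_j) \neq 0$ exactly when $\nu_{i,j}>0$. By Lemma \ref{lemma eigenvalues lower bound}, or simply by the hypothesis $\lambda_{i,t}\ge m_t-\delta_t>0$, this happens precisely for the index pairs with $i\le r$ or $j\le r$.

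\textbf{Step 2 (orthogonality and completeness).} For $w=u_i\otimes v_j$ and $w'=u_{i'}\otimes v_{j'}$,
\[
\langle A_{o,t}w, A_{o,t}w'\rangle = w^\top G_t w' = \nu_{i',j'}\langle w,w'\rangle ,
\]
which is zero for distinct pairs because $\{u_i\otimes v_j\}$ is orthonormal. Therefore the normalized vectors $A_{o,t}(u_i\otimes v_j)/\sqrt{\nu_{i,j}}$ form an orthonormal family. There are $r^2 + (d_0+d_L-2r)r$ of them, which is exactly $\mathrm{rk}(H_{o,t})$ as counted in Lemma \ref{lemma generalize eigenvalues}. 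Since each lies in the range of $A_{o,t}$, which equals the range of $H_{o,t}$, they span the entire nonzero eigenspace of $H_{o,t}$.

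\textbf{Step 3 (assignment to clusters).} We invoke the spectral gap established in Lemma \ref{lemma generalize eigenvalues} under the condition $(m_t+\delta_t)/(m_t-\delta_t)<L^{1/(2(L-1))}$. For $i,j\le r$, every $\nu_{i,j}$ lies in the dominant interval. For mixed index pairs, every $\nu_{i,j}$ lies in the strictly lower bulk interval. The spectral theorem then gives the dominant eigenspace as $\mathrm{span}\{A_{o,t}(u_i\otimes v_j): i,j\le r\}$ and the bulk eigenspace as the span of the remaining $(d_0+d_L-2r)r$ vectors, because the two intervals are disjoint.

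\textbf{Expected obstacle.} The main care point is the index set. The statement writes ``$i\le r$ and $j\le d_0$'', and this must be read as covering both the dominant pairs ($j\le r$) and part 1 of the bulk ($j>r$). A second point is the case $d_*=d_L<d_0$, where $B_o\neq I$. Here I would replace $A_{o,t}$ by $A_{o,t}B_o^{1/2}$. Because the column space of $V_1$ lies in the support of $\Sigma_{xx}$, $B_o^{1/2}$ acts as the identity on every $u_i\otimes v_j$ with $j\le r$, so the same argument goes through. For pairs with $j>r$, I would need to check directly that they fall in the kernel of the projector or remain eigenvectors.
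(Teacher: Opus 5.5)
Your proposal is correct and follows the paper's route. The paper also restricts to $d_*=d_0\le d_L$ so that $B_o=I$, checks that each $u_i\otimes v_j$ is an eigenvector of $A_{o,t}^\top A_{o,t}$, and then simply cites the correspondence between eigenvectors of $M^\top M$ and $MM^\top$; your nonvanishing, orthogonality, rank-count and gap-based assignment steps just make explicit what the paper leaves implicit.

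On the case you left open, $d_L<d_0$, the check comes out negatively. As the paper assumes, the columns of $V$ span the support of $\Sigma_{xx}$, so for $i\le r$ and $j>d_*$ we get $B_o(u_i\otimes v_j)=u_i\otimes\Sigma_{xx}v_j=0$. Hence $H_{o,t}A_{o,t}(u_i\otimes v_j)=\lambda_{i,t}^{2(L-1)}A_{o,t}B_o(u_i\otimes v_j)=0$, while $\|A_{o,t}(u_i\otimes v_j)\|_2^2=\lambda_{i,t}^{2(L-1)}>0$. These vectors are therefore nonzero null vectors of $H_{o,t}$, not bulk eigenvectors. So in that case the statement holds only with the index range $j\le d_*$ and bulk count $(d_*+d_L-2r)r$, as in the main-text theorem, rather than $(d_0+d_L-2r)r$.
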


\begin{proof}
    Recall from Lemma \ref{lemma generalize eigenvalues} that under Assumption \ref{assumption 12}, $\Sigma_{xx} \simeq I_{d_*}$. When $d_* = d_0 \leq d_L$, we have $\Sigma_{xx} = I_{d_0}$, so $B_o = I_{d_L d_0}$. The eigenvalues of $H_{o,t} = A_{o,t} A_{o,t}^\top$ are identical to those of $A_{o,t}^\top A_{o,t}$.
    
    Let $U_1 \in \mathbb{R}^{d_L \times r}$ and $V_1 \in \mathbb{R}^{d_0 \times r}$ denote the first $r$ columns of $U$ and $V$, corresponding to the nonzero singular values. The Gram matrix $A_{o,t}^\top A_{o,t}$ can be expressed as:
    \begin{align*}
        A_{o,t}^\top A_{o,t} &= U_1 \Lambda_U^{(1)} U_1^\top \otimes I_{d_0} + I_{d_L} \otimes V_1 \Lambda_V^{(L)} V_1^\top + \sum_{l=2}^{L-1} U_1 \Lambda_U^{(l)} U_1^\top \otimes V_1 \Lambda_V^{(l)} V_1^\top
    \end{align*}
    where $\Lambda_U^{(l)} = \text{diag}(\lambda_{1,t}^{2(L-l)}, \ldots, \lambda_{r,t}^{2(L-l)})$ and $\Lambda_V^{(l)} = \text{diag}(\lambda_{1,t}^{2(l-1)}, \ldots, \lambda_{r,t}^{2(l-1)})$.
    
    We show that $u_i \otimes v_j$ are eigenvectors of $A_{o,t}^\top A_{o,t}$. Using the Kronecker product property $(A \otimes B)(x \otimes y) = (Ax) \otimes (By)$, and noting that:
    \begin{enumerate}
        \item $U_1 \Lambda_U^{(l)} U_1^\top u_i = \lambda_{i,t}^{2(L-l)} u_i$ if $i \leq r$, and $U_1 \Lambda_U^{(l)} U_1^\top u_i = \mathbf{0}$ if $i > r$.
        \item $V_1 \Lambda_V^{(l)} V_1^\top v_j = \lambda_{j,t}^{2(l-1)} v_j$ if $j \leq r$, and $V_1 \Lambda_V^{(l)} V_1^\top v_j = \mathbf{0}$ if $j > r$.
    \end{enumerate}
    
    We analyze the following cases:
    
    \textbf{Case 1 (Dominant space):} If $i \leq r$ and $j \leq r$, then
    \begin{align*}
        A_{o,t}^\top A_{o,t} (u_i \otimes v_j) &= \lambda_{i,t}^{2(L-1)} (u_i \otimes v_j) + \lambda_{j,t}^{2(L-1)} (u_i \otimes v_j) + \sum_{l=2}^{L-1} \lambda_{i,t}^{2(L-l)} \lambda_{j,t}^{2(l-1)} (u_i \otimes v_j) \\
        &= \left( \sum_{l=1}^{L} \lambda_{i,t}^{2(L-l)} \lambda_{j,t}^{2(l-1)} \right) (u_i \otimes v_j)
    \end{align*}
    This gives $r \times r = r^2$ eigenvectors spanning the dominant eigenspace.
    
    \textbf{Case 2 (Bulk space, part 1):} If $i \leq r$ and $j > r$, then
    \begin{align*}
        A_{o,t}^\top A_{o,t} (u_i \otimes v_j) &= \lambda_{i,t}^{2(L-1)} (u_i \otimes v_j) + 0 + 0 = \lambda_{i,t}^{2(L-1)} (u_i \otimes v_j)
    \end{align*}
    This gives $r \times (d_0 - r) = r(d_0 - r)$ eigenvectors.
    
    \textbf{Case 3 (Bulk space, part 2):} If $i > r$ and $j \leq r$, then
    \begin{align*}
        A_{o,t}^\top A_{o,t} (u_i \otimes v_j) &= 0 + \lambda_{j,t}^{2(L-1)} (u_i \otimes v_j) + 0 = \lambda_{j,t}^{2(L-1)} (u_i \otimes v_j)
    \end{align*}
    This gives $(d_L - r) \times r = (d_L - r)r$ eigenvectors.
    
    \textbf{Case 4 (Zero space):} If $i > r$ and $j > r$, then
    \begin{align*}
        A_{o,t}^\top A_{o,t} (u_i \otimes v_j) = 0
    \end{align*}
    This gives $(d_L - r) \times (d_0 - r)$ eigenvectors with zero eigenvalue.
    
    Combining Cases 2 and 3, the bulk eigenspace has $r(d_0 - r) + (d_L - r)r = (d_0 + d_L - 2r)r$ eigenvectors.
    
    By the standard relationship between eigenvectors of $M^\top M$ and $M M^\top$, the eigenvectors of $H_{o,t} = A_{o,t} A_{o,t}^\top$ corresponding to non-zero eigenvalues are $A_{o,t} (u_i \otimes v_j)$. Among these $(d_0 + d_L - r)r$ eigenvectors:
    \begin{itemize}
        \item $r^2$ eigenvectors (for $i \leq r$ and $j \leq r$) correspond to the dominant eigenspace with eigenvalues of order $L \lambda_{i,t}^{2(L-1)}$.
        \item $(d_0 + d_L - 2r)r$ eigenvectors (for ($i \leq r$ and $j > r$) or ($i > r$ and $j \leq r$)) correspond to the bulk eigenspace with eigenvalues of order $\lambda_{i,t}^{2(L-1)}$ or $\lambda_{j,t}^{2(L-1)}$.
    \end{itemize}
    This completes the proof.
\end{proof}


{\color{red}
Then we consider the gap between the idealized outer-product Hessian $H_{o,t}$ and the true Hessian $H_{\mathcal{L},t}$. By treating $H_{f,t}$ as a perturbation and applying Weyl's inequality, we show that the eigenvalues of the true Hessian remain close to those of $H_{o,t}$, preserving the bifurcated structure. 
}
\begin{lemma}
\label{lemma true hessian eigenvalues}
    Under Assumption \ref{assumption 12} and \ref{assumption 4}, let $f_{i,j,t}(k)$ be defined as the $k$th largest value of $\sum_{l=1}^L \lambda_{i,t}^{2(L-l)} \lambda_{j,t}^{2(l-1)}$ for $i = 1, \ldots, d_L$ and $j = 1, \ldots, r$. Then for the true Hessian $H_{L,t}$ at time $t$, its eigenvalues can be bounded as:
    \begin{align*}
        f_{i,j,t}(k) - O(e^{-L\alpha\eta t}) \leq \lambda_k(H_{L,t}) \leq f_{i,j,t}(k) + O(e^{-L\alpha\eta t})
    \end{align*}
    where $\lambda_k$ represents the $k$th largest eigenvalue, and {\color{red}
     $\alpha > 0$ is the uniform lower bound established in \cref{lemma eigenvalues lower bound}, satisfying $\min_{1,\cdots,r}(\lambda_{i,t}^{2L-2}) \geq \alpha$ for all $t \in \mathbb{N}$.}, which is a positive constant that is independent of $t$.
\end{lemma}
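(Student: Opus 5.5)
The argument is a perturbation bound. Write $H_{L,t}=H_{o,t}+H_{f,t}$ and treat $H_{f,t}$ as a small symmetric perturbation of the outer-product Hessian. Since both pieces are real symmetric, Weyl's inequality gives, for every index $k$,
\begin{align*}
\lambda_k(H_{o,t}) - \|H_{f,t}\|_2 \le \lambda_k(H_{L,t}) \le \lambda_k(H_{o,t}) + \|H_{f,t}\|_2 .
\end{align*}
The proof then has two steps: identify $\lambda_k(H_{o,t})$ with $f_{i,j,t}(k)$, and bound $\|H_{f,t}\|_2$ by $O(e^{-L\alpha\eta t})$.

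For the first step, I would reuse the diagonalisation from Lemma \ref{lemma generalize eigenvalues} and Lemma \ref{lemma eigenvectors outer product hessian}.
\begin{itemize}
\item By Lemma \ref{lemma 2}, the singular vectors $U,V$ stay fixed, so the products $u_i\otimes v_j$ form an orthonormal eigenbasis of $A_{o,t}^\top A_{o,t}$.
\item The eigenvalue on $u_i\otimes v_j$ is $\nu_{i,j}=\sum_{l=1}^L \lambda_{i,t}^{2(L-l)}\lambda_{j,t}^{2(l-1)}$, with the convention $\lambda_{i,t}=0$ for $i>r$ (and the $0^0=1$ reading of the boundary terms, consistent with the bulk computation there).
\item The nonzero spectrum of $H_{o,t}=A_{o,t}B_oA_{o,t}^\top$ coincides with that of $A_{o,t}^\top A_{o,t}$ when $B_o=I$. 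When $d_*<d_0$, I would use $B_o^{1/2}A_{o,t}^\top A_{o,t}B_o^{1/2}$ instead, as in the proof sketch.
\item The eigenvalues over the index set $i\le d_L$, $j\le r$ are exactly the dominant and one bulk family. The other bulk family ($i\le r$, $j>r$) gives values $\lambda_{i,t}^{2(L-1)}$, which already appear as $\nu_{i',j}$-type values. All remaining eigenvalues are zero.
\end{itemize}
Sorting these values gives $\lambda_k(H_{o,t})=f_{i,j,t}(k)$, where $f$ is understood as the $k$-th largest value of the full multiset of eigenvalues of $H_{o,t}$, padded with zeros. I would state this reading of the indexing explicitly in the proof.

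For the second step, Lemma \ref{lemma: 2norm-t} gives $\|H_{f,t}\|_2=O(e^{-L\alpha\eta t})$. This follows by combining Lemma \ref{lemma H_f decay with loss} with the linear convergence in Lemma \ref{lemma:loss_convergence}. Two checks are needed.
\begin{itemize}
\item The prefactor $\|\Sigma_t^{1/L}\|_2^{L-2}$ is uniformly bounded. By the invariance $\lambda_{i,t}\in(0,M]$ from Lemma \ref{lemma eigenvalues convergence}, it is at most $M^{L-2}$.
\item The $\epsilon$ entering Lemma \ref{lemma H_f decay with loss} must be controlled by the excess loss $\tilde L_t$, not the raw loss. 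The reason is that $\Omega=W_t^{L:1}\Sigma_{xx}-\Sigma_{yx}=U(\Sigma_t\mathcal I_r-\mathcal I_r)V^\top$ depends only on $1-\lambda_{i,t}^L$. I would bound $\|\Omega\|_2\le\max_i|1-\lambda_{i,t}^L|\le\sqrt{2\tilde L_t}$ directly, which sidesteps the irreducible error.
\end{itemize}
Substituting into Weyl's inequality completes the proof, with $\alpha$ taken from Lemma \ref{lemma eigenvalues lower bound}.

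The main obstacle is the second step: making sure the $O(\cdot)$ constant is genuinely independent of $t$, and that $H_{f,t}$, not just the loss, decays. The direct computation of $\Omega$ above is what I would try first. A lesser bookkeeping issue is matching the index set in the definition of $f_{i,j,t}$ with the full eigenvalue list of $H_{o,t}$, including the bulk family with $j>r$ and the zero eigenvalues.
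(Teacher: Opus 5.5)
Your proposal is correct and follows the paper's proof. Both apply Weyl's inequality to $H_{L,t}=H_{o,t}+H_{f,t}$, read $\lambda_k(H_{o,t})$ off Lemma~\ref{lemma generalize eigenvalues}, and take $\|H_{f,t}\|_2=O(e^{-L\alpha\eta t})$ from Lemma~\ref{lemma: 2norm-t}. Your added care also repairs three points the paper glosses over, without changing the argument: you use $\pm\|H_{f,t}\|_2$ on both sides of Weyl (the paper writes $\lambda_{\max}(H_{f,t})$ in the lower bound); you bound $\|\Omega\|_2\le\sqrt{2\tilde L_t}$ directly, so the irreducible loss never enters (Lemma~\ref{lemma H_f decay with loss} is stated via the raw loss); and you read $f_{i,j,t}(k)$ as the sorted full spectrum of $H_{o,t}$ (the stated index set $j\le r$ omits the $i\le r,\ j>r$ bulk family).
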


\begin{proof}
    Recall that $H_{L,t} = H_{o,t} + H_{f,t}$. As shown in Lemma \ref{lemma: 2norm-t}, $\|H_{f,t}\|_2 = O(e^{-L\alpha\eta t})$, which implies that $\lambda_{\max}(H_{f,t}) = O(e^{-L\alpha\eta t})$ for the functional Hessian at time $t$.
    
    Applying Weyl's Inequality yields:
    \begin{align*}
        \lambda_k(H_{o,t}) - \lambda_{\max}(H_{f,t}) \leq \lambda_k(H_{L,t}) \leq \lambda_k(H_{o,t}) + \lambda_{\max}(H_{f,t})
    \end{align*}
    
    By Lemma \ref{lemma generalize eigenvalues}, $\lambda_k(H_{o,t}) = f_{i,j,t}(k)$. Therefore:
    \begin{align*}
        f_{i,j,t}(k) - O(e^{-L\alpha\eta t}) \leq \lambda_k(H_{L,t}) \leq f_{i,j,t}(k) + O(e^{-L\alpha\eta t})
    \end{align*}
    which completes the proof.
\end{proof}

\begin{lemma-restated}[Shared Spectral Structure]
    Under Assumptions \ref{assumption 12} and \ref{assumption 4}, all weight matrices $W^k_t$ for $k = 1, \ldots, L$ share the same spectral structure $\Sigma_t^{1/L} = \text{diag}(\lambda_{1,t}, \lambda_{2,t}, \ldots, \lambda_{d_*,t})$ at any time $t \geq 0$. Specifically:
    \begin{enumerate}
        \item For $1 < k < L$: $W^k_t \simeq \Sigma_t^{1/L}$
        \item For $k = 1$: $W^1_t \simeq \Sigma_t^{1/L} V^\top$
        \item For $k = L$: $W^L_t \simeq U \Sigma_t^{1/L}$
    \end{enumerate}
    where $U \in \mathbb{R}^{d_L \times d_*}$ and $V \in \mathbb{R}^{d_0 \times d_*}$ are the left and right singular vector matrices from the balanced initialization, which remain constant throughout training. Furthermore, the eigenvalues $\lambda_{i,t}$ for $i = 1, \ldots, r$ evolve according to:
    \begin{align}
        \lambda_{i,t+1} = \lambda_{i,t} - \eta \lambda_{i,t}^{2L-1} + \eta \lambda_{i,t}^{L-1}
    \end{align}
    while the eigenvalues $\lambda_{i,t}$ for $i = r+1, \ldots, d_*$ remain unchanged at their initial values.
\end{lemma-restated}

\begin{proof}
    Under Assumption \ref{assumption 12}, since $d_* \leq \min\{d_{L-1}, d_{L-2}, \ldots, d_1\}$, the balanced initialization procedure yields:
    \begin{align*}
        W^k_0 &\simeq \Sigma^{1/L}, \quad \text{for } 1 < k < L \\
        W^1_0 &\simeq \Sigma^{1/L} V^\top \\
        W^L_0 &\simeq U \Sigma^{1/L}
    \end{align*}
    where $A = U \Sigma V^\top$ is the SVD of the initialization matrix $A$, and $\Sigma^{1/L} = \text{diag}(\lambda_{1,0}, \lambda_{2,0}, \ldots, \lambda_{d_*,0})$ with $\lambda_{i,0}$ being the $L$-th root of the $i$-th singular value.
    
    By Lemma \ref{lemma 2}, the eigenvectors of $W^k_t$ are invariant during training. For $1 < k < L$, the gradient is:
    \begin{align*}
        \nabla_{W^k} L(W^k_t) = \Sigma_t^{(2L-1)/L} - \Sigma_t^{(L-1)/L}
    \end{align*}
    which is a diagonal matrix. Therefore, the update $W^k_{t+1} = W^k_t - \eta \nabla_{W^k} L(W^k_t)$ only modifies the diagonal entries, preserving the diagonal structure.
    
    For $k = 1$, the gradient is:
    \begin{align*}
        \nabla_{W^1} L(W^1_t) = (\Sigma_t^{(2L-1)/L} - \Sigma_t^{(L-1)/L}) V^\top
    \end{align*}
    Thus:
    \begin{align*}
        W^1_{t+1} &= W^1_t - \eta \nabla_{W^1} L(W^1_t) \\
        &= \Sigma_t^{1/L} V^\top - \eta (\Sigma_t^{(2L-1)/L} - \Sigma_t^{(L-1)/L}) V^\top \\
        &= \left[ \Sigma_t^{1/L} - \eta (\Sigma_t^{(2L-1)/L} - \Sigma_t^{(L-1)/L}) \right] V^\top \\
        &= \Sigma_{t+1}^{1/L} V^\top
    \end{align*}
    where the right factor $V^\top$ remains unchanged.
    
    For $k = L$, the gradient is:
    \begin{align*}
        \nabla_{W^L} L(W^L_t) = U (\Sigma_t^{(2L-1)/L} - \Sigma_t^{(L-1)/L})
    \end{align*}
    Thus:
    \begin{align*}
        W^L_{t+1} &= W^L_t - \eta \nabla_{W^L} L(W^L_t) \\
        &= U \Sigma_t^{1/L} - \eta U (\Sigma_t^{(2L-1)/L} - \Sigma_t^{(L-1)/L}) \\
        &= U \left[ \Sigma_t^{1/L} - \eta (\Sigma_t^{(2L-1)/L} - \Sigma_t^{(L-1)/L}) \right] \\
        &= U \Sigma_{t+1}^{1/L}
    \end{align*}
    where the left factor $U$ remains unchanged.
    
    In all cases, the updated weight matrix shares the same diagonal structure $\Sigma_{t+1}^{1/L}$, where the $i$-th diagonal entry evolves according to:
    \begin{align*}
        \lambda_{i,t+1} = \lambda_{i,t} - \eta \lambda_{i,t}^{2L-1} + \eta \lambda_{i,t}^{L-1}
    \end{align*}
    
    By Lemma \ref{lemma eigenvalues convergence}, under Assumption \ref{assumption 12}, only the first $r$ eigenvalues $\lambda_{1,t}, \ldots, \lambda_{r,t}$ are effectively updated since $\Sigma_{xx} \simeq I_r$. The remaining eigenvalues $\lambda_{r+1,t}, \ldots, \lambda_{d_*,t}$ correspond to directions orthogonal to the support of $\Sigma_{xx}$, and thus their gradients are zero:
    \begin{align*}
        \lambda_{i,t+1} = \lambda_{i,t} = \lambda_{i,0}, \quad \text{for } i = r+1, \ldots, d_*
    \end{align*}
    
    By induction on $t$, all weight matrices $W^k_t$ share the same spectral structure $\Sigma_t^{1/L}$ at any time $t \geq 0$, which completes the proof.
\end{proof}

{\color{red}
Finally, we are equipped with all the results we need. we are now ready to state our main result regarding the Hessian bifurcation phenomenon in deep linear networks
}

\begin{theorem-restated}\ref{theorem main result}(Hessian Bifurcation)
    Under Assumptions \ref{assumption 12} and \ref{assumption 4}, and assuming $r \leq d_* = \min\{d_0, d_L\}$, consider a depth-$L$ deep linear neural network trained with gradient descent with step size {\color{red} $\eta < \min\big\{\frac{1}{L}, \frac{1}{M^{2L-2}}\big\}$}. Let $\lambda_{i,t}$ for $i = 1, \ldots, r$ denote the effective eigenvalues of the weight matrix $\Sigma_t^{1/L}$ at time $t$. Suppose that $\lambda_{i,t} \in [m_t - \delta_t, m_t + \delta_t]$ and the condition $(m_t + \delta_t)/(m_t - \delta_t) < L^{\frac{1}{2(L-1)}}$ holds. Then the Hessian $H_{L,t}$ exhibits a three-part spectral structure:
    \begin{enumerate}[leftmargin=*]
        \item \textbf{(Dominant Space)} There exist $r^2$ eigenvalues of $H_{L,t}$ lying in:
        \begin{equation}
        \begin{split}
            \bigg[ & L(m_t - \delta_t)^{2(L-1)} - O(e^{-L\alpha\eta t}), \\
                   & L(m_t + \delta_t)^{2(L-1)} + O(e^{-L\alpha\eta t}) \bigg]
        \end{split}
        \end{equation}
        
        \item \textbf{(Bulk Space)} There exist $(d_0 + d_L - 2r)r$ eigenvalues of $H_{L,t}$ lying in:
        \begin{equation}
        \begin{split}
            \bigg[ & (m_t - \delta_t)^{2(L-1)} - O(e^{-L\alpha\eta t}), \\
                   & (m_t + \delta_t)^{2(L-1)} + O(e^{-L\alpha\eta t}) \bigg]
        \end{split}
        \end{equation}
        
        \item \textbf{(Zero Space)} The remaining $(d_L - r)(d_0 - r)$ eigenvalues of $H_{L,t}$ are $O(e^{-L\alpha\eta t})$, converging to zero as $t \to \infty$.
    \end{enumerate}
    
    Moreover, let $\lambda_{\text{dom}}$ denote an arbitrary eigenvalue belonging to the Dominant Space and $\lambda_{\text{bulk}}$ denote an arbitrary eigenvalue belonging to the Bulk Space. Their ratio satisfies:
    \begin{equation}
        \frac{\lambda_{\text{dom}}}{\lambda_{\text{bulk}}} = \Theta(L),
    \end{equation}
    where $\alpha$ is a positive constant independent of $t$, provided $t$ is sufficiently large.
\end{theorem-restated}

\begin{proof}
    By Lemma \ref{lemma generalize eigenvalues}, under the assumption $r \leq d_*$, the outer product Hessian $H_{o,t}$ at time $t$ has $(d_0 + d_L - r)r$ non-zero eigenvalues. These eigenvalues can be indexed by pairs $(i, j)$ and exhibit three different behaviors:
    
    \textbf{Case 1 (Dominant space):} If $i \leq r$ and $j \leq r$, the eigenvalue is
    \begin{align*}
        \nu_{i,j} = \sum_{l=1}^L \lambda_{i,t}^{2(L-l)} \lambda_{j,t}^{2(l-1)}
    \end{align*}
    When $i = j$, this equals $L \lambda_{i,t}^{2(L-1)}$. When $i \neq j$ and $\lambda_{i,t} \neq \lambda_{j,t}$, this equals $\frac{\lambda_{i,t}^{2L} - \lambda_{j,t}^{2L}}{\lambda_{i,t}^2 - \lambda_{j,t}^2}$. When $i \neq j$ but $\lambda_{i,t} = \lambda_{j,t}$, this also equals $L \lambda_{i,t}^{2(L-1)}$ by continuity.
    
    \textbf{Case 2 (Bulk space, part 1):} If $i \leq r$ and $j > r$, the eigenvalue is
    \begin{align*}
        \nu_{i,j} = \lambda_{i,t}^{2(L-1)}
    \end{align*}
    
    \textbf{Case 3 (Bulk space, part 2):} If $i > r$ and $j \leq r$, the eigenvalue is
    \begin{align*}
        \nu_{i,j} = \lambda_{j,t}^{2(L-1)}
    \end{align*}
    
    \textbf{Case 4 (Zero space):} If $i > r$ and $j > r$, the eigenvalue is zero.
    
    Since $\lambda_{i,t} \in [m_t - \delta_t, m_t + \delta_t]$ for all $i = 1, \ldots, r$, we bound the eigenvalues in each case.
    
    For Case 1, since each term $\lambda_{i,t}^{2(L-l)} \lambda_{j,t}^{2(l-1)}$ satisfies
    \begin{align*}
        (m_t - \delta_t)^{2(L-1)} \leq \lambda_{i,t}^{2(L-l)} \lambda_{j,t}^{2(l-1)} \leq (m_t + \delta_t)^{2(L-1)}
    \end{align*}
    summing over $l = 1, \ldots, L$ yields
    \begin{align*}
        L(m_t - \delta_t)^{2(L-1)} \leq \nu_{i,j} \leq L(m_t + \delta_t)^{2(L-1)}
    \end{align*}
    This gives $r \times r = r^2$ eigenvalues in the dominant space.
    
    For Cases 2 and 3:
    \begin{align*}
        (m_t - \delta_t)^{2(L-1)} \leq \nu_{i,j} \leq (m_t + \delta_t)^{2(L-1)}
    \end{align*}
    Case 2 gives $r \times (d_0 - r) = r(d_0 - r)$ eigenvalues, and Case 3 gives $(d_L - r) \times r = (d_L - r)r$ eigenvalues. Combining these, the bulk space contains
    \begin{align*}
        r(d_0 - r) + (d_L - r)r = (d_0 + d_L - 2r)r
    \end{align*}
    eigenvalues.
    
    For Case 4, there are $(d_L - r) \times (d_0 - r) = (d_L - r)(d_0 - r)$ zero eigenvalues.
    
    The condition $(m_t + \delta_t)/(m_t - \delta_t) < L^{\frac{1}{2(L-1)}}$ is equivalent to
    \begin{align*}
        (m_t + \delta_t)^{2(L-1)} < L (m_t - \delta_t)^{2(L-1)}
    \end{align*}
    which guarantees a spectral gap between the dominant space and the bulk space:
    \begin{align*}
        \min_{i,j \leq r} \nu_{i,j} \geq L(m_t - \delta_t)^{2(L-1)} > (m_t + \delta_t)^{2(L-1)} \geq \max_{\substack{i \leq r, j > r \\ \text{or } i > r, j \leq r}} \nu_{i,j}
    \end{align*}
    
    By the Gauss-Newton decomposition, $H_{L,t} = H_{o,t} + H_{f,t}$. According to Lemma \ref{lemma: 2norm-t}, the functional Hessian satisfies $\|H_{f,t}\|_2 = O(e^{-L\alpha\eta t})$. Applying Weyl's inequality, for the $k$th largest eigenvalue:
    \begin{align*}
        \lambda_k(H_{o,t}) - \|H_{f,t}\|_2 \leq \lambda_k(H_{L,t}) \leq \lambda_k(H_{o,t}) + \|H_{f,t}\|_2
    \end{align*}
    
    Therefore, the eigenvalues of the true Hessian $H_{L,t}$ lie within $O(e^{-L\alpha\eta t})$ of the eigenvalues of the outer product Hessian $H_{o,t}$. Combining with the bounds above:
    \begin{itemize}
        \item The $r^2$ dominant space eigenvalues of $H_{L,t}$ lie in
        \begin{align*}
            \left[ L(m_t - \delta_t)^{2(L-1)} - O(e^{-L\alpha\eta t}), \, L(m_t + \delta_t)^{2(L-1)} + O(e^{-L\alpha\eta t}) \right]
        \end{align*}
        
        \item The $(d_0 + d_L - 2r)r$ bulk space eigenvalues of $H_{L,t}$ lie in
        \begin{align*}
            \left[ (m_t - \delta_t)^{2(L-1)} - O(e^{-L\alpha\eta t}), \, (m_t + \delta_t)^{2(L-1)} + O(e^{-L\alpha\eta t}) \right]
        \end{align*}
        
        \item The remaining $(d_L - r)(d_0 - r)$ eigenvalues are $O(e^{-L\alpha\eta t})$, converging to zero as $t \to \infty$.
    \end{itemize}
    
    Finally, we establish the ratio between the dominant space eigenvalues and bulk space eigenvalues. For the lower bound:
    \begin{align*}
        \frac{\lambda_{\text{dom}}}{\lambda_{\text{bulk}}} \geq \frac{L(m_t - \delta_t)^{2(L-1)} - O(e^{-L\alpha\eta t})}{(m_t + \delta_t)^{2(L-1)} + O(e^{-L\alpha\eta t})}
    \end{align*}
    For sufficiently large $t$, the $O(e^{-L\alpha\eta t})$ terms become negligible, yielding
    \begin{align*}
        \frac{\lambda_{\text{dom}}}{\lambda_{\text{bulk}}} \geq \frac{L(m_t - \delta_t)^{2(L-1)}}{(m_t + \delta_t)^{2(L-1)}} = L \left( \frac{m_t - \delta_t}{m_t + \delta_t} \right)^{2(L-1)}
    \end{align*}
    
    For the upper bound:
    \begin{align*}
        \frac{\lambda_{\text{dom}}}{\lambda_{\text{bulk}}} \leq \frac{L(m_t + \delta_t)^{2(L-1)} + O(e^{-L\alpha\eta t})}{(m_t - \delta_t)^{2(L-1)} - O(e^{-L\alpha\eta t})}
    \end{align*}
    For sufficiently large $t$:
    \begin{align*}
        \frac{\lambda_{\text{dom}}}{\lambda_{\text{bulk}}} \leq \frac{L(m_t + \delta_t)^{2(L-1)}}{(m_t - \delta_t)^{2(L-1)}} = L \left( \frac{m_t + \delta_t}{m_t - \delta_t} \right)^{2(L-1)}
    \end{align*}
    
    Since the condition $(m_t + \delta_t)/(m_t - \delta_t) < L^{\frac{1}{2(L-1)}}$ implies that $\left( \frac{m_t + \delta_t}{m_t - \delta_t} \right)^{2(L-1)} < L$, both bounds are $O(L)$. Therefore:
    \begin{align*}
        \frac{\lambda_{\text{dom}}}{\lambda_{\text{bulk}}} = \Theta(L)
    \end{align*}
    which completes the proof.
\end{proof}

\begin{corollary-restated}\ref{corollary two eigenvalues}(Hessian Bifurcation with USI)
    Under Assumptions \ref{assumption 12}, \ref{assumption 4}, and \ref{assumption 5}, and assuming $r \leq d_* = \min\{d_0, d_L\}$, the Hessian $H_{L,t}$ has exactly two distinct nonzero eigenvalues (up to an exponentially small perturbation). Specifically:
    \begin{enumerate}[leftmargin=*]
        \item \textbf{(Dominant Space)} There exist $r^2$ eigenvalues equal to $L\mu_t^{2(L-1)} + O(e^{-L\alpha\eta t})$.
        
        \item \textbf{(Bulk Space)} There exist $(d_0 + d_L - 2r)r$ eigenvalues equal to $\mu_t^{2(L-1)} + O(e^{-L\alpha\eta t})$.
        
        \item \textbf{(Zero Space)} The remaining $(d_L - r)(d_0 - r)$ eigenvalues are $O(e^{-L\alpha\eta t})$.
    \end{enumerate}
    Moreover, the ratio between the dominant and bulk eigenvalues is exactly $L$, i.e.,
    \begin{align}
        \frac{\lambda_{\text{dom}}}{\lambda_{\text{bulk}}} = L
    \end{align}
    where $\mu_t$ denotes the common value of all effective eigenvalues $\lambda_{i,t} = \mu_t$ for $i = 1, \ldots, r$ at time $t$, and $\alpha$ is a positive constant.
\end{corollary-restated}

\begin{proof}
    Under Assumption \ref{assumption 5}, $\Sigma_0^{1/L} \simeq \mu I_r$, which implies $\lambda_{i,0} = \mu$ for all $i = 1, \ldots, r$. By Lemma \ref{lemma shared spectral structure}, the effective eigenvalues evolve according to:
    \begin{align*}
        \lambda_{i,t+1} = \lambda_{i,t} - \eta \lambda_{i,t}^{2L-1} + \eta \lambda_{i,t}^{L-1}
    \end{align*}
    for $i = 1, \ldots, r$.
    
    Since all initial eigenvalues are equal, and the update rule is identical for each $i = 1, \ldots, r$, we have $\lambda_{1,t} = \lambda_{2,t} = \cdots = \lambda_{r,t} =: \mu_t$ for all $t \geq 0$. By Lemma \ref{lemma eigenvalues convergence}, $\mu_t \to 1$ as $t \to \infty$.
    
    By Lemma \ref{lemma generalize eigenvalues}, the outer product Hessian $H_{o,t}$ has eigenvalues indexed by pairs $(i, j)$. For $i \leq r$ and $j \leq r$:
    \begin{align*}
        \nu_{i,j} = \sum_{l=1}^L \lambda_{i,t}^{2(L-l)} \lambda_{j,t}^{2(l-1)} = \sum_{l=1}^L \mu_t^{2(L-l)} \mu_t^{2(l-1)} = \sum_{l=1}^L \mu_t^{2(L-1)} = L \mu_t^{2(L-1)}
    \end{align*}
    
    For $i \leq r$ and $j > r$:
    \begin{align*}
        \nu_{i,j} = \lambda_{i,t}^{2(L-1)} = \mu_t^{2(L-1)}
    \end{align*}
    
    For $i > r$ and $j \leq r$:
    \begin{align*}
        \nu_{i,j} = \lambda_{j,t}^{2(L-1)} = \mu_t^{2(L-1)}
    \end{align*}
    
    For $i > r$ and $j > r$:
    \begin{align*}
        \nu_{i,j} = 0
    \end{align*}
    
    Therefore, the outer product Hessian $H_{o,t}$ has exactly two distinct nonzero eigenvalues:
    \begin{itemize}
        \item $L \mu_t^{2(L-1)}$ with multiplicity $r^2$ (dominant space)
        \item $\mu_t^{2(L-1)}$ with multiplicity $r(d_0 - r) + (d_L - r)r = (d_0 + d_L - 2r)r$ (bulk space)
    \end{itemize}
    and zero eigenvalues with multiplicity $(d_L - r)(d_0 - r)$ (zero space).
    
    By Lemma \ref{lemma: 2norm-t}, $\|H_{f,t}\|_2 = O(e^{-L\alpha\eta t})$. Applying Weyl's inequality:
    \begin{align*}
        \lambda_k(H_{o,t}) - O(e^{-L\alpha\eta t}) \leq \lambda_k(H_{L,t}) \leq \lambda_k(H_{o,t}) + O(e^{-L\alpha\eta t})
    \end{align*}
    
    Therefore, the Hessian $H_{L,t}$ has:
    \begin{itemize}
        \item $r^2$ eigenvalues equal to $L \mu_t^{2(L-1)} + O(e^{-L\alpha\eta t})$
        \item $(d_0 + d_L - 2r)r$ eigenvalues equal to $\mu_t^{2(L-1)} + O(e^{-L\alpha\eta t})$
        \item $(d_L - r)(d_0 - r)$ eigenvalues equal to $O(e^{-L\alpha\eta t})$
    \end{itemize}
    
    The ratio between the dominant and bulk eigenvalues is:
    \begin{align*}
        \frac{\lambda_{\text{dom}}}{\lambda_{\text{bulk}}} = \frac{L \mu_t^{2(L-1)} + O(e^{-L\alpha\eta t})}{\mu_t^{2(L-1)} + O(e^{-L\alpha\eta t})} = L + O(e^{-L\alpha\eta t})
    \end{align*}
    For sufficiently large $t$, this ratio converges to exactly $L$, which completes the proof.
\end{proof}

\section{Simulations}
\label{appendix_simulation}

\begin{figure}[ht]
    \centering
    \includegraphics[width=\linewidth]{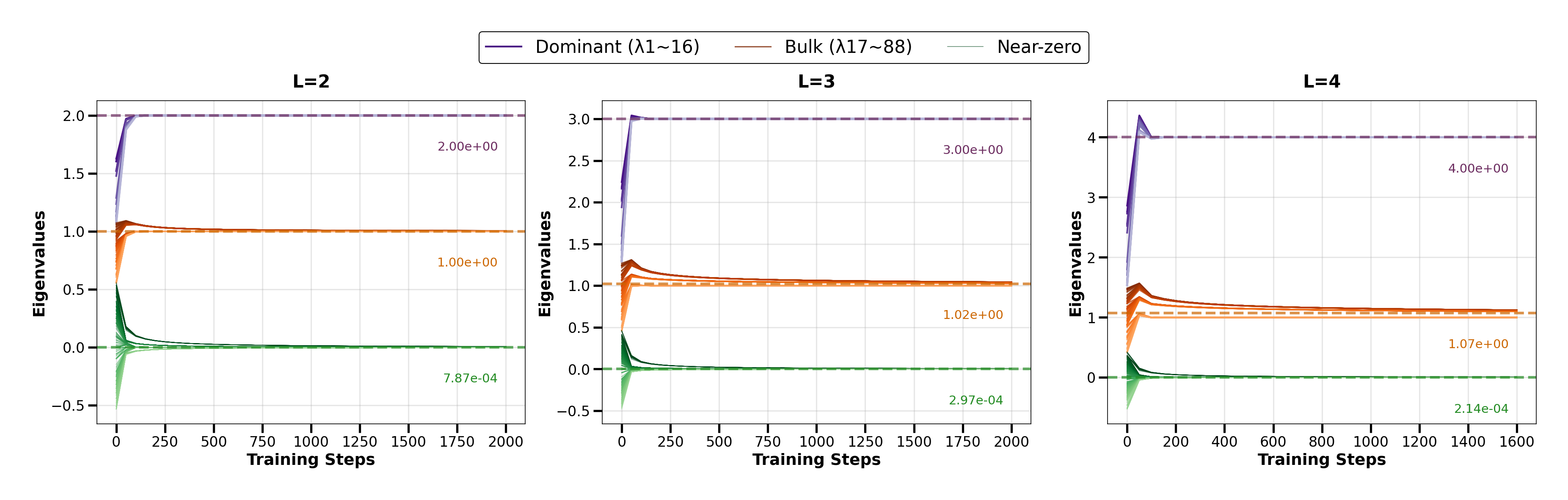}
    \includegraphics[width=\linewidth]{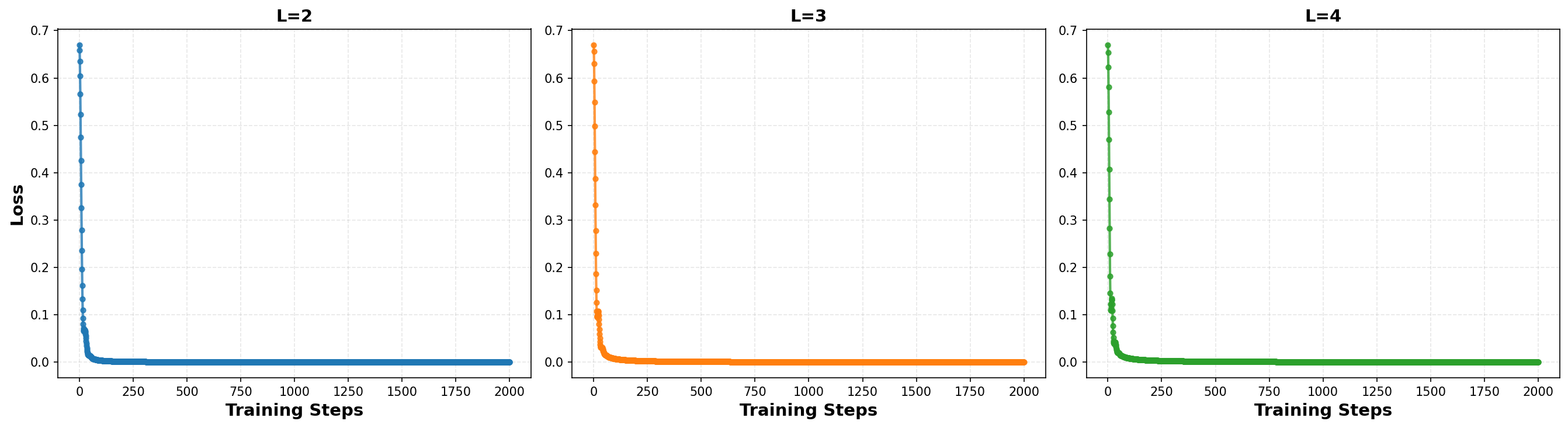}
    \caption{\textbf{Eigenvalue evolution.} The curves are color-coded by subspace: purple for the dominant space, orange for the bulk space, and green for the near-zero space. The dominant space has a dimension of $\text{rank}^2$, while the combined dimension of the dominant and bulk spaces equals the product of the input and output dimensions. The final eigenvalues of the dominant space converge to $L$ times those of the bulk space. The panels correspond to $L=3$ (left), $L=4$ (middle), and $L=5$ (right).}
    \label{fig:eigenvalue_evolution_loss_L234_r4}
\end{figure}

\begin{figure}[ht]
    \centering
    \includegraphics[width=\linewidth]{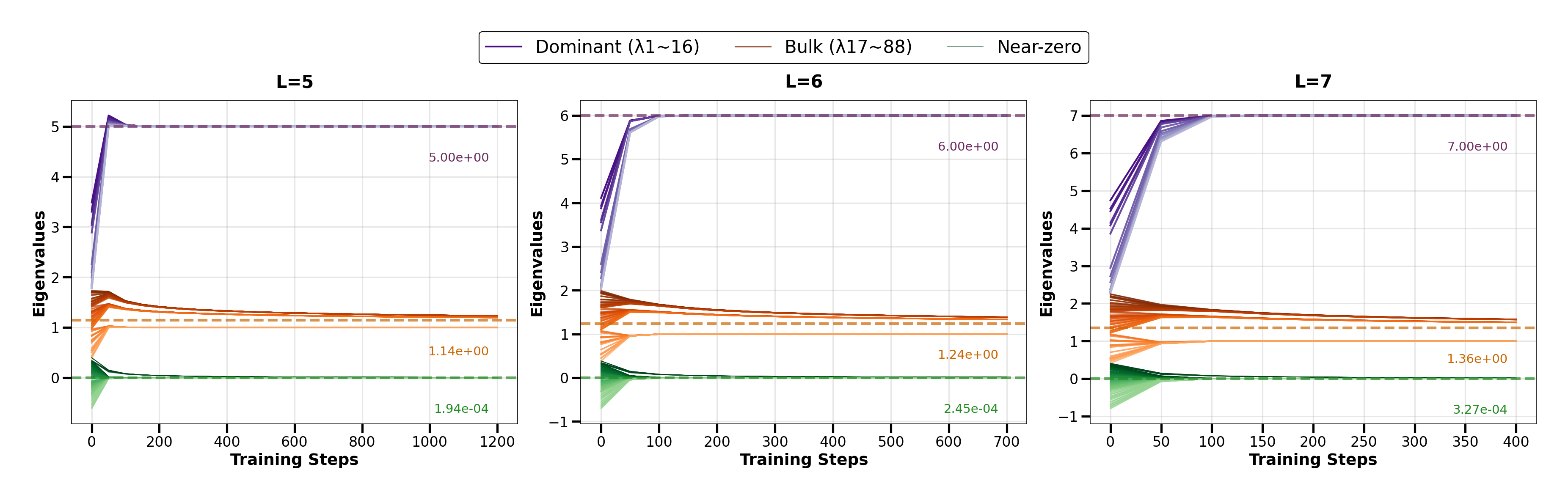}
    \includegraphics[width=\linewidth]{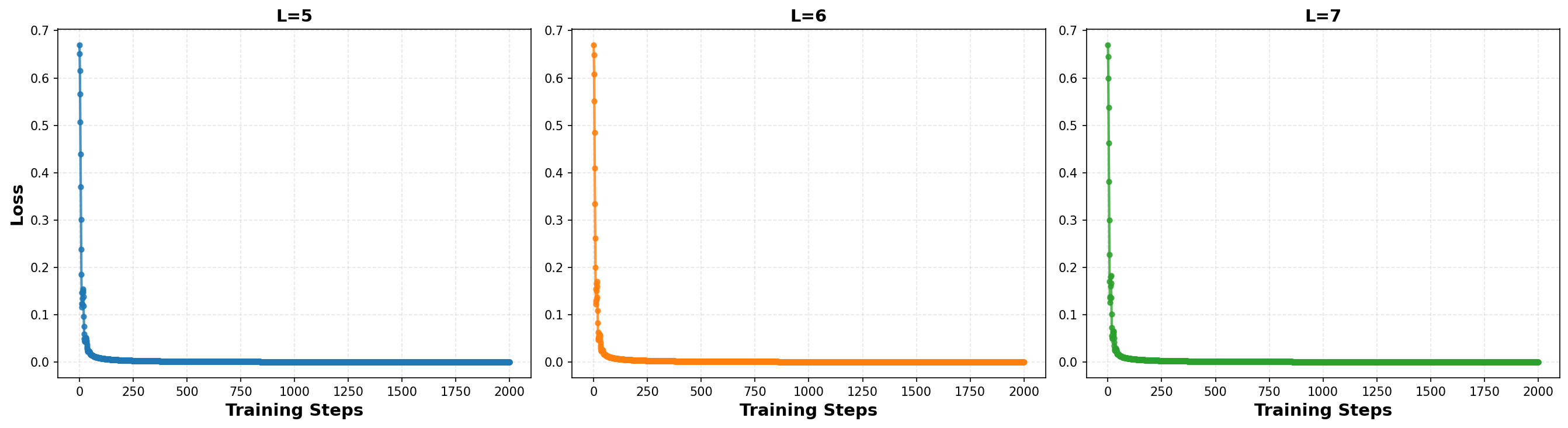}
    \caption{\textbf{Eigenvalue evolution.} The curves are color-coded by subspace: purple for the dominant space, orange for the bulk space, and green for the near-zero space. The dominant space has a dimension of $\text{rank}^2$, while the combined dimension of the dominant and bulk spaces equals the product of the input and output dimensions. The final eigenvalues of the dominant space converge to $L$ times those of the bulk space. The panels correspond to $L=3$ (left), $L=4$ (middle), and $L=5$ (right).}
    \label{fig:eigenvalue_evolution_loss_L567_r4}
\end{figure}

\begin{figure}[ht]
    \centering
    \includegraphics[width=\linewidth]{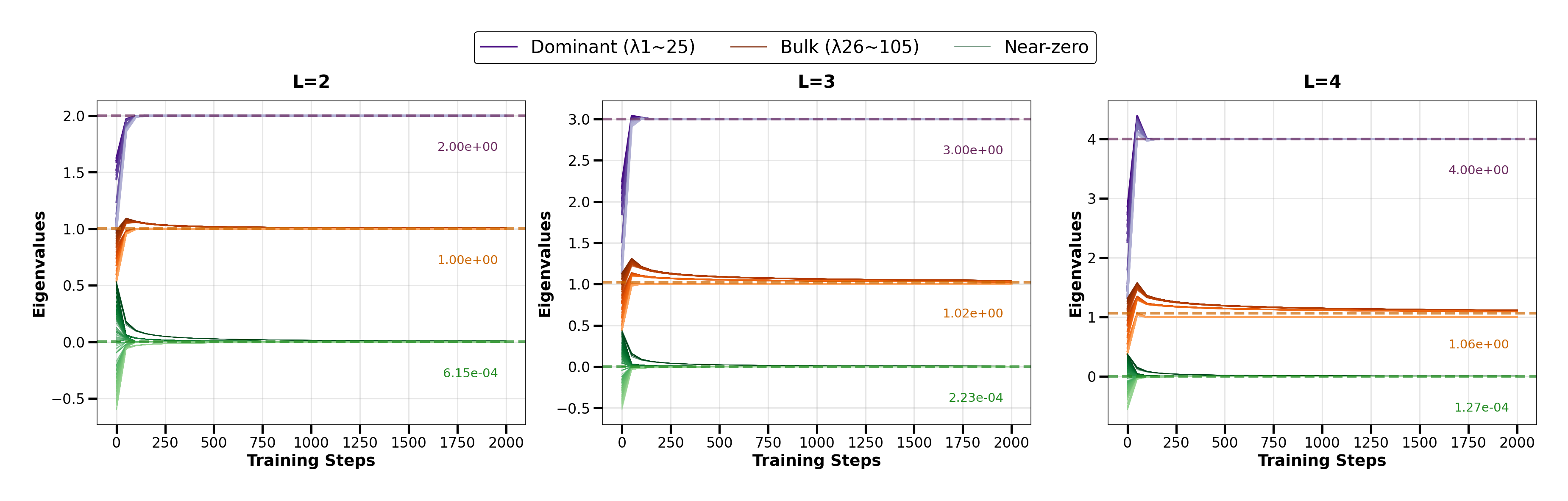}
    \includegraphics[width=\linewidth]{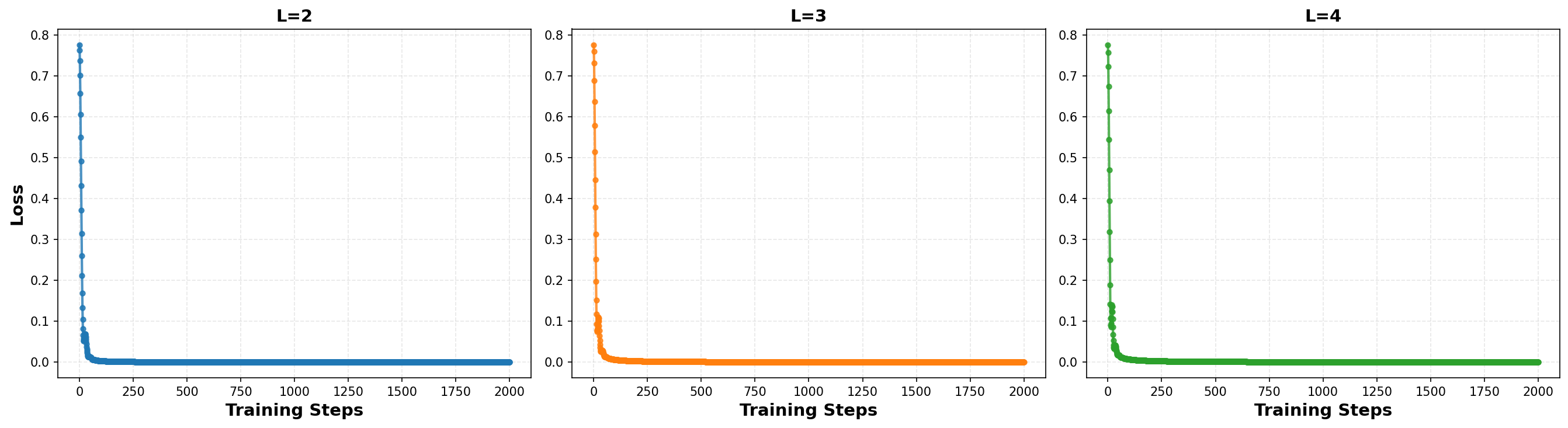}
    \caption{\textbf{Eigenvalue evolution.} The curves are color-coded by subspace: purple for the dominant space, orange for the bulk space, and green for the near-zero space. The dominant space has a dimension of $\text{rank}^2$, while the combined dimension of the dominant and bulk spaces equals the product of the input and output dimensions. The final eigenvalues of the dominant space converge to $L$ times those of the bulk space. The panels correspond to $L=3$ (left), $L=4$ (middle), and $L=5$ (right).}
    \label{fig:eigenvalue_evolution_loss_L234_r5}
\end{figure}

\begin{figure}[ht]
    \centering
    \includegraphics[width=\linewidth]{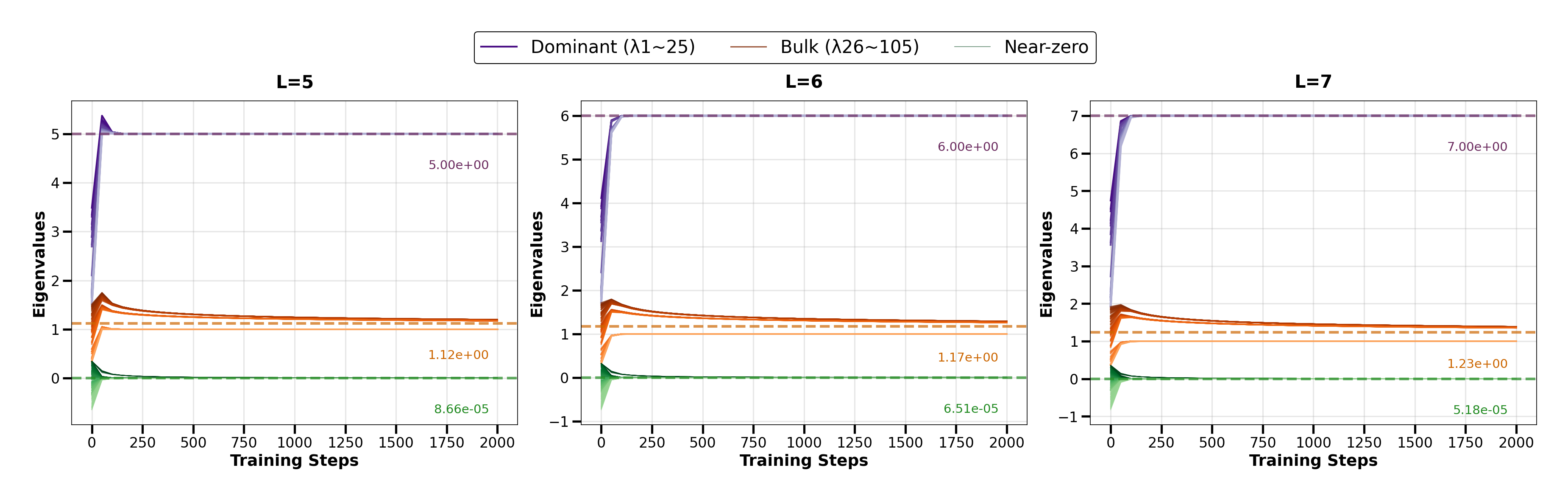}
    \includegraphics[width=\linewidth]{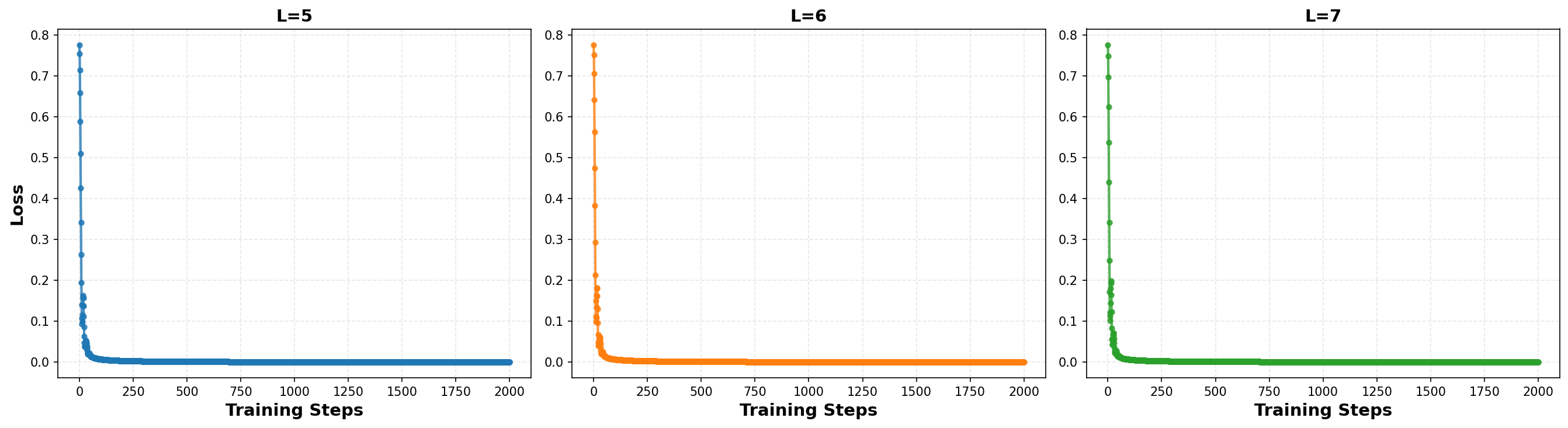}
    \caption{\textbf{Eigenvalue evolution.} The curves are color-coded by subspace: purple for the dominant space, orange for the bulk space, and green for the near-zero space. The dominant space has a dimension of $\text{rank}^2$, while the combined dimension of the dominant and bulk spaces equals the product of the input and output dimensions. The final eigenvalues of the dominant space converge to $L$ times those of the bulk space. The panels correspond to $L=3$ (left), $L=4$ (middle), and $L=5$ (right).}
    \label{fig:eigenvalue_evolution_loss_L567_r5}
\end{figure}

\begin{figure}[ht]
    \centering
    \includegraphics[width=\linewidth]{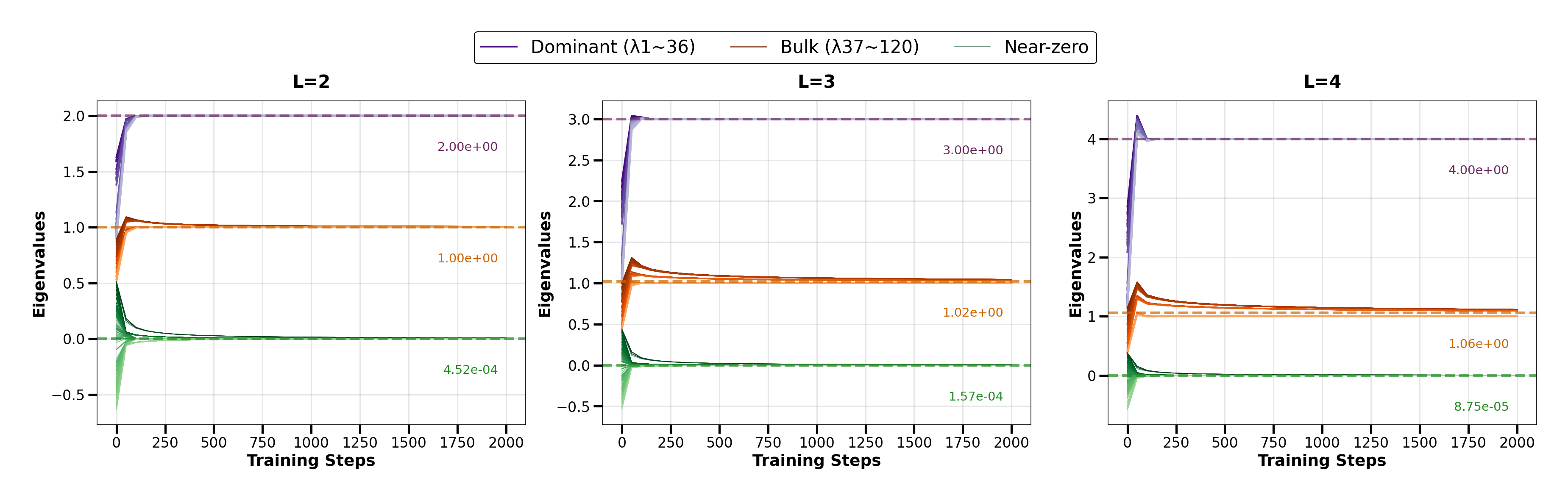}
    \includegraphics[width=\linewidth]{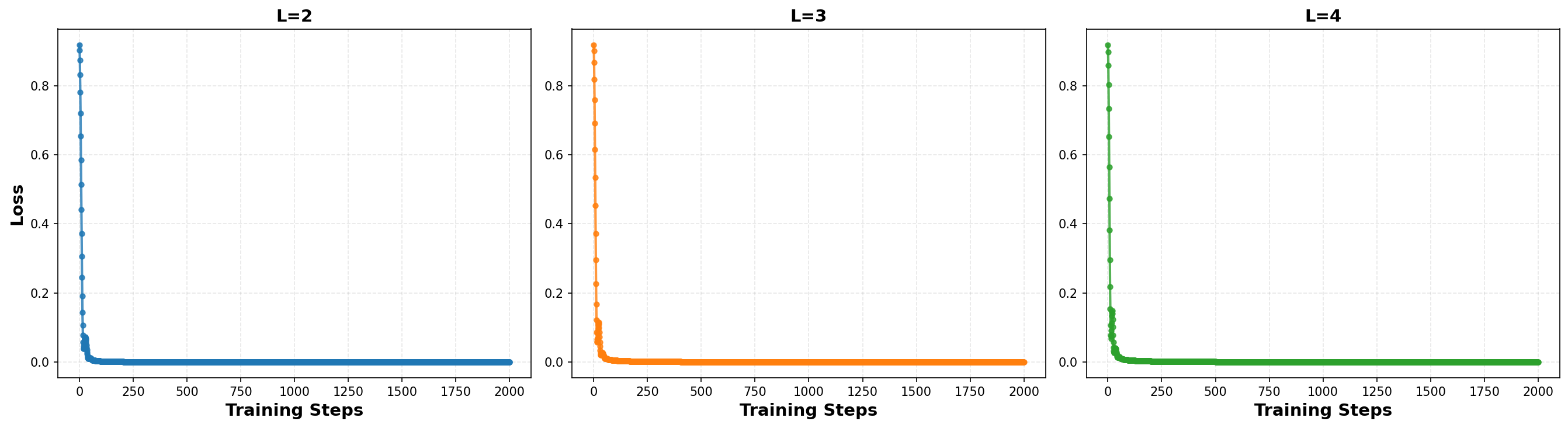}
    \caption{\textbf{Eigenvalue evolution.} The curves are color-coded by subspace: purple for the dominant space, orange for the bulk space, and green for the near-zero space. The dominant space has a dimension of $\text{rank}^2$, while the combined dimension of the dominant and bulk spaces equals the product of the input and output dimensions. The final eigenvalues of the dominant space converge to $L$ times those of the bulk space. The panels correspond to $L=3$ (left), $L=4$ (middle), and $L=5$ (right).}
    \label{fig:eigenvalue_evolution_loss_L234_r6}
\end{figure}

\begin{figure}[ht]
    \centering
    \includegraphics[width=\linewidth]{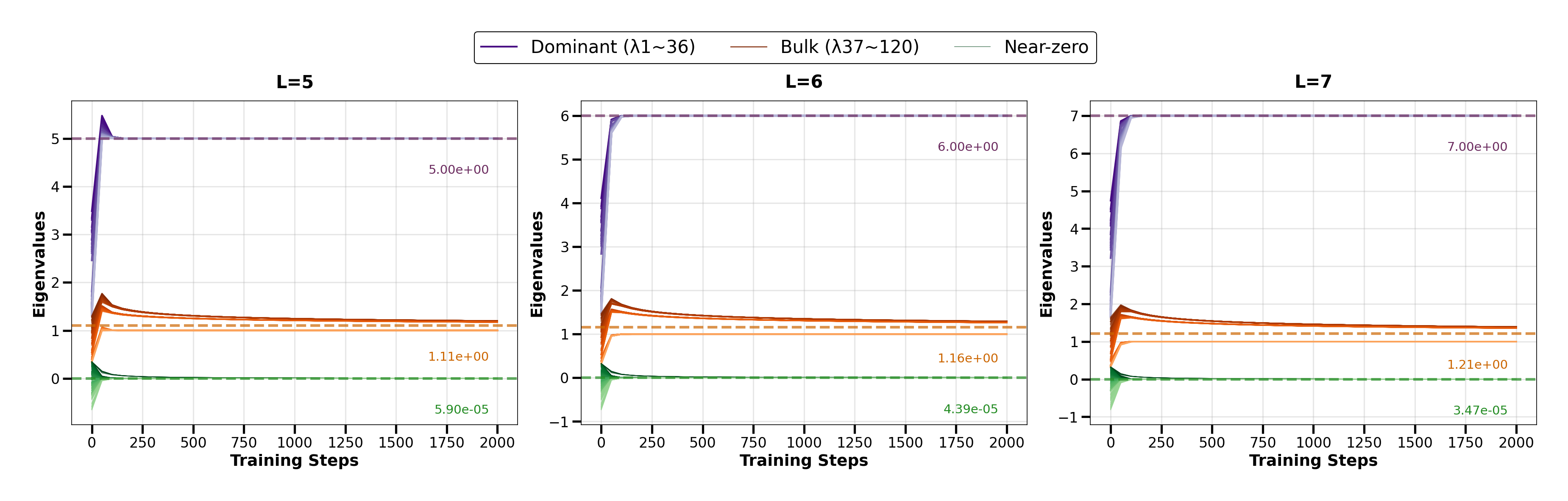}
    \includegraphics[width=\linewidth]{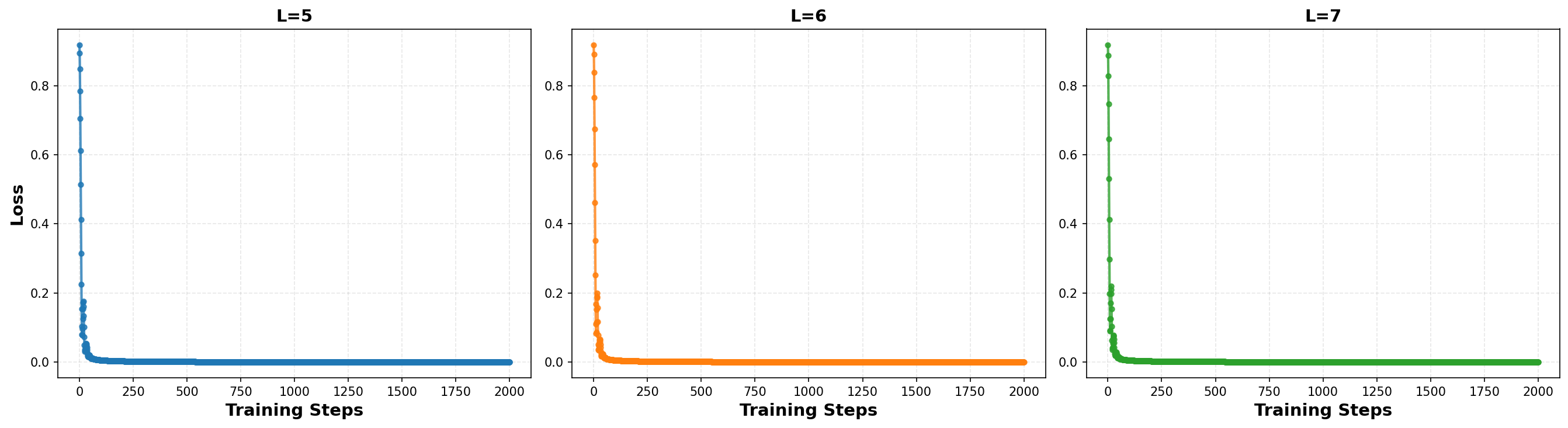}
    \caption{\textbf{Eigenvalue evolution.} The curves are color-coded by subspace: purple for the dominant space, orange for the bulk space, and green for the near-zero space. The dominant space has a dimension of $\text{rank}^2$, while the combined dimension of the dominant and bulk spaces equals the product of the input and output dimensions. The final eigenvalues of the dominant space converge to $L$ times those of the bulk space. The panels correspond to $L=3$ (left), $L=4$ (middle), and $L=5$ (right).}
    \label{fig:eigenvalue_evolution_loss_L567_r6}
\end{figure}

\begin{figure}[ht]
    \centering
    \includegraphics[width=\linewidth]{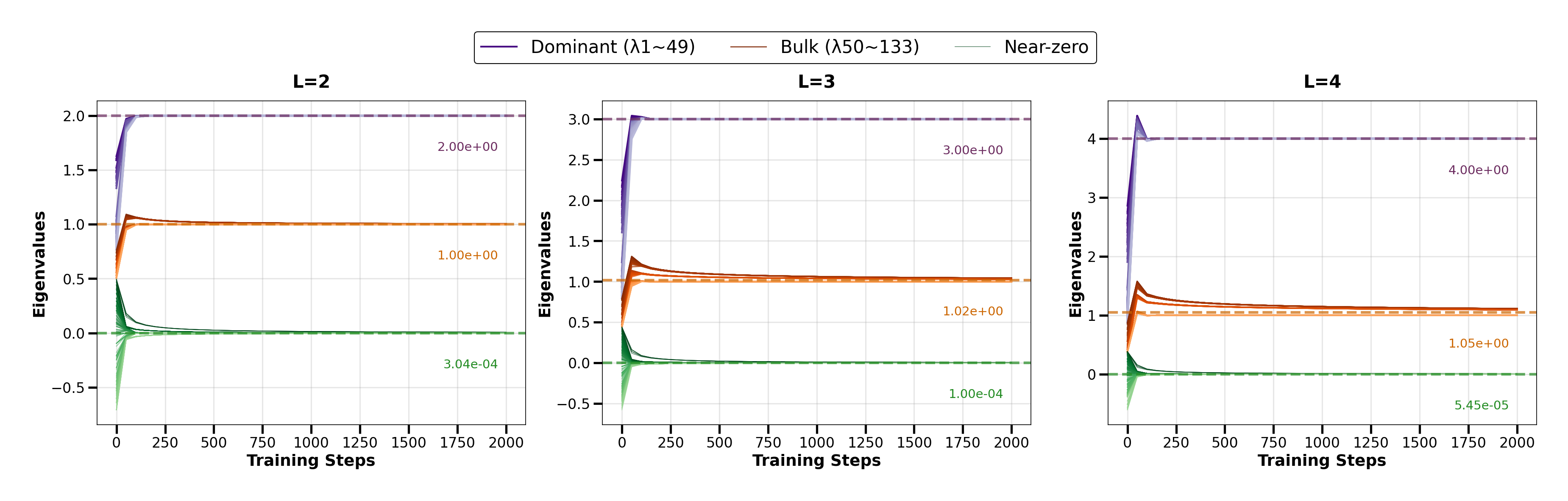}
    \includegraphics[width=\linewidth]{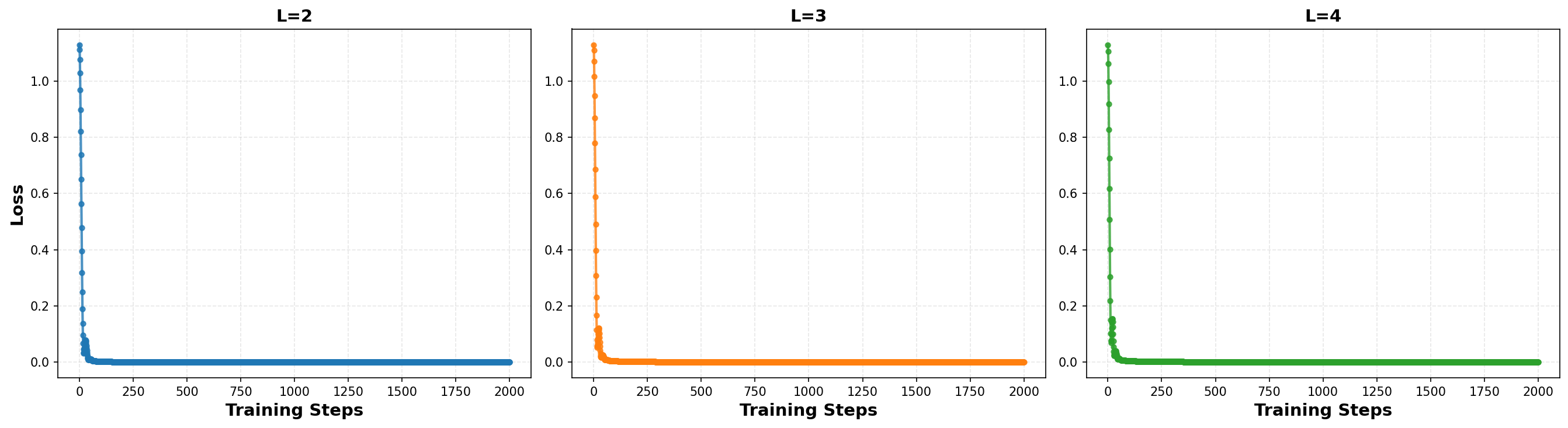}
    \caption{\textbf{Eigenvalue evolution.} The curves are color-coded by subspace: purple for the dominant space, orange for the bulk space, and green for the near-zero space. The dominant space has a dimension of $\text{rank}^2$, while the combined dimension of the dominant and bulk spaces equals the product of the input and output dimensions. The final eigenvalues of the dominant space converge to $L$ times those of the bulk space. The panels correspond to $L=3$ (left), $L=4$ (middle), and $L=5$ (right).}
    \label{fig:eigenvalue_evolution_loss_L234_r7}
\end{figure}

\begin{figure}[ht]
    \centering
    \includegraphics[width=\linewidth]{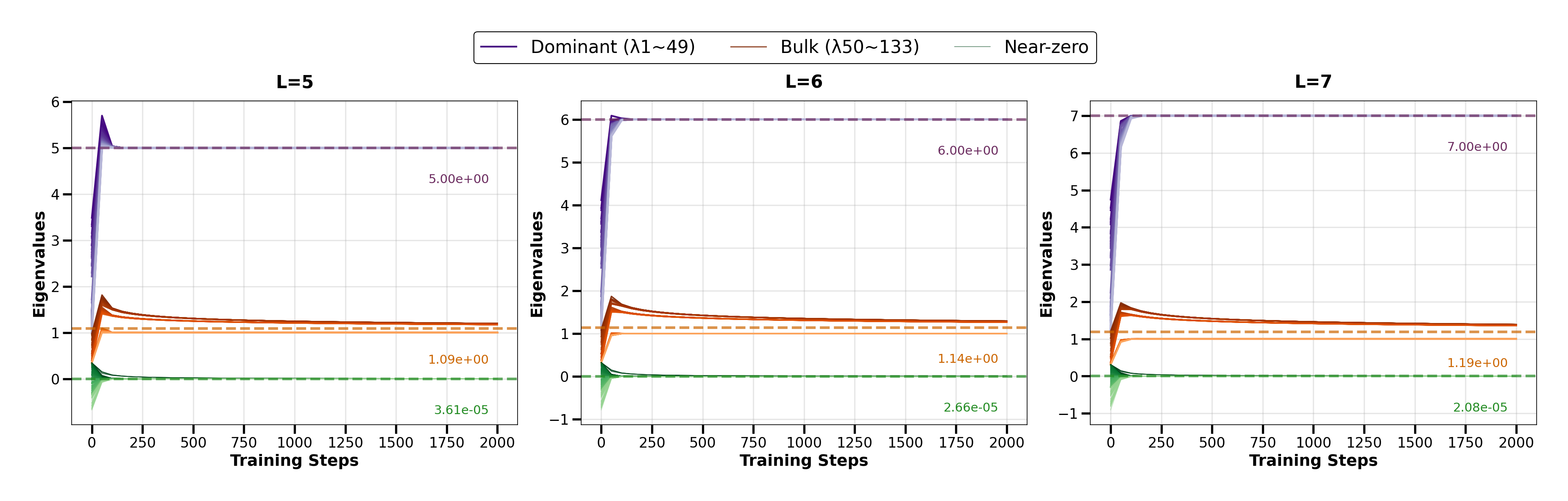}
    \includegraphics[width=\linewidth]{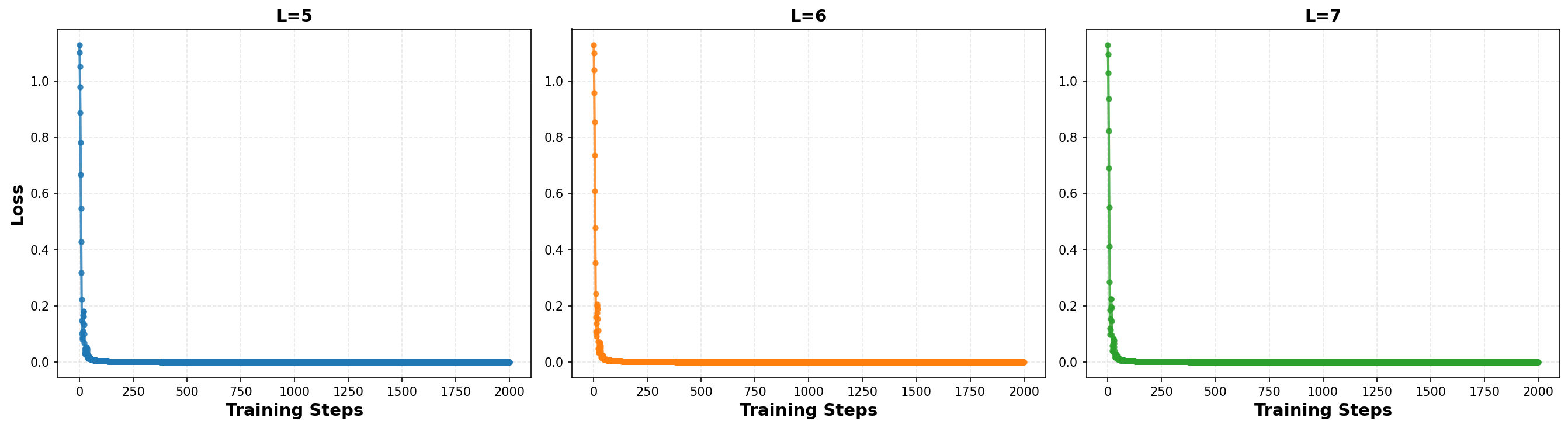}
    \caption{\textbf{Eigenvalue evolution.} The curves are color-coded by subspace: purple for the dominant space, orange for the bulk space, and green for the near-zero space. The dominant space has a dimension of $\text{rank}^2$, while the combined dimension of the dominant and bulk spaces equals the product of the input and output dimensions. The final eigenvalues of the dominant space converge to $L$ times those of the bulk space. The panels correspond to $L=3$ (left), $L=4$ (middle), and $L=5$ (right).}
    \label{fig:eigenvalue_evolution_loss_L567_r7}
\end{figure}

\begin{figure}[ht]
    \centering
    \includegraphics[width=\linewidth]{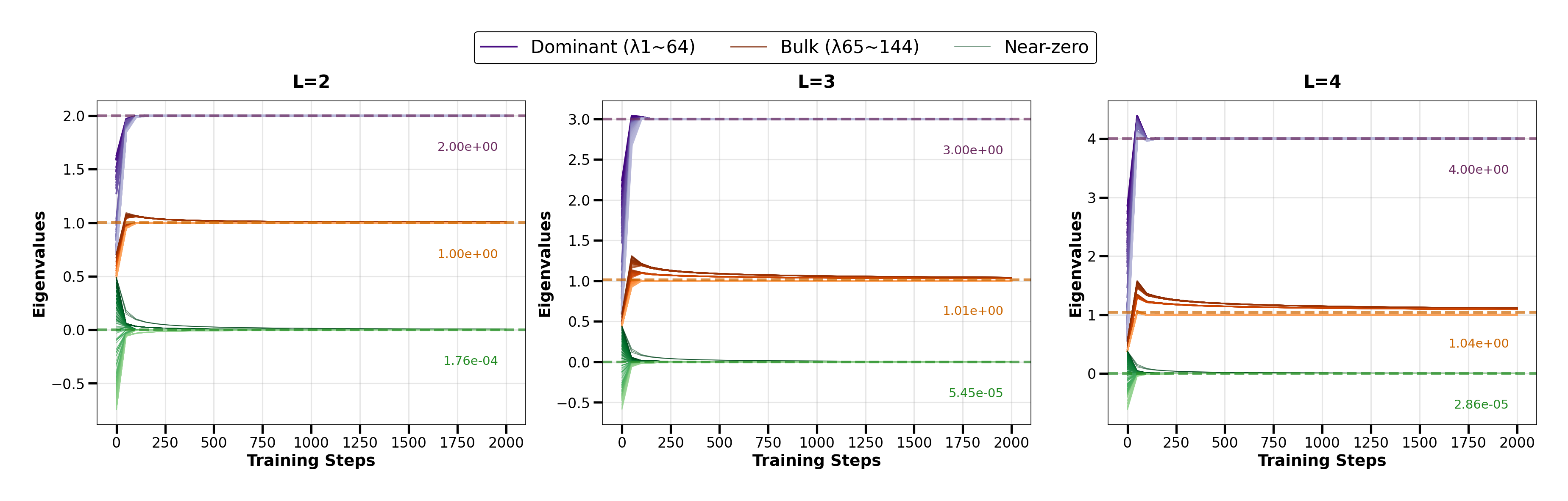}
    \includegraphics[width=\linewidth]{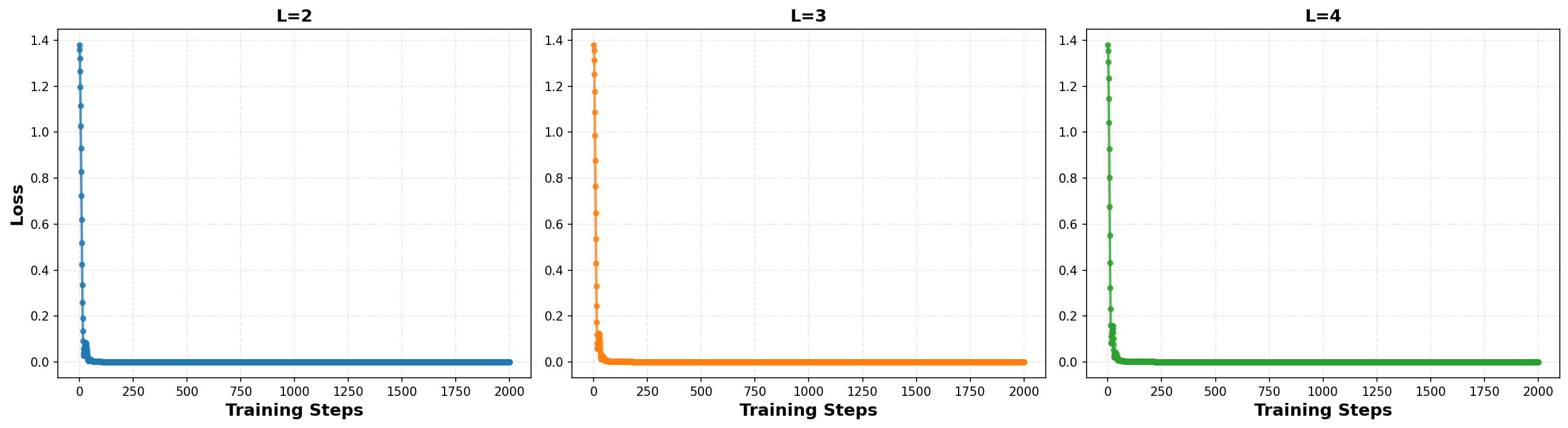}
    \caption{\textbf{Eigenvalue evolution.} The curves are color-coded by subspace: purple for the dominant space, orange for the bulk space, and green for the near-zero space. The dominant space has a dimension of $\text{rank}^2$, while the combined dimension of the dominant and bulk spaces equals the product of the input and output dimensions. The final eigenvalues of the dominant space converge to $L$ times those of the bulk space. The panels correspond to $L=3$ (left), $L=4$ (middle), and $L=5$ (right).}
    \label{fig:eigenvalue_evolution_loss_L234_r8}
\end{figure}

\begin{figure}[ht]
    \centering
    \includegraphics[width=\linewidth]{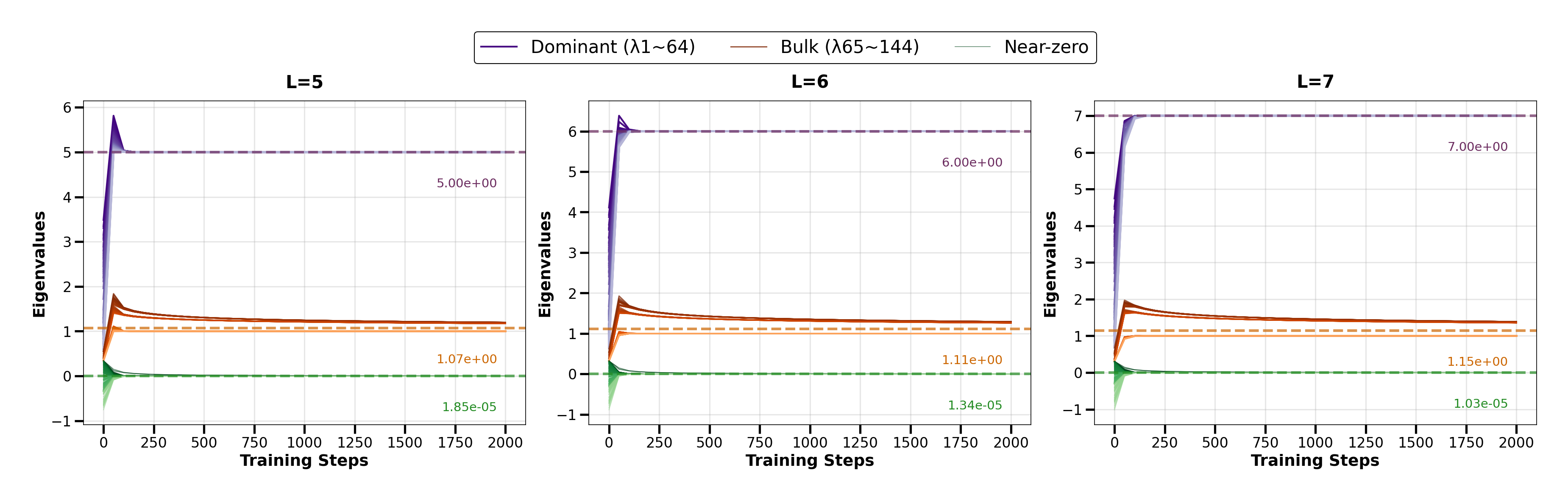}
    \includegraphics[width=\linewidth]{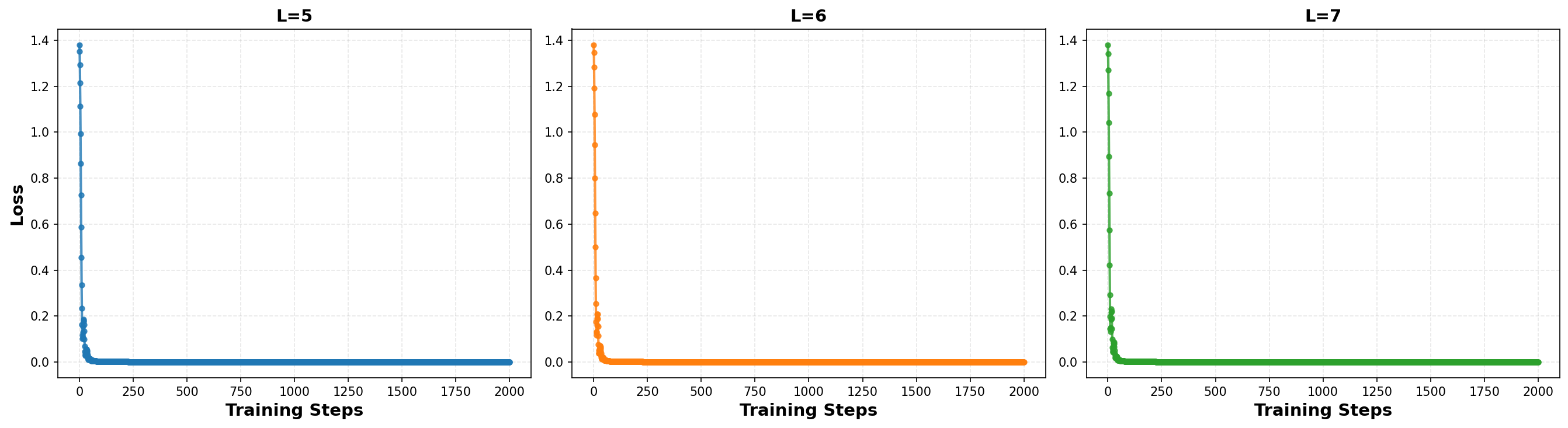}
    \caption{\textbf{Eigenvalue evolution.} The curves are color-coded by subspace: purple for the dominant space, orange for the bulk space, and green for the near-zero space. The dominant space has a dimension of $\text{rank}^2$, while the combined dimension of the dominant and bulk spaces equals the product of the input and output dimensions. The final eigenvalues of the dominant space converge to $L$ times those of the bulk space. The panels correspond to $L=3$ (left), $L=4$ (middle), and $L=5$ (right).}
    \label{fig:eigenvalue_evolution_loss_L567_r8}
\end{figure}

\begin{figure}[ht]
    \centering
    \includegraphics[width=\linewidth]{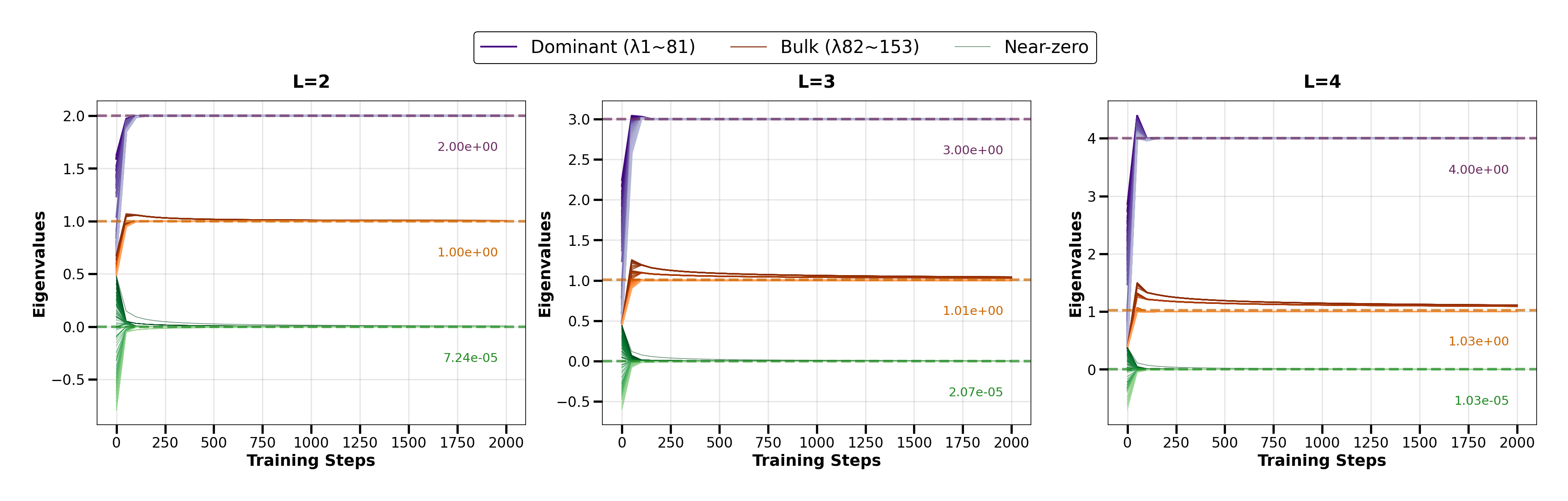}
    \includegraphics[width=\linewidth]{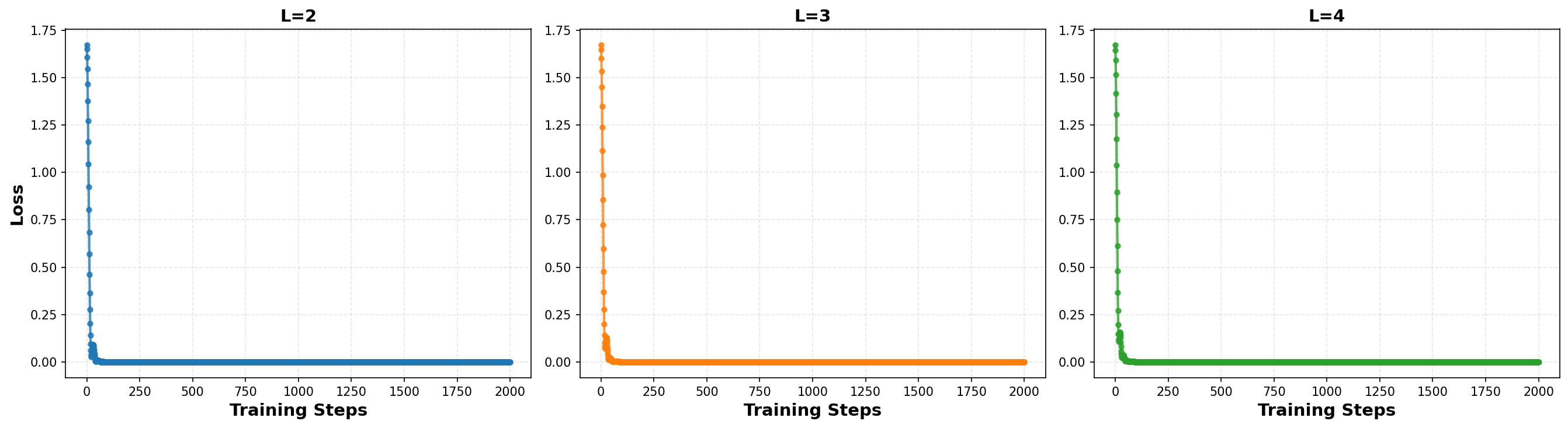}
    \caption{\textbf{Eigenvalue evolution.} The curves are color-coded by subspace: purple for the dominant space, orange for the bulk space, and green for the near-zero space. The dominant space has a dimension of $\text{rank}^2$, while the combined dimension of the dominant and bulk spaces equals the product of the input and output dimensions. The final eigenvalues of the dominant space converge to $L$ times those of the bulk space. The panels correspond to $L=3$ (left), $L=4$ (middle), and $L=5$ (right).}
    \label{fig:eigenvalue_evolution_loss_L234_r9}
\end{figure}

\begin{figure}[ht]
    \centering
    \includegraphics[width=\linewidth]{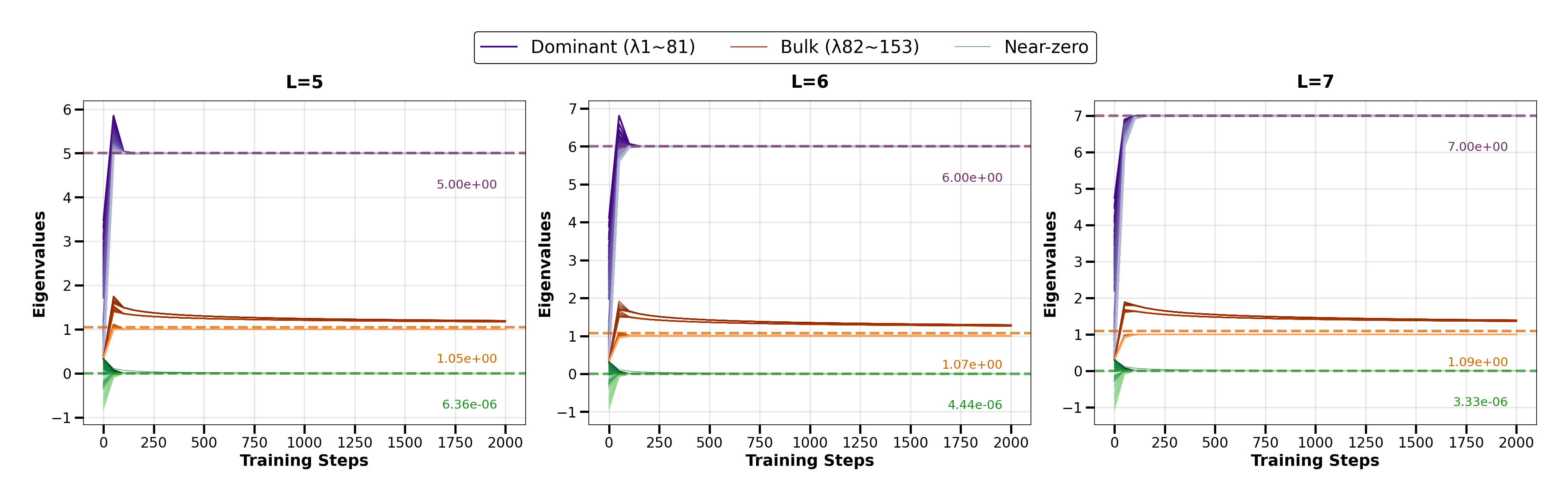}
    \includegraphics[width=\linewidth]{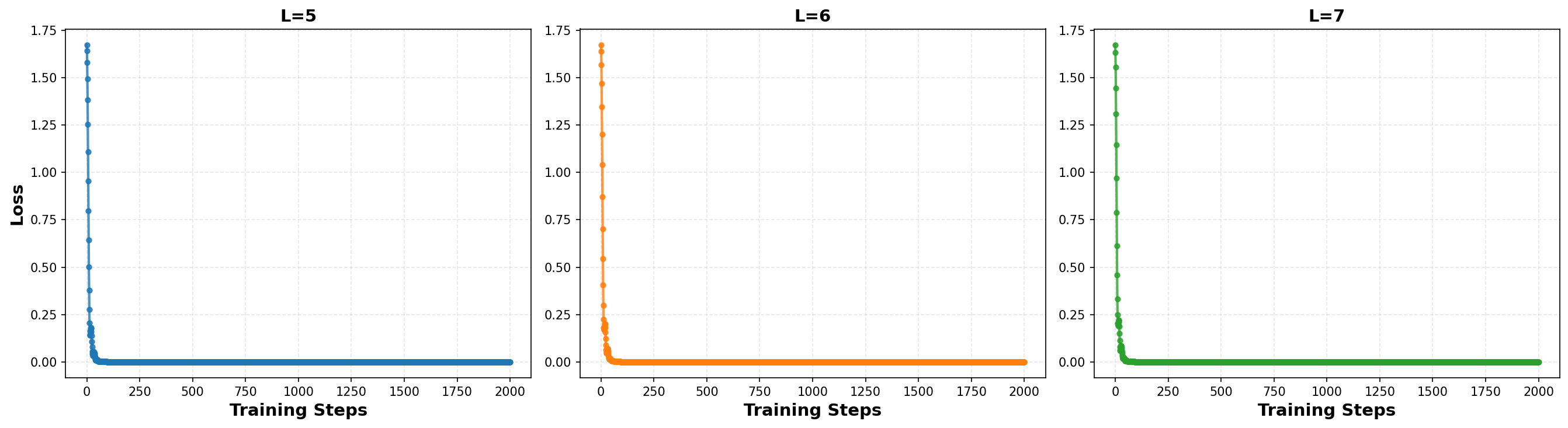}
    \caption{\textbf{Eigenvalue evolution.} The curves are color-coded by subspace: purple for the dominant space, orange for the bulk space, and green for the near-zero space. The dominant space has a dimension of $\text{rank}^2$, while the combined dimension of the dominant and bulk spaces equals the product of the input and output dimensions. The final eigenvalues of the dominant space converge to $L$ times those of the bulk space. The panels correspond to $L=3$ (left), $L=4$ (middle), and $L=5$ (right).}
    \label{fig:eigenvalue_evolution_loss_L567_r9}
\end{figure}

\begin{figure}[ht]
    \centering
    \includegraphics[width=\linewidth]{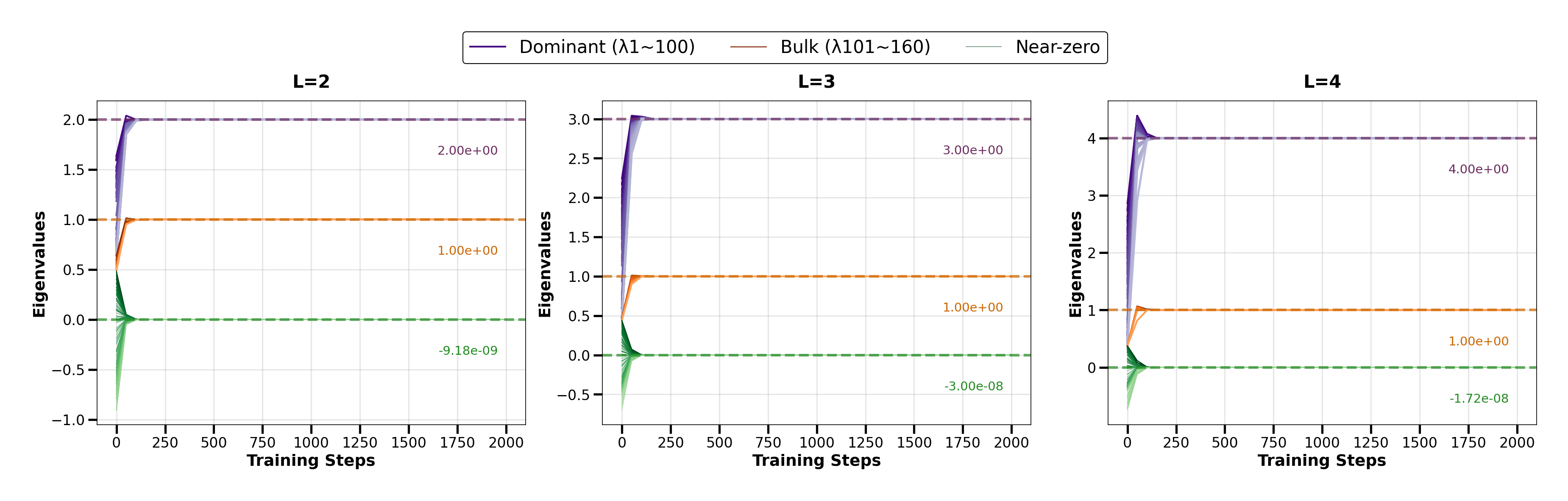}
    \includegraphics[width=\linewidth]{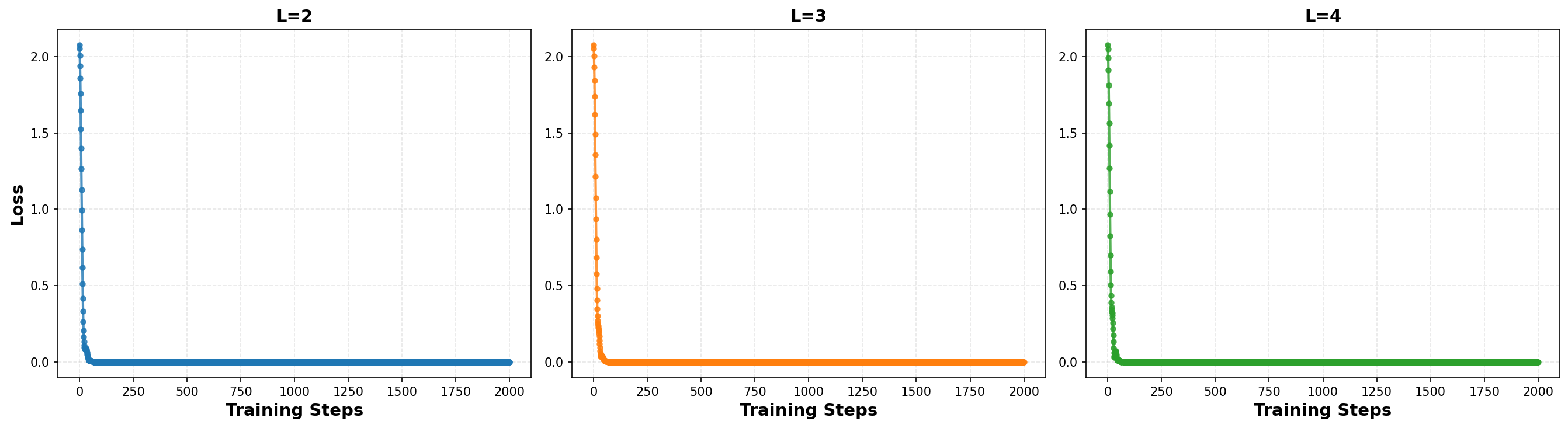}
    \caption{\textbf{Eigenvalue evolution.} The curves are color-coded by subspace: purple for the dominant space, orange for the bulk space, and green for the near-zero space. The dominant space has a dimension of $\text{rank}^2$, while the combined dimension of the dominant and bulk spaces equals the product of the input and output dimensions. The final eigenvalues of the dominant space converge to $L$ times those of the bulk space. The panels correspond to $L=3$ (left), $L=4$ (middle), and $L=5$ (right).}
    \label{fig:eigenvalue_evolution_loss_L234_r10}
\end{figure}

\begin{figure}[ht]
    \centering
    \includegraphics[width=\linewidth]{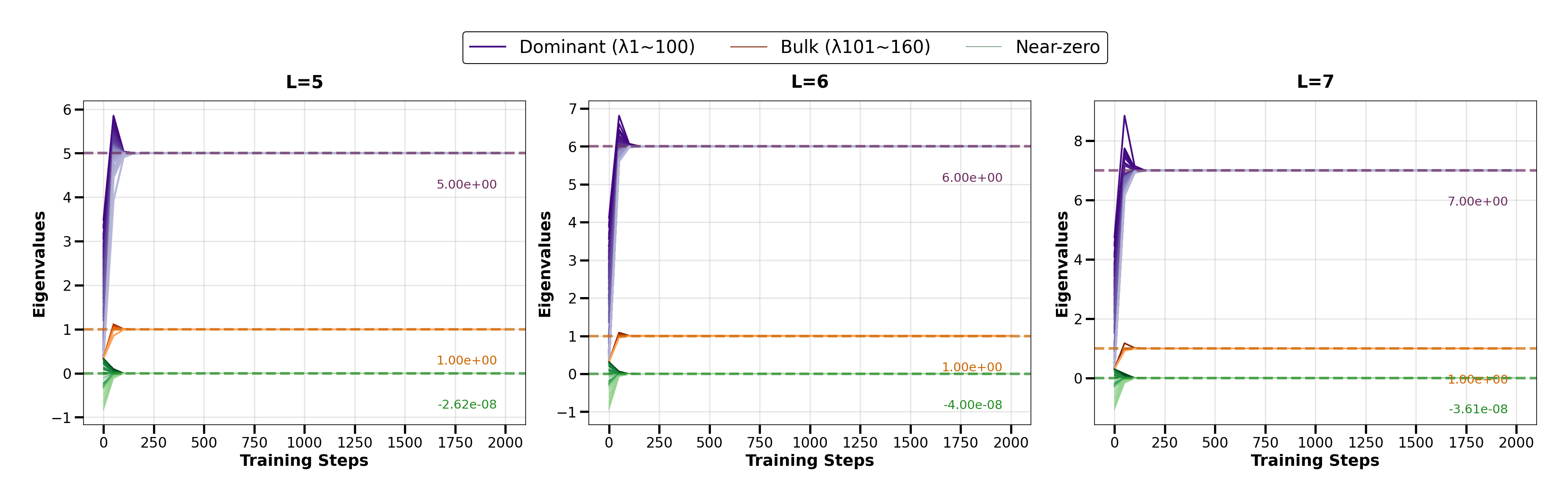}
    \includegraphics[width=\linewidth]{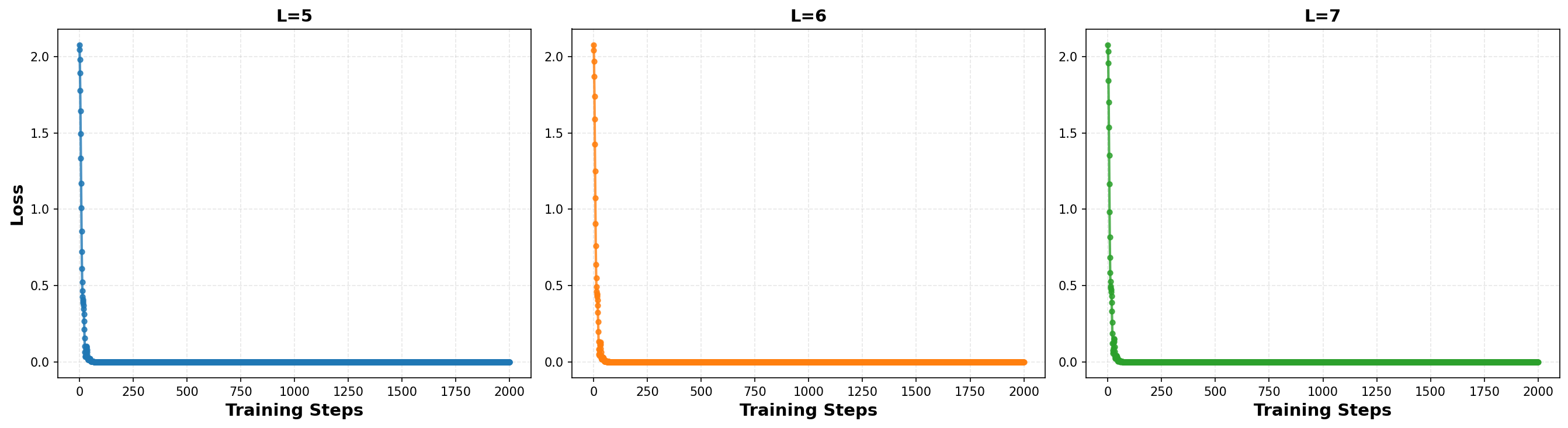}
    \caption{\textbf{Eigenvalue evolution.} The curves are color-coded by subspace: purple for the dominant space, orange for the bulk space, and green for the near-zero space. The dominant space has a dimension of $\text{rank}^2$, while the combined dimension of the dominant and bulk spaces equals the product of the input and output dimensions. The final eigenvalues of the dominant space converge to $L$ times those of the bulk space. The panels correspond to $L=3$ (left), $L=4$ (middle), and $L=5$ (right).}
    \label{fig:eigenvalue_evolution_loss_L567_r10}
\end{figure}

\end{document}